%% file: main.tex
\documentclass[sigplan,screen]{acmart}

\setcopyright{none}
\renewcommand\footnotetextcopyrightpermission[1]{}

\usepackage[noend]{algpseudocode}
\usepackage{algorithm}
\usepackage{xspace}
\usepackage{tcolorbox}
\usepackage{subcaption}
\usepackage{stmaryrd} 
\usepackage{tabularx}
\usepackage{listings}
\usepackage{enumitem}
\usepackage{cleveref}
\AddToHook{cmd/appendix/before}{\crefalias{section}{appendix}}
\usepackage{makecell}
\usepackage{todonotes}
\usepackage{balance}

\input{macros}

\begin{document}

\title{Vectorizer: Vectorizing NumPy Programs with Shape-Guided Rewrite}

\author{Jingqian Liu}
\affiliation{%
  \institution{Simon Fraser University}
  \city{Burnaby}
  \state{British Columbia}
  \country{Canada}}
\email{jingqian\_liu@sfu.ca}

\author{Xiaoyu Liu}
\affiliation{%
  \institution{Simon Fraser University}
  \city{Burnaby}
  \state{British Columbia}
  \country{Canada}}
\email{xla411@sfu.ca}

\author{Yuepeng Wang}
\affiliation{%
  \institution{Simon Fraser University}
  \city{Burnaby}
  \state{British Columbia}
  \country{Canada}}
\email{yuepeng@sfu.ca}

\input{sections/abstract}

\maketitle

% Hide acmart's placeholder conference header while preserving review line numbers.
\makeatletter
\fancyhead[LE]{\ACM@linecountL}
\fancyhead[RO]{\ACM@linecountR}
\makeatother

\input{sections/intro}
\input{sections/overview}
\input{sections/prelim}
\input{sections/corelang}
\input{sections/rewrite}
\input{sections/implementation}
\input{sections/eval}
\input{sections/relatedwork}
\input{sections/conclusion}
\balance
\bibliographystyle{ACM-Reference-Format}
\bibliography{main}
\clearpage
\appendix
\input{sections/semantics}

\input{sections/type}

\input{sections/type-proof}
\input{sections/rewrite-appendix}
\input{sections/rewrite-proof}
\end{document}

%% file: macros.tex
\newcommand{\oplit}{op}
\newcommand{\flit}{f}
\newcommand{\glit}{g}
\newcommand{\iflit}{if}
\newcommand{\thenlit}{then}
\newcommand{\elselit}{else}
\newcommand{\forlit}{for}
\newcommand{\inlit}{in}
\newcommand{\dolit}{do}
\newcommand{\getslit}{\gets}
\newcommand{\deflit}{:=}
\newcommand{\returnlit}{return}

\newcommand{\matmullit}{matmul}
\newcommand{\expandlit}{expand\_dims}
\newcommand{\replicatelit}{replicate}

\newcommand{\oneslit}{ones}
\newcommand{\arangelit}{arange}
\newcommand{\filledlit}{filled}
\newcommand{\getmaskarraylit}{getmaskarray}
\newcommand{\makemaskedarraylit}{make\_masked}
\newcommand{\logicalnotlit}{logical\_not}
\newcommand{\logicalorlit}{logical\_or}
\newcommand{\logicalandlit}{logical\_and}

\definecolor{pink}{RGB}{219, 48, 122}
\definecolor{darkblue}{RGB}{0, 0, 180}
\definecolor{purple}{RGB}{128, 0, 128}
\definecolor{darkgreen}{RGB}{0, 100, 0}
\definecolor{brown}{RGB}{92, 64, 51}

\newcommand{\tool}{\textsc{Vectorizer}\xspace}
\newcommand{\tenspiler}{\textsc{Tenspiler}\xspace}
\newcommand{\metalift}{\textsc{Metalift}\xspace}
\newcommand{\tensorize}{\textsc{Tensorize}\xspace}
\newcommand{\ctotaco}{\textsc{C2TACO}\xspace}
\newcommand{\dexter}{\textsc{Dexter}\xspace}

\newcommand{\numpy}{NumPy\xspace}
\newcommand{\stackoverflow}{Stack Overflow\xspace}
\newcommand{\numba}{Numba\xspace}

\newcommand{\blas}{blas\xspace}
\newcommand{\blend}{blend\xspace}
\newcommand{\darknet}{darknet\xspace}
\newcommand{\dsp}{dsp\xspace}
\newcommand{\dspstone}{dspstone\xspace}
\newcommand{\llama}{llama\xspace}
\newcommand{\makespeare}{makespeare\xspace}
\newcommand{\mathfu}{mathfu\xspace}
\newcommand{\polybench}{polybench\xspace}
\newcommand{\simplarray}{simpl\_array\xspace}
\newcommand{\utdsp}{utdsp\xspace}

\newcommand{\irule}[2]%
   {\mkern-2mu\displaystyle\frac{#1}{\vphantom{,}#2}\mkern-2mu}
\newcommand{\irulelabel}[3]
{
\mkern-2mu
\begin{array}{ll}
\displaystyle\frac{#1}{\vphantom{,}#2} & \!\!\!\! #3
\end{array}
\mkern-2mu
}

\newcommand{\bfpara}[1]{\vspace{5pt} \noindent \textbf{\textit{#1}}}

\newcommand{\cprog}{\mathcal{P}}
\newcommand{\cann}{\mathcal{A}}
\newcommand{\cloop}{\mathcal{L}}

\newcommand{\lp}[0]{\text{\textcolor{pink}{\,(\,}}}
\newcommand{\rp}[0]{\text{\textcolor{pink}{\,)}}}

\newcommand{\mybar}[0]{\textcolor{pink}{\,\vert\,}}

\newcommand{\prog}{P}
\newcommand{\stmt}{S}
\newcommand{\expr}{E}
\newcommand{\name}{x}
\newcommand{\integral}{I}
\newcommand{\update}{U}
\newcommand{\mtarget}{\hat{T}}

\NewDocumentCommand{\codeliteral}{m}
  { \textcolor{darkblue}{\mathtt{#1}} }
\newcommand{\idxbrackl}{[}
\newcommand{\idxbrackr}{]}
\newcommand{\auxicodeliteral}[1]{\textcolor{purple}{\mathtt{#1}}}
\newcommand{\codeparen}[1]{\codeliteral{(}#1\codeliteral{)}}
\newcommand{\codetupleone}[1]{\codeliteral{(}#1\codeliteral{,)}}
\newcommand{\codebrack}[1]{\codeliteral{\idxbrackl}#1\codeliteral{\idxbrackr}}
\newcommand{\denot}[1]{\llbracket #1 \rrbracket}

\ExplSyntaxOn
\NewDocumentCommand{\codetuple}{ m }
 {
  \codeliteral{(}
  \clist_use:nn { #1 } { \codeliteral{,}\; ~ }
  \codeliteral{)}
 }
\NewDocumentCommand{\op}{ m }
 {
  \oplit \codeliteral{(}
  \clist_use:nn { #1 } { \codeliteral{,}\; ~ }
  \codeliteral{)}
 }
\NewDocumentCommand{\f}{ m }
 {
  \flit \codeliteral{(}
  \clist_use:nn { #1 } { \codeliteral{,}\; ~ }
  \codeliteral{)}
 }
 \NewDocumentCommand{\g}{ m }
 {
  \glit \codeliteral{(}
  \clist_use:nn { #1 } { \codeliteral{,}\; ~ }
  \codeliteral{)}
 }
\NewDocumentCommand{\anglebrack}{ m }
 {
  \codeliteral{\langle}
  \clist_use:nn { #1 } { \codeliteral{,}\; ~ }
  \codeliteral{\rangle}
 }
\NewDocumentCommand{\squarebrack}{ m }
 {
  \codeliteral{\idxbrackl}
  \clist_use:nn { #1 } { \codeliteral{,}\; ~ }
  \codeliteral{\idxbrackr}
 }
\NewDocumentCommand{\operators}{m}
  {
    \bool_set_true:N \l_tmpa_bool
    \clist_map_inline:nn { #1 }
      {
        \bool_if:NF \l_tmpa_bool { ,\, } % normal color separator
        \bool_set_false:N \l_tmpa_bool
        \codeliteral{##1}                % blue item
      }
    \ldots
  }
\NewDocumentCommand{\operatorsnotrailingdots}{m}
  {
    \bool_set_true:N \l_tmpa_bool
    \clist_map_inline:nn { #1 }
      {
        \bool_if:NF \l_tmpa_bool { ,\, } % normal color separator
        \bool_set_false:N \l_tmpa_bool
        \codeliteral{##1}                % blue item
      },
  }
\ExplSyntaxOff

\newcommand{\domain}[1]{\textbf{dom}(#1)}
\newcommand{\venv}{\sigma}
\newcommand{\venvs}{\mathsf{\Sigma}}

\newcommand{\integrali}{\mathsf{i}}
\newcommand{\integralj}{\mathsf{j}}
\newcommand{\stmts}{s}
\newcommand{\expre}{e}
\newcommand{\valuev}{v}
\newcommand{\maskmu}{\mu}
\newcommand{\cons}{::}
\newcommand{\peek}[1]{\textsf{Peek(}#1\textsf{)}}
\newcommand{\push}[2]{\textsf{Push(}#1\textsf{, }#2\textsf{)}}
\newcommand{\pop}[1]{\textsf{Pop(}#1\textsf{)}}
 
\newcommand{\shapes}{\varsigma}
\newcommand{\maskm}{m}
\newcommand{\tovenvs}{\searrow}
\newcommand{\toevalresult}{\Downarrow}
\newcommand{\tovalue}{\toevalresult_\valuev}
\newcommand{\toprogramresult}{\toevalresult}
\newcommand{\todynmaskness}{\toevalresult_\maskmu}

\newcommand{\dynarraymasked}{\otimes}
\newcommand{\dynarraynotmasked}{\odot}

\newcommand{\senv}{\Gamma}
\newcommand{\shapetenv}{\senv}
\newcommand{\shapetypeof}{:_\shapes}
\newcommand{\totypeenv}{\hookrightarrow}
\newcommand{\masktenv}{\senv}
\newcommand{\masktypeof}{:_\maskm}
\newcommand{\integraleval}[1]{\denot{#1}_{\shapetenv}}

\newcommand{\masked}{\otimes}
\newcommand{\arraymasked}{\top}
\newcommand{\arraynotmasked}{\bot}

\newcommand{\absfunc}{\alpha}
\newcommand{\shapeabs}[1]{\absfunc_\shapes(#1)}
\newcommand{\maskabs}[1]{\absfunc_\maskm(#1)}
\newcommand{\cmdReturn}[1]{\codeliteral{\returnlit}\; #1}
\newcommand{\cmdProgram}[3]{\codeliteral{\flit}\codetuple{#1}\; #2 \; \cmdReturn{#3}}
\newcommand{\cmdSkip}{\codeliteral{skip}}
\newcommand{\cmdAssign}[2]{#1 \codeliteral{\;\deflit\;} #2}
\newcommand{\cmdITE}[3]{\codeliteral{\iflit}~#1~\codeliteral{\thenlit}~#2~\codeliteral{\elselit}~#3}
\newcommand{\cmdFor}[3]{\codeliteral{\forlit}~#1~\codeliteral{\inlit}~#2~\codeliteral{\dolit}~#3}
\newcommand{\codeget}{\codeliteral{\getslit}}
\newcommand{\codeupdate}[2]{#1 \; \codeget \; #2}
\newcommand{\codeshapeaccess}[2]{\codeliteral{S(}#1 \codeliteral{)}\codebrack{#2}}
\newcommand{\arrindex}[2]{#1 \squarebrack{#2}}

\newcommand{\broadcasttoname}{\textsc{BroadcastTo}}
\newcommand{\shape}[1]{\textsc{Shape}(#1)}
\newcommand{\auxiupdate}[3]{\textsc{Update}(#1, #2, #3)}
\newcommand{\broadcastto}[2]{\broadcasttoname(#1, #2)}
\newcommand{\auxiindex}[1]{\auxicodeliteral{\idxbrackl} #1 \auxicodeliteral{\idxbrackr}}

\newcommand{\matmulname}{\codeliteral{\matmullit}}
\newcommand{\matmul}[2]{\matmulname\codeliteral{(}#1 \codeliteral{,\;} #2\codeliteral{)}}
\newcommand{\expandname}{\codeliteral{\expandlit}}
\newcommand{\expand}[2]{\expandname\codeliteral{(} #1 \codeliteral{,\;} #2 \codeliteral{)}}
\newcommand{\replicatename}{\codeliteral{\replicatelit}}
\newcommand{\replicate}[3]{\replicatename\codeliteral{(} #1 \codeliteral{,\;} #2 \codeliteral{,\;} #3 \codeliteral{)}}
\newcommand{\onesname}{\codeliteral{\oneslit}}
\newcommand{\ones}[1]{\onesname\codeliteral{(}#1\codeliteral{)}}
\newcommand{\arangename}{\codeliteral{\arangelit}}
\newcommand{\arange}[1]{\arangename\codeliteral{(}#1\codeliteral{)}}
\newcommand{\filledname}{\codeliteral{\filledlit}}
\newcommand{\filled}[2]{\filledname\codeliteral{(} #1 \codeliteral{,\;} #2 \codeliteral{)}}
\newcommand{\getmaskarrayname}{\codeliteral{\getmaskarraylit}}
\newcommand{\getmaskarray}[1]{\getmaskarrayname\codeliteral{(}#1\codeliteral{)}}
\newcommand{\codesemicolon}{\codeliteral{;\;}}

\newcommand{\makemaskedarrayname}{\codeliteral{\makemaskedarraylit}}
\newcommand{\makemaskedarray}[2]{\codeliteral{\makemaskedarrayname(}#1 \codeliteral{,\;} #2 \codeliteral{)}}

\newcommand{\logicalnot}[1]{\codeliteral{\logicalnotlit(}#1\codeliteral{)}}

\newcommand{\logicalor}[2]{\codeliteral{\logicalorlit(} #1 \codeliteral{,\;} #2 \codeliteral{)}}

\newcommand{\logicaland}[2]{\codeliteral{\logicalandlit(} #1 \codeliteral{,\;} #2 \codeliteral{)}}

\newcommand{\lhsmakemaskedarray}[2]{\makemaskedarray{#1}{#2}}
\newcommand{\reducelit}{reduce}

\newcommand{\maximumlit}{maximum}
\newcommand{\maximum}{\codeliteral{\maximumlit}}
\newcommand{\minimumlit}{minimum}
\newcommand{\minimum}{\codeliteral{\minimumlit}}
\newcommand{\sumlit}{sum}
\newcommand{\codesum}{\codeliteral{\sumlit}}
\newcommand{\prodlit}{prod}
\newcommand{\codeproduct}{\codeliteral{\prodlit}}
\newcommand{\maxlit}{max}
\newcommand{\codemax}{\codeliteral{\maxlit}}
\newcommand{\minlit}{min}
\newcommand{\codemin}{\codeliteral{\minlit}}
\newcommand{\reduceops}{\reducelit}
\newcommand{\opops}{\oplit}
\newcommand{\fops}{\flit}
\newcommand{\gops}{\glit}

\newcommand{\broadcast}{\textsc{Broadcast}}

\newcommand{\curloopvar}{\textsf{LoopVar}}
\newcommand{\curloopvarreplacer}{\textsf{LoopVarSub}}
\newcommand{\homeloopvar}[1]{\textsf{DefiningLoopVar}(#1)}

\newcommand{\rmmaskedindexer}{\textsc{UnmaskIndexer}}
\newcommand{\isrewritablerdce}{\textsf{IsRewritableRdce}}
\newcommand{\definingscope}{\textsf{DefiningBranch}}
\newcommand{\curscope}[1]{\textsf{CurBranch}(#1)}

\newcommand{\auximax}{\textsf{max}}
\newcommand{\dependsOn}{\textsc{DependsOn}}
\newcommand{\ite}[3]{\textsf{ITE}(#1, #2, #3)}
\newcommand{\onlhs}[1]{\textsf{OnLHS}(#1)}
\newcommand{\notonlhs}[1]{\lnot\onlhs{#1}}
\newcommand{\rewriteto}{\leadsto}
\newcommand{\looprewriteto}{\; \rewriteto \;}
\newcommand{\stmtrewriteto}{\; \curvearrowright \;}

\newcommand{\senvb}{\Gamma_b}
\newcommand{\senva}{\Gamma_a}
\newcommand{\loopvarrep}{\Delta}
\newcommand{\senvba}{\senvb, \senva}

\newcommand{\maskedindexer}{\arraymasked}
\newcommand{\maskedindexers}{(e_n^\maskedindexer, \ldots, e_1^\maskedindexer)}
\newlength{\exampleparindent}
\tcbuselibrary{breakable,skins}
\newtcolorbox[
  auto counter,
  number within=subsection
]{myexample}[1][]{%
  enhanced,
  breakable,
  parbox=false,
  colback=gray!5,
  colframe=black,
  boxrule=0.4pt,
  before upper={%
    \setlength{\parindent}{\exampleparindent}%
    \noindent\textbf{Example~\thetcbcounter.}\ %
  },
  #1
}

\lstdefinelanguage{dsl}{
  morekeywords={for, in, do, if, then, else},
  morekeywords=[2]{arange, replicate},
  sensitive=true,
  morecomment=[l]{\#}
}
\lstdefinestyle{dsl}
{
  language=dsl,
  basicstyle=\ttfamily\footnotesize,
  rulecolor=\color{darkblue},
  breaklines=true,
  frame=single,
  numbers=left,
  numberstyle=\tiny,
  stepnumber=1,
  showstringspaces=false,
  keywordstyle=\bfseries,
  keywordstyle=[2]\color{brown},
}

\newcommand{\parvenvs}{\venvs^*}

%% file: sections/abstract.tex
\begin{abstract}
\numpy is a widely used Python library for numerical scientific computing, known for its declarative APIs and its optimized implementations. 
However, writing efficient \numpy programs, which often entails using vectorized array operations instead of explicit Python loops, may not be straightforward. 
This can be difficult for programmers who are accustomed to imperative array traversal, especially when vectorized API invocations require careful reasoning about shapes, broadcasting, and advanced indexing.
This paper presents a rewrite-based approach for vectorizing \numpy programs with explicit loops over array data. 
Our approach vectorizes loops from the inside out, using array shapes and dataflow analysis to guide a source-to-source transformation that replaces loop bodies with vectorized statements. 
Following a set of rewrite rules that are correct by construction, our approach is consistently fast.
We have implemented the approach as a tool called \tool and evaluated it on 150 benchmarks collected from prior work and \stackoverflow. 
The evaluation shows that \tool vectorizes 142 of the 150 benchmarks directly and 2 more after minor changes to the original benchmarks, with only 0.53 seconds on average to rewrite each one. The resulting programs are, on average, 74.83$\times$ faster than the original loop-based implementations.
\end{abstract}

%% file: sections/intro.tex
\section{Introduction} \label{sec:intro}

\numpy~\cite{harris2020array} offers high-level APIs for efficient computation with multidimensional arrays while hiding optimizations and memory management in its low-level implementation. 
Such APIs are designed around concepts like axes and shapes. 
Built upon such concepts, techniques such as broadcasting enable concise, declarative array programming.
However, correctly aligning axes for broadcasting and advanced indexing can be confusing, even for experienced programmers.
As a result, programmers often write nested loops to traverse arrays, which may incur substantial runtime inefficiency.

Vectorizing loop-based \numpy code is challenging.
First, Python has a rich syntax. 
\numpy exposes many operators with flexible interfaces, which makes it difficult to lift arbitrary \numpy code.
Second, an effective lifter should incorporate techniques like broadcasting and advanced indexing to improve memory and runtime efficiency.
These techniques require properly aligning axes, which can be tricky for code with nested loops and many variables.
Third, practical \numpy programs often contain branches, reductions, and other structures whose vectorized forms do not have a straightforward correspondence to the original code.

Existing work attempts to address similar problems.
For example, \tenspiler~\cite{qiu2024tenspiler} lifts C++ programs to high-level array DSLs by summarizing the input as logical formulas and searching for an equivalent target program. 
However, it requires program-specific configurations, and extending it with new operators requires additional axioms. 
Both tasks demand non-trivial user effort.
\tensorize~\cite{brauckmann2025tensorize} summarizes programs via symbolic execution and searches for an equivalent program with pruning based on symbolic algebra solvers. 
It expects input in the Affine dialect of MLIR~\cite{mlir}, which imposes exacting requirements on the source program.

Upon reviewing \numpy questions on \stackoverflow and \numpy code on GitHub, we observe that many loops can be replaced by vectorized statements through deductive program rewrites.
The key insight is that vectorizing the innermost loop expands its loop variable to an array with an additional axis.
Variables defined in outer scopes can be treated as free variables with fixed shapes.
By carefully rewriting expressions in the loop based on shapes, we can propagate the newly added dimension to vectorize the entire loop.

For example, suppose we want to vectorize the code below.
\begin{lstlisting}[language=Python]
for i in range(A.shape[0]):
  for j in range(A.shape[1]):
    A[i, j] = i + j
\end{lstlisting}
We can first vectorize the innermost loop, treating \lstinline|i| as a free variable.
Replacing \lstinline|j| with \lstinline|np.arange(A.shape[1])|, the 1-D array of all values taken by \lstinline|j|, yields the following code.
\begin{lstlisting}[language=Python]
for i in range(A.shape[0]):
  js_ = np.arange(A.shape[1])
  A[i, js_] = i + js_
\end{lstlisting}
With broadcasting, we expand both sides of Line 3 to 1-D arrays, updating all the elements of each row of \lstinline|A| at once.

In this paper, we present a novel method based on this insight for rewriting Python loops over array data into vectorized \numpy code.
The high-level workflow of our method is shown in Figure~\ref{fig:block-diagram-no-code}.
Rather than attempting to reason about the full \numpy, we introduce a domain-specific language (DSL) that captures the common operations needed for practical loop vectorization while keeping the analysis tractable.
Within this DSL, we design a set of rewrite rules for vectorizing code with explicit loops over data. The rules are correct by construction, which allows the transformation procedure to avoid enumerative search or a separate equivalence-checking phase.
The rewrite process is guided by dataflow analysis and inferred type information about array shape and maskedness.
Our rewrite procedure works inside out, repeatedly vectorizing the innermost loop until there is no more loop in the program.
After rewriting, we apply postprocessing optimizations, such as indexing simplifications, sum-after-multiplication-to-tensordot replacements, common subexpression elimination and unused variable elimination, to produce the final runnable \numpy program.

\begin{figure}[!t]
    \centering
    \includegraphics[width=1\linewidth]{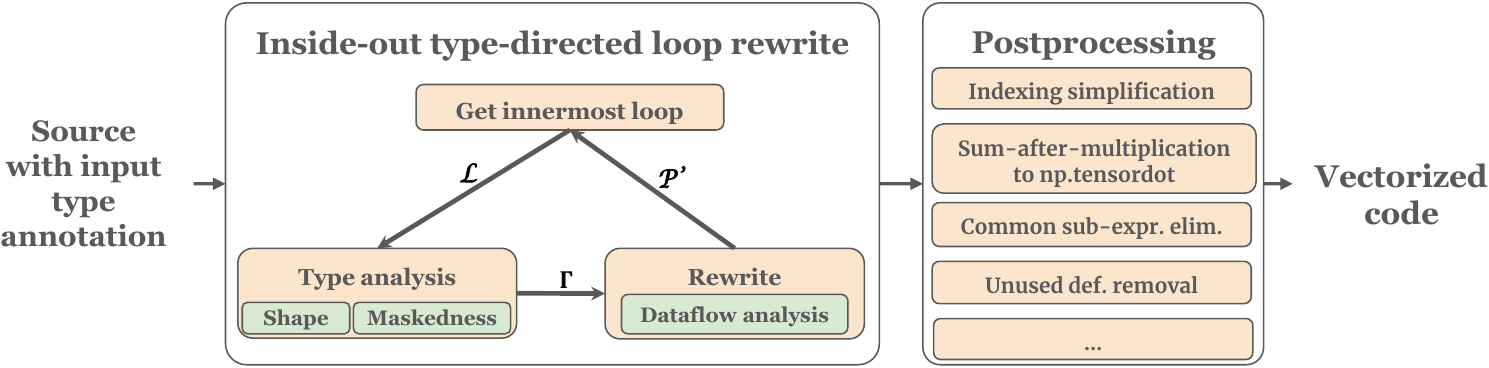}
    \caption{An overview of the workflow.}
    \label{fig:block-diagram-no-code}
    \vspace{-10pt}
\end{figure}

We have implemented our method in a tool called \tool and evaluated it on 150 benchmarks.
The results show that our technique offers significantly better applicability compared to tools from previous work.
Specifically, \tool can successfully vectorize 142 of the 150 benchmarks directly, and 2 more benchmarks after minor adaptations. On average, \tool can rewrite a \numpy function in only 0.53 seconds, and the resulting programs are, on average, 74.83$\times$ faster than the original implementations.

\bfpara{Contributions.}
We make the following main contributions:
\begin{itemize}[leftmargin=*]
\item We designed a DSL capturing the core structures and APIs of real-world \numpy programs. We formalize its semantics and type inference rules for shapes and maskedness.
\item We propose an inside-out loop vectorization algorithm guided by shapes, maskedness, and dataflow analysis.
\item We formulate the core vectorization procedure as a set of correct-by-construction rewrite rules over the DSL.
\item We implement the approach as a tool called \tool and evaluate it on 150 benchmarks. The results show that it is effective and efficient, and the obtained programs are on average $74.83\times$ faster than original implementations.
\end{itemize}

%% file: sections/overview.tex
\section{Overview} \label{sec:overview}

Let us consider the task to compute the increasing powers of each element in a given 1-D array. 
Specifically, the task requires implementing a function that takes an input vector $x$ and returns a 2-D array $X$ such that each entry $X_{i,j} = x_i^j$.
A simple loop-based implementation is shown below.
\begin{lstlisting}[language=Python]
import numpy as np
def looped_power(x):
  # x: shape (n,).
  X = np.zeros((x.shape[0], x.shape[0]))
  for i in range(x.shape[0]):
    for j in range(x.shape[0]):
      X[i, j] = np.pow(x[i], j)
  return X
\end{lstlisting}

Given that \lstinline|np.pow| supports element-wise power on arrays, to vectorize the loops, we can replicate \lstinline|x| as columns and repeat numbers from 0 to $n - 1$ as rows. 
Then we only need to call \lstinline|np.pow| once on the two arrays to get the result.
The explicit loops are delegated to the optimized \numpy implementation.
This idea is demonstrated graphically in Figure~\ref{fig:ele-pow1}.

\begin{figure}[!t]
    \begin{subfigure}{0.45\linewidth}
        \includegraphics[width=\linewidth]{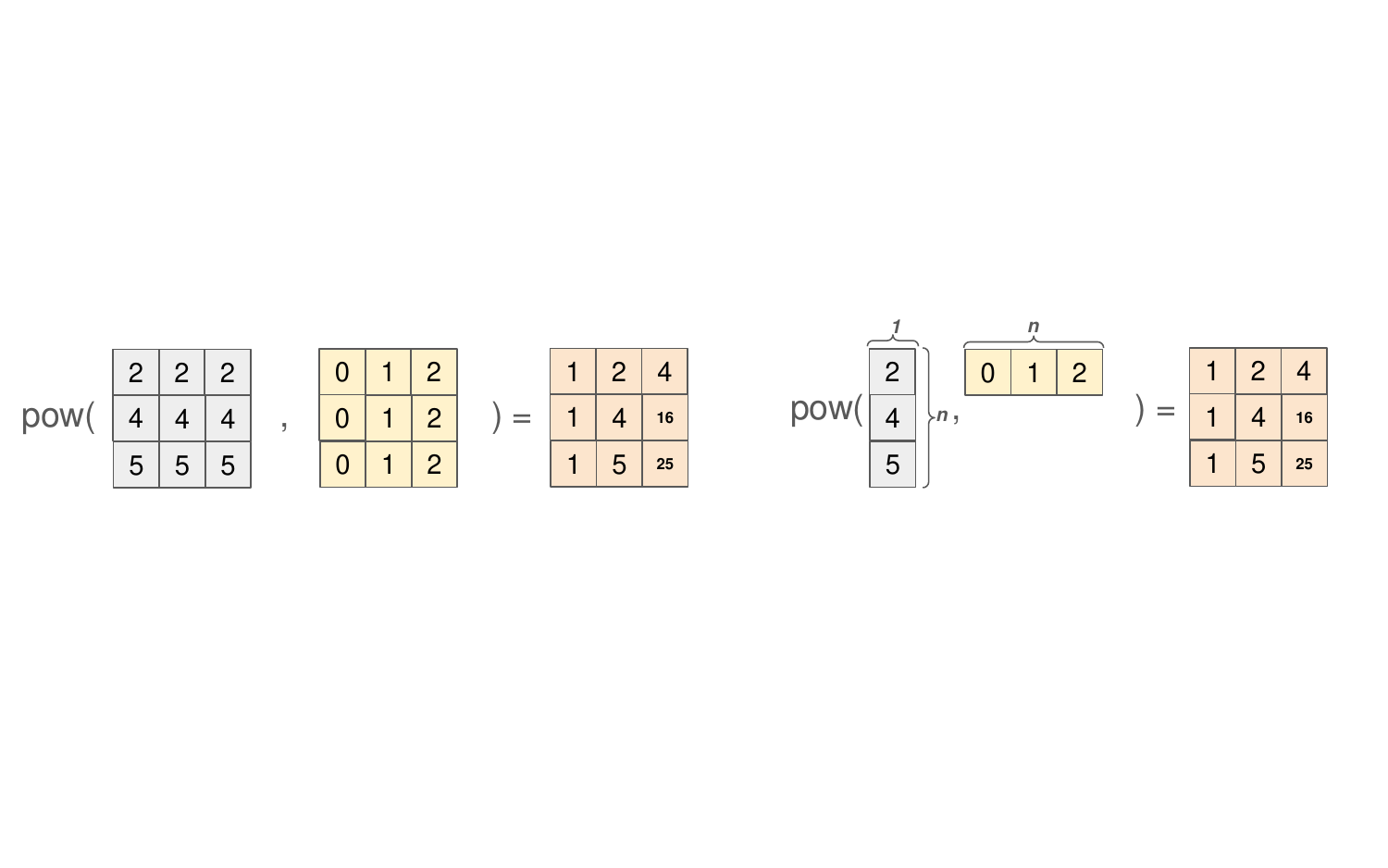}
        \caption{Element-wise power.}
        \label{fig:ele-pow1}
    \end{subfigure}
    \quad
    \begin{subfigure}{0.45\linewidth}
        \includegraphics[width=\linewidth]{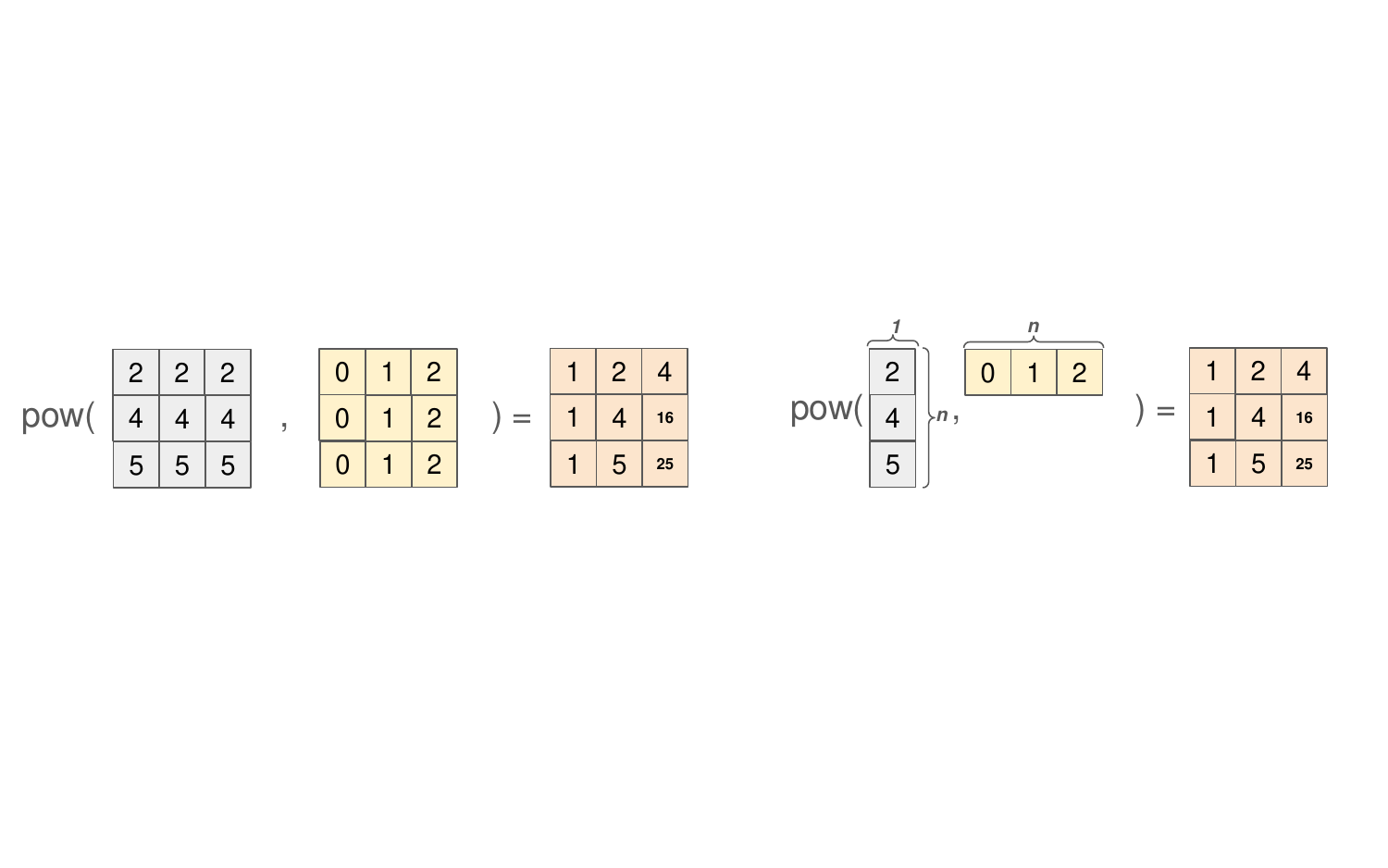}
        \caption{Element-wise power with broadcasting.}
        \label{fig:ele-pow2}
    \end{subfigure}
    \vspace{-5pt}
    \caption{Graphical illustration of the power computations.}
    \vspace{-10pt}
\end{figure}

In fact, we can further simplify by taking advantage of broadcasting, which implicitly replicates data on axes with dimensionalities of 1 to avoid boilerplate code and unnecessary memory usage.
The function \lstinline|looped_power_vectorized| follows this idea, which is shown below.
\begin{lstlisting}[language=Python]
def looped_power_vectorized(x):
  exponents = np.arange(x.shape[0])
  return np.pow(np.expand_dims(x,1), exponents)
\end{lstlisting}

Instead of replicating arrays explicitly, this function expands the \lstinline|x| array with one more axis, changing its shape from $(n,)$ to $(n, 1)$.
Here, the shape of an array is a tuple of natural numbers specifying the size of each dimension of the array.
In Line 2, \lstinline|np.arange| is called to get all the exponents in an array of shape $(n,)$.
Because of broadcasting, we can get the same result.
The idea is illustrated in Figure~\ref{fig:ele-pow2}.

From this example, we see that one of the challenges in vectorizing such functions lies in correctly aligning computations on target axes.
To automate such transformations, we can first obtain a partially vectorized version of the function.
\begin{lstlisting}[language=Python]
def looped_power_one_loop(x):
  X = np.zeros((x.shape[0], x.shape[0]))
  for i in range(x.shape[0]):
    js_ = np.arange(x.shape[0])
    X[i, js_] = np.pow(x[i], js_)
  return X
\end{lstlisting}

Here, the innermost loop is vectorized by replacing \lstinline|j| with \lstinline|js_|, the array of values it ranges over.
All other variables keep their original shapes, and broadcasting performs row-wise computations and updates.
Note that \lstinline|X[i, js_]| broadcasts \lstinline|i| to get all the indices of row \lstinline|i|, equivalent to \lstinline|X[i, 0:x.shape[0]]|. 
Next, we can eliminate the remaining loop and get the following code.
\begin{lstlisting}[language=Python]
def looped_power_vectorized_v2(x):
  X = np.zeros((x.shape[0], x.shape[0]))
  is_ = np.arange(x.shape[0])
  is_e= np.expand_dims(is_, 1)
  js_ = np.arange(x.shape[0])
  x_e = np.expand_dims(x[is_], 1)
  X[is_e, js_] = np.pow(x_e, js_)
  return X
\end{lstlisting}

As before, we want to replace all occurrences of \lstinline|i| with an array, which adds a dimension to expressions that use \lstinline|i|. 
Then \lstinline|x[i]| becomes \lstinline|x[is_]|, changing its shape from $()$ to $(n,)$.
However, a simple replacement will put the newly expanded dimension on the same axis as the one created when we vectorize the previous loop. 
To properly broadcast the operands, we shift the newly created dimension to the left by expanding a new dimension on the right-most axis, changing its shape from $(n,)$ to $(n, 1)$. 
Now, broadcasting can perform element-wise computation for operands with shapes $(n, 1)$ and $(n,)$, and the left-hand side is handled similarly for selections from \lstinline|X|.
The resulting program is fully vectorized and can be simplified to \lstinline|looped_power_vectorized| by postprocessing.

This example suggests a general insight: when vectorizing the innermost loop, variables from the outer scope can be treated as free variables with fixed shapes.
Replacing the loop variable with an array adds an extra axis to the expressions that originally contained it.
By properly arranging array axes, this new axis can propagate through the program and vectorize computations previously performed by loops.

\bfpara{Branches.} 
Consider this function having a branch, which makes the technique discussed so far not directly applicable.
\begin{lstlisting}[language=Python]
def half_burn(base):
  # base: shape (M,).
  out = np.zeros((base.shape[0],))
  for i in range(base.shape[0]):
    if i < 3:
      out[i] = base[i] / 2
  return out
\end{lstlisting}

Specifically, we cannot directly replace the variable \lstinline|i| in the branch with \lstinline|np.arange|, as the replacement would include values that are not taken in some iterations.
We must exclude such values while maintaining the dimensional structure, so the technique discussed above can still be applicable.

Towards this goal, we leverage the \emph{masked array} in \numpy.
Each element in a masked array has an associated boolean mask indicating its validity, and invalid ones are omitted from computations.
However, masked arrays have limited out-of-the-box usability and some of their behaviors are not clearly specified or documented.
To address these issues, our DSL has more specific semantics for masked arrays and extends \numpy with masked update statements.
Unlike normal updates with advanced indexing, whose left-hand side is a variable indexed by arrays, masked updates also accept boolean mask indices.
An element is updated only when it is unmasked, the corresponding mask index is true, and the corresponding right-hand-side value is unmasked.

When rewriting code in a branch, we replace variables restricted by the branch with masked arrays.
Such variables are identified by a dataflow analysis.
As \numpy's documentation does not explicitly specify the behavior of using masked arrays as indices, our DSL also disallows it.
During the rewrite, if an indexing array becomes masked, we fill it with \lstinline|0| and apply the mask of the indexing array to the indexing result.

\begin{figure}[!t]
    \begin{center}
        \includegraphics[width=0.78\linewidth]{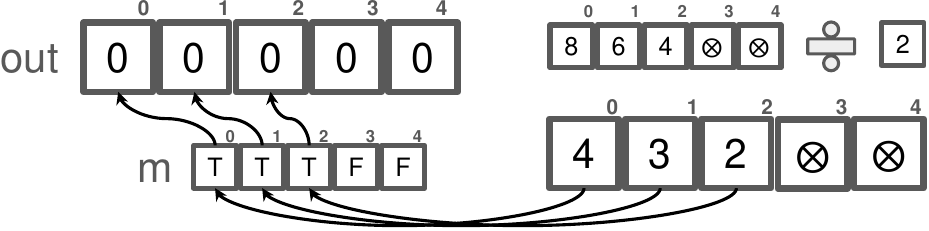}
        \caption{Graphical illustration of masked updates. $\masked$ represents masked entries.}
    \label{fig:masked-update}
    \end{center}
    \vspace{-10pt}
\end{figure}

This idea is explained graphically in Figure~\ref{fig:masked-update}, and the corresponding code is shown in the \lstinline|half_burn_vectorized| function.
Here, \lstinline|make_masked| is an operator in the DSL that takes a data array and a mask array to build a masked array.
When used on the left-hand side, it turns a statement into a masked update, with its second argument being a mask index.
In the original branch, \lstinline|i| is restricted by the branch condition, so the vectorized index \lstinline|is_| must be masked by \lstinline|m|.
Because our DSL does not allow masked index, we fill it with \lstinline|0| to unmask it and mask the indexing result instead.

\begin{lstlisting}[language=Python]
# Proof of concept; not runnable Python code.
def half_burn_vectorized(base):
  out = np.zeros((base.shape[0],))
  is_ = np.arange(base.shape[0])
  m = is_ < 3
  idx = np.ma.filled(make_masked(is_, m), 0)
  half = make_masked(base[idx], m) / 2
  make_masked(out[idx], m) = half
  return out
\end{lstlisting}

To summarize, we formalize our approach as a set of rewrite rules conditioned on the shape and maskedness of the targets.
The rules operate on our DSL, which captures many structures found in practical \numpy programs.
After rewriting a program, we apply postprocessing to simplify the resulting code and emit a runnable Python program.

%% file: sections/prelim.tex
\section{Preliminaries}

We first introduce some concepts used in the paper.
\numpy organizes data in multidimensional arrays. 
We consider two aspects from which arrays are typed: shape and maskedness. 

\subsection{Shapes and Broadcasting}

The shape of an array is a tuple of numbers recording the size of the array along each dimension.
For example, the shape of \lstinline|np.array([[1,2,3],[4,5,6]])| is $(2, 3)$.
We refer to positions in a shape as \emph{axes} and to the corresponding sizes as \emph{dimensionalities}.
Scalar values have the shape $()$.

Some \numpy operators accept operands whose shapes are broadcasting-compatible. 
% The rule for broadcasting two shapes is given in Figure~\ref{fig:broadcast}. 
Broadcasting an array to a target shape can be understood as duplicating the array along axes of size 1 and any missing leading axes. 
Thus, it allows us to reuse elements in an array to match the required shape in computations.
As shown below, binary operators implicitly broadcast operands to the same shape for computation.
\begin{lstlisting}[language=Python]
t = np.array([[1], [0]]) # shape (2, 1)
s = np.array([2, 0])     # shape    (2,)
r = s + t    # [[3, 1], [2, 0]], shape (2, 2)
\end{lstlisting}
Here, \lstinline|t| and \lstinline|s| are broadcast to \lstinline|[[1, 1], [0, 0]]| and \lstinline|[[2, 0], [2, 0]]|. The addition is element-wise on the two arrays.

\subsection{Advanced Indexing}

\numpy allows arrays to be indexed by integer arrays, which is a feature known as \emph{advanced indexing}.
We refer to the indexing arrays as \emph{indexers}, and to the indexed array as the \emph{base}, or \emph{indexee}.
When several arrays are used together as indexers, their shapes must be broadcasting-compatible.
\numpy first broadcasts these indexers to a common shape and then uses them to make element-wise selections from the indexee.
Therefore, the result has the common broadcast shape of the indexers.
In the example below, \lstinline|i2| has shape $(2,)$ and when used as an indexer, it is broadcast to the shape of \lstinline|i1|, which is $(2, 2)$.
Replicated on the implicitly added left-most axis, the effective indexing array is \lstinline|[[0, 2], [0, 2]]|.
\begin{lstlisting}[language=Python]
i1 = np.array([[1, 1], [1, 0]]) # shape: (2, 2)
i2 = np.array([0, 2]) # shape: (2,)
w = z[i1, i2] #[[z[1,0],z[1,2]],[z[1,0],z[0,2]]]
\end{lstlisting}

\subsection{Masked Arrays}

The \lstinline|ma| submodule of \numpy provides the \lstinline|MaskedArray| class.
In addition to the array data, \lstinline|MaskedArray| also has an associated boolean array of the same shape called \emph{mask}.
An element whose corresponding mask entry is true is considered \emph{masked} and is omitted from computations.
The reduce operators also skip masked elements. If every element being reduced is masked, the reduction result is masked as well.
\lstinline|np.ma.filled| can convert a masked array to a normal array by replacing masked entries with a specified value.
\begin{lstlisting}[language=Python,escapechar=\%]
o = np.ma.MaskedArray([1,2],[False,True]) #[1,%$\masked$%]
n = o + np.array([100, 101])         # [101, %$\masked$%]
m = np.ma.filled(n, 99)              # [101, 99]
l = np.mean(n, axis=0)               # 101
k = np.mean(np.ma.MaskedArray(n,[True,False]))#%$\masked$%
\end{lstlisting}

%% file: sections/corelang.tex
\section{Core Language} \label{sec:corelang}

This section presents the DSL used by our vectorization procedure, which captures core features of \numpy.

\subsection{Syntax and Semantics}

Figure~\ref{fig:dsl} shows the syntax of our DSL, which is high-level and imperative.
Its formal semantics is shown in Appendix~\ref{sec:sem}.

Here, a program is a function consisting of statements, which are built from expressions.
The variable binding statement $\cmdAssign{\name}{\expre}$ binds the expression $\expre$ to the name $\name$. 
The normal update statement $\codeupdate{\arrindex{\name}{\overline{\expre_1}}}{\expre_2}$ modifies elements of $\name$ selected by the array indices $\overline{\expre_1}$ to the value of $\expre_2$, following \numpy's semantics of updates with advanced indexing.
We require the right-hand side of such updates to be unmasked.
The masked update statement $\codeupdate{\makemaskedarray{\arrindex{\name}{\overline{\expre_1}}}{\expre_2}}{\expre_3}$ uses the mask index $\expre_2$.
The left-hand side may use nested calls to $\makemaskedarrayname$ to incorporate multiple mask indices.
As discussed in Section~\ref{sec:overview}, a selected element is updated only when it is unmasked, the corresponding element in the conjunction of the mask indices is true, and the corresponding element on the right-hand side is unmasked.

Like many imperative languages, our DSL also has the standard $\cmdSkip$ statement and sequential composition $\stmts_1 \codesemicolon \stmts_2$.
Branches like $\cmdITE{\expre}{\stmts_1}{\stmts_2}$ expect an unmasked scalar as the branch condition $\expre$.
Since we do not consider specific data types in this DSL, zeros are considered false and non-zero values are considered true.
Loops in the form of $\cmdFor{\name}{\codeliteral{0 \ldots} \integrali}{\stmts}$ must have an integer literal or a shape access expression as their bound $\integrali$, and the loop variable $\name$ ranges from zero to the loop bound minus one.
To make control flows easy to analyze, each variable name can be bound by at most one statement, and names bound inside a branch or loop may not be used outside that branch or loop.

The semantics of most expressions have a straightforward correspondence to their counterparts in \numpy.
The shape access expression $\codeshapeaccess{\expre}{c}$ returns the size of axis $c$ of $\expre$.
The array indexing operator, $\arrindex{\expre}{\expre_1, \ldots, \expre_n}$, is equivalent to integer array indexing in \numpy, but $\expre_1, \ldots, \expre_n$ cannot be masked arrays.
We also require $n$ to equal the number of axes of $\expre$.
The expression $\makemaskedarray{\expre_1}{\expre_2}$ constructs a masked array from the data array $\expre_1$ and the mask array $\expre_2$.
Unlike the constructor of \lstinline|MaskedArray| in \numpy, $\makemaskedarrayname$ invalidates entries whose masks are falsy.
When used on the right-hand side, it broadcasts operands as any other binary operator does.
When used on the left-hand side of a masked update statement, only the mask array may be broadcast to the shape of the data array.
The $\replicatename$ operator extends $\expandname$ by additionally accepting a sequence of integral expressions, which specifies the sizes of the inserted axes.
Operators are assumed to return arrays that do not overlap with their operands in memory.

\input{figures/dsl}
\subsection{Typing Shapes and Maskedness} \label{subsec:types}

\input{figures/types/type-primary}

As shown by the formal semantics in Appendix~\ref{sec:sem}, expressions evaluate to both values and runtime maskedness.
Given type annotations for the input arrays, we can statically infer the type of every expression in a well-typed program.
These annotations specify the maskedness and dimensionalities of each argument, using integers for axes with fixed length and symbolic variables for axes with dynamic length.

Due to space limit, the complete set of type inference rules for expressions is provided in Appendix~\ref{sec:type-complete}.
Here, we present a subset in Figure~\ref{fig:shape-expre-shown}. 
The rules for type analysis for statements are in Figure~\ref{fig:shape-stmt-shown}.
In our formalization, the type environment $\senv$ is a map from variables to pairs of static shapes and maskedness.
Judgments of the forms $\senv \vdash \expre \shapetypeof \shapes$ and $\senv \vdash \expre \masktypeof \maskm$ denote that $\expre$ is inferred to have static shape $\shapes$ and maskedness $\maskm$.
We use $\arraymasked$ to denote masked arrays and use $\arraynotmasked$ to denote unmasked arrays.
For example, the (S-Idx) rule infers the shape of an indexing expression by first inferring the shapes of the indexers and then broadcasting them to a common shape.
The (M-Idx) rule infers the maskedness of an indexing expression to be the maskedness of the indexee, provided that none of the indexers is masked.
In Figure~\ref{fig:shape-stmt-shown}, judgments of the form $\senv \vdash \stmts \totypeenv \senv'$ mean that the statement $\stmts$ is well-typed under the environment $\senv$, and analyzing $\stmts$ updates $\senv$ to $\senv'$.
For example, the (T-Bind) rule states that analyzing a variable binding statement updates $\senv$ to map the variable to the inferred type of the right-hand side, given that the variable has not been bound before.
The following theorem formalizes the soundness property of our type system.
\begin{theorem}\label{thm:type-soundness-shown}
Let $\cprog$ be a function with body $\stmts$, $\cann$ and $\venvs$ be the type annotations and an evaluation environment for the arguments. 
If every variable that can be evaluated under $\venvs$ is well-typed under $\cann$, $\cprog$ can be executed under $\venvs$, and $\cann \vdash \stmts \totypeenv \senv$, then the value returned is well-typed under $\senv$.
\end{theorem}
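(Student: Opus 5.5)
The proof is a standard subject-reduction argument: a soundness invariant relating runtime environments to static type environments is maintained through the execution of $\stmts$, by induction on the big-step evaluation derivation.

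First, make ``well-typed'' precise. A runtime value $v$ with runtime maskedness $\mu$ is well-typed against a static type $(\shapes, \maskm)$ under $\senv$ when two conditions hold. The concrete shape of $v$ must be an instance of $\shapes$, with the symbolic dimension variables instantiated consistently with the concrete sizes of the input arrays in $\venvs$. And $\mu$ must be approximated by $\maskm$, so a runtime-masked value must have $\maskm = \arraymasked$. An evaluation environment $\venv$ is consistent with $\senv$ if every variable in $\domain{\venv}$ is bound in $\senv$ and its value is well-typed against $\senv(\name)$.

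The first lemma is expression soundness. If $\venv$ is consistent with $\senv$, $\senv \vdash \expre \shapetypeof \shapes$, $\senv \vdash \expre \masktypeof \maskm$, and $\expre$ evaluates under $\venv$, then the resulting value and maskedness are well-typed against $(\shapes,\maskm)$. This goes by structural induction on $\expre$, matching each shape rule and mask rule in Appendix~\ref{sec:type-complete} with the corresponding semantic rule in Appendix~\ref{sec:sem}. The substantive cases are the following.
\begin{itemize}
\item Binary operators and $\makemaskedarrayname$: show that the static broadcast function commutes with instantiation, so broadcasting concrete instances gives an instance of the static broadcast.
\item Indexing (S-Idx, M-Idx): the runtime result has the broadcast shape of the indexers, and the indexers are unmasked, so maskedness is inherited from the indexee.
\item Shape access $\codeshapeaccess{\expre}{c}$: its static value is exactly the instantiated dimensionality; this justifies using $\integraleval{\cdot}$ in $\replicatename$ and loop bounds.
\item Reductions: a masked operand may yield a masked scalar, which the rule covers by propagating $\arraymasked$.
\end{itemize}
The hypothesis that $\cprog$ executes under $\venvs$ lets us assume every evaluation succeeds, so stuck or erroneous cases never arise.

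The second lemma is statement preservation. If $\venv$ is consistent with $\senv$, $\senv \vdash \stmts \totypeenv \senv'$, and $\stmts$ executes from $\venv$ to $\venv'$, then $\venv'$ is consistent with $\senv'$. This goes by induction on the execution derivation.
\begin{itemize}
\item (T-Bind) follows from the expression lemma. The single-binding discipline guarantees the new name is fresh, so no earlier entry is invalidated.
\item Normal and masked updates must not change the shape of $\name$. Maskedness can only grow from unmasked to masked when the statement is a masked update, and the typing rule for masked updates is expected to require or assign $\arraymasked$ accordingly. This needs a small auxiliary fact: $\auxiupdate{\cdot}{\cdot}{\cdot}$ preserves the shape of its target.
\item Sequencing chains the induction hypotheses.
\item For branches, names bound inside are not used outside, so the output environment restricted to outer names is consistent with the join or restriction used by the typing rule.
\item Loops are handled by an inner induction on the iteration count. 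The loop variable is a scalar of shape $()$ in each iteration, and the invariant that consistency with the loop's output environment holds after each iteration uses the same scoping discipline. Zero iterations are the trivial base case.
\end{itemize}

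Finally, assemble the theorem. The hypothesis on $\venvs$ and $\cann$ is exactly consistency of the initial environment with $\cann$. Applying the statement lemma gives consistency of the final environment with $\senv$, and the return expression is then well-typed by the expression lemma.

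The main obstacle will be the masked-update and loop cases. The runtime mask of an updated array depends on data, so we must check that the static maskedness is a sound over-approximation after arbitrarily many iterations. We also need the symbolic-dimension instantiation to stay fixed throughout execution, which holds because shapes of existing variables never change. I would first prove the shape-preservation lemma for updates and use it to show the instantiation is invariant. The mask argument then reduces to monotonicity of $\arraynotmasked \sqsubseteq \arraymasked$ under the typing rules.
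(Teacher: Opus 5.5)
Your decomposition matches the paper's proof. The paper has soundness lemmas for the shape and maskedness of expressions, proved by structural induction with each (S-$\cdot$)/(M-$\cdot$) rule matched to its (RTV-$\cdot$)/(RTM-$\cdot$) counterpart. It then proves a preservation lemma for statements and reads off the type of the returned variable from the final environment. The differences are in rigor rather than route. You induct on execution derivations, use an inner induction over loop iterations, and make symbolic dimensions explicit through a consistent instantiation. The paper inducts on statement structure and simply identifies static shapes with $\shape{v}$.

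The step that does not work as you planned it is the update case, which you single out as the main obstacle. The maskedness tracked by the type system is a per-binding flag $\maskmu$, not the set of $\masked$ entries of the array. (SEM-Auxupd1) rebinds $x$ to $(v,\maskmu)$ with the \emph{old} flag. (SEM-Upd) and (SEM-MskUpd) also force the new value to have the old shape. So no update, masked or not, changes either component of a binding, and that is why (T-Upd) can return $\senv$ unchanged; this is the paper's base case (3).

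Your plan assumes masked updates can turn a binding from unmasked to masked, with the typing rule ``expected to require or assign $\arraymasked$.'' That is not what the rules do: (T-Upd) never touches the environment. If the flag could grow, the theorem would simply be false, and appealing to monotonicity of $\arraynotmasked \sqsubseteq \arraymasked$ would not rescue it. Replace that step with the invariance of $(\shape{v}, \maskmu)$ under $\textsc{Update}$. The loop case then needs no argument about masks accumulating over iterations; your scoping argument suffices.

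Relatedly, you do not need the over-approximation ordering at all. Each mask rule mirrors a runtime-mask rule, so your induction yields exact agreement $\maskm = \maskabs{\maskmu}$. That exact agreement is what the paper's formal restatement (Theorem~\ref{thm:type-soundness}) asserts, and your weaker invariant proves less than that.
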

\begin{proof}
The proof is available in \Cref{sec:type-proof}.
\end{proof}

%% file: figures/dsl.tex
\begin{figure}[!t]
\centering
\small
\[
\begin{array}{l c l}
\text{Program } \prog & ::= & \cmdProgram{\overline{x}}{\stmt}{y} \\
\text{Statement }    \stmt & ::= & \cmdSkip \mybar \cmdAssign{\name}{\expr} \mybar \update \mybar \cmdITE{\expr}{\stmt}{\stmt}  \\
    &  & \mybar \cmdFor{\name}{\codeliteral{0 \ldots} \integral}{\stmt}\mybar \stmt \codesemicolon \stmt \\
\text{Integral } \integral & ::= & \codeshapeaccess{\expr}{c} \mybar c \\
\text{Expr }    \expr & ::= & \name \mybar \integral \mybar \arrindex{\expr}{\overline{\expr}} \mybar \f{\overline{\expr}} \mybar \g{\expr, c} \mybar \filled{E}{E} \\
&  & \mybar \ones{\codeparen{\overline{\integral}}} \mybar \arange{\integral} \mybar \matmul{E}{E} \\
&  & \mybar \expand{\expr}{\codeparen{\overline{c}}} \\
&  & \mybar \replicate{\expr}{\codeparen{\overline{c}}}{\codeparen{\overline{\integral}}} \\
\text{Masked LHS } \mtarget & ::= & \lhsmakemaskedarray{ \lp  \arrindex{\name}{\overline{\expr}} \mybar \mtarget \rp }{ \expr} \\
\text{Update } \update & ::= & \codeupdate{\lp \arrindex{\name}{\overline{\expr}} \mybar \mtarget \rp }{ \lp \expr \rp }\\

\end{array}
\]
\[
\name, y \in \textbf{Variables} \quad c \in \textbf{Integer literals} \quad \fops \in \opops \quad \gops \in \reduceops
\]
\[
\begin{array}{l l}
        \text{Unary $\opops$:} & \operators{abs, exp, log, negative,  \logicalnotlit} \\
        \text{Binary $\opops$:} & \operatorsnotrailingdots{+, -, *, /, pow, maximum, minimum, \logicalorlit,} \\
        & \operators{\logicalandlit, ==, <, >, \makemaskedarraylit}\\
        \text{$\reduceops$:} & \operators{\sumlit, \prodlit, \maxlit, \minlit, all, any, mean} \\
\end{array}
\]
\caption{DSL syntax. Meta symbols and code constructs are in \textcolor{pink}{pink} and $\codeliteral{blue}$. $\overline{\text{Bars}}$ denote comma-separated sequences. }
\label{fig:dsl}
\vspace{-10pt}
\end{figure}

%% file: figures/types/type-primary.tex
\begin{figure}[!t]
\[
\scriptsize
\begin{array}{c}
\irulelabel{
    \begin{array}{c}
        \textbf{Var } x \\
    \shapetenv(x) = (\shapes, m)
    \end{array}
}{
    \shapetenv \vdash x \shapetypeof \shapes
}{
    \textrm{(S-Var)}
}

\quad

\irulelabel{
    \begin{array}{c}
        0 \leq i \leq n \quad
        \shapetenv \vdash \expre_i \shapetypeof \shapes_i \\
        \broadcast(\shapes_0, \ldots, \shapes_n) = \shapes
    \end{array}
}{
    \shapetenv \vdash \arrindex{\expre}{\expre_0, \ldots, \expre_n} \shapetypeof \shapes
}{
    \textrm{(S-Idx)}
}

\\ \ \\

\irulelabel{
    \begin{array}{c}
        1 \leq i \leq n \quad \shapetenv \vdash \expre_i \shapetypeof \shapes_i \\
        \broadcast(\shapes_1, \ldots, \shapes_n) = \shapes \\
        \fops \in \text{Unary } \opops \cup \text{Binary } \opops
    \end{array}
}{
    \shapetenv \vdash \f{\expre_1, \ldots, \expre_n} \shapetypeof \shapes
}{
    \textrm{(S-Op)}
}

\irulelabel{
    \begin{array}{c}
        \textbf{Var } x \\ \masktenv(x) = (\shapes, \maskm)
    \end{array}
}{
    \masktenv \vdash x \masktypeof \maskm
}{
    \textrm{(M-Var)}
}

\\ \ \\

\irulelabel{
    \begin{array}{c}
        \fops \in \text{Unary } \opops \cup \text{Binary } \opops \\
        \fops = \makemaskedarrayname \lor (\exists k. 1 \leq k \leq n \land \masktenv \vdash \expre_k \masktypeof \arraymasked)
    \end{array}
}{
    \masktenv \vdash \f{\expre_1, \ldots, \expre_n} \masktypeof \arraymasked
}{
    \textrm{(M-Op1)}
}

\\ \ \\

\irulelabel{
    \begin{array}{c}
        \fops \in \text{Unary } \opops \cup \text{Binary } \opops \\
        \fops \neq \makemaskedarrayname \\
        \masktenv \vdash \expre_k \masktypeof \arraynotmasked \quad
        1 \leq k \leq n
    \end{array}
}{
    \masktenv \vdash \f{\expre_1, \ldots, \expre_n} \masktypeof \arraynotmasked
}{
    \textrm{(M-Op2)}
}

\!

\irulelabel{
    \begin{array}{c}
        \masktenv \vdash \expre \masktypeof \maskm \\ 
        0 \leq k \leq n \\ 
        \masktenv \vdash \expre_k \masktypeof \arraynotmasked
    \end{array}
}{
    \masktenv \vdash \arrindex{\expre}{\expre_0, \ldots, \expre_n} \masktypeof \maskm
}{
    \textrm{(M-Idx)}
}
\end{array}
\]
\vspace{-10pt}
\caption{Sample type inference rules for expressions.}
\label{fig:shape-expre-shown}
\vspace{-5pt}
\end{figure}

\begin{figure}[!t]
\[
\scriptsize
\begin{array}{c}
\irulelabel{
}{
    \shapetenv \vdash \cmdSkip \totypeenv \shapetenv 
}{
    \textrm{(T-Skp)}
}
\quad

\irulelabel{
    \begin{array}{c}
        \shapetenv \vdash \expre \shapetypeof \shapes \\
        \masktenv \vdash \expre \masktypeof m \quad
        x \notin \domain{\shapetenv}
    \end{array}
}{
    \shapetenv \vdash \cmdAssign{x}{\expre} \totypeenv \shapetenv[x \mapsto (\shapes, m)]
}{
    \textrm{(T-Bind)}
}
\\ \ \\

\irulelabel{
    \begin{array}{c}
        \textbf{Update } u
    \end{array}
}{
    \shapetenv \vdash u \totypeenv \shapetenv 
}{
    \textrm{(T-Upd)}
}
\quad

\irulelabel{
    \begin{array}{c}
        \shapetenv \vdash \expre \shapetypeof () \quad
        \masktenv \vdash \expre \masktypeof \arraynotmasked \\
        \shapetenv \vdash \stmts_1 \totypeenv \shapetenv' \quad
        \shapetenv' \vdash \stmts_2 \totypeenv \shapetenv''
    \end{array}
}{
    \shapetenv \vdash \cmdITE{\expre}{\stmts_1}{\stmts_2} \totypeenv \shapetenv''
}{
    \textrm{(T-If)}
}
\\ \ \\

\irulelabel{
    \begin{array}{c}
        \shapetenv[x \mapsto ((), \arraynotmasked)] \vdash \stmts \totypeenv \shapetenv' 
    \end{array}
}{
    \shapetenv \vdash \cmdFor{x}{\codeliteral{0 \ldots} c}{\stmts} \totypeenv \shapetenv'
}{
    \textrm{(T-For)}
}
\quad

\irulelabel{
    \begin{array}{c}
        \shapetenv \vdash \stmts_1 \totypeenv \shapetenv' \\ 
        \shapetenv' \vdash \stmts_2 \totypeenv \shapetenv''
    \end{array}
}{
    \shapetenv \vdash \stmts_1 \codesemicolon \stmts_2 \totypeenv \shapetenv'' 
}{
    \textrm{(T-Seq)}
}

\end{array}
\]
\vspace{-10pt}
\caption{Rules for type analysis for statements.}
\label{fig:shape-stmt-shown}
\vspace{-5pt}
\end{figure}

%% file: sections/rewrite.tex
\section{Vectorizing with Type-Directed Rewrite} \label{sec:rewrite}

\subsection{Problem Statement}

We first formally state the vectorization problem:
Given a program $\cprog$ and its input type annotations, our goal is to find a loop-free program $\cprog'$ that is observationally equivalent to $\cprog$.
That is, for any evaluation environment $\venvs$ under which $\cprog$ returns a value $\valuev$ with maskedness $\maskmu$, $\cprog'$ under $\venvs$ returns a value $\valuev'$ with maskedness $\maskmu'$ such that $\valuev = \valuev'$ and $\maskmu = \maskmu'$.

Some programs in the DSL cannot be vectorized, such as the ones with strongly coupled loop-carried dependence.
Consequently, our rewrite approach is not complete for the full set of programs in the DSL.
We will circle back to define \textit{rewritable} programs more precisely in \Cref{sec:vectorizability}.

\subsection{Top-Level Algorithm}

\input{figures/rewrite/algo-vectorize}

The top-level algorithm for vectorization is shown in \Cref{algo:vectorize}.
Given a program $\cprog$ with input type annotations $\cann$, \textsc{Vectorize} works inside out, repeatedly rewriting the innermost loop of $\cprog$ (Lines 2--6).
In each iteration, \textsc{AnalyzeShape} infers the variables' types (Line 3), as described in \Cref{subsec:types}.
\textsc{Vectorize} then selects the innermost loop (Line 4), invokes \textsc{Rewrite} to rewrite the loop, guided by the inferred types (Line 5), and substitutes the rewritten statements back (Line 6).
The procedure repeats until no vectorizable loops remain, and returns the resulting program.

\subsection{Type-Directed Rewrite}

\input{figures/rewrite/rewrite-stmt-primary}

\bfpara{Rewriting statements.}
The \textsc{Rewrite} procedure rewrites statements.
The complete set of rewrite rules is in \Cref{sec:rewrite-complete} and a subset of rules for statements is shown in Figure~\ref{fig:rewrite-stmt-shown}. 
We use judgments of the form $\senvba, \loopvarrep \vdash \stmts \stmtrewriteto \stmts', \senva'$ to represent that rewriting the statement $\stmts$ under environments $\senvba$ and $\loopvarrep$ produces a new statement $\stmts'$ and an updated type environment $\senva'$.
$\senvb$ is the type environment before the current invocation of \textsc{Rewrite}, and $\senva$ is the updated environment. $\loopvarrep$ maps the current loop variable to its substitute.
$\curloopvar(\loopvarrep)$ and $\curloopvarreplacer(\loopvarrep)$ return the current loop variable and the current loop variable substitute stored in $\loopvarrep$.

The key idea is to eliminate a loop by replacing its loop variable with an array containing all the values taken by that variable, and propagating the newly added axis through the loop body.
In the rest of the paper, we say that an expression is \emph{lifted} when it is rewritten to an expression that evaluates to an array of values produced by the original one across loop iterations.
We can take advantage of broadcasting to avoid unnecessary replication.
Rule (R-For) in Figure~\ref{fig:rewrite-stmt-shown} initializes the process: before rewriting the loop body, it records in $\loopvarrep$ that the loop variable should be replaced by $\arangename(\integrali)$ where $\integrali$ is the loop bound.
Rule (R-Seq) rewrites sequences compositionally, using the type environment produced by the first rewrite in the second.
Bind and update statements are rewritten mainly by rewriting their expressions.
In particular, Rule (R-Bind1) handles the case where the right-hand side is lifted, and it updates the type environment accordingly.

\bfpara{Rewriting expressions.}
Statement-level rewrites involve rewriting expressions. \Cref{fig:rewrite-expr-shown} shows some expression-level rules.
Judgments of the form $\senvba, \loopvarrep \vdash \expre \looprewriteto \expre'$ mean that the expression $\expre$ is rewritten to $\expre'$ under environments $\senvba$ and $\loopvarrep$.
They use the same environments as the statement-level rewrites, but only return the new expression.

\input{figures/rewrite/rewrite-expr-primary}

As shown by rule (R-LVar) in Figure~\ref{fig:rewrite-expr-shown}, every occurrence of the loop variable is replaced by its substitute stored in $\loopvarrep$.
The substitute is a 1-D array, so each expression involving the loop variable gains a new dimension.
The remaining expression rules propagate this new axis recursively.
Unary operators do not need to change as they are applied element-wise to their sole argument.
The $reduce$ operators, $\expandname$, $\replicatename$, and the shape access operator take an array and the specified axes as arguments.
As shown in the (R-Shape) and (R-Exp) rules, if the array argument is lifted, we shift the target axes one position to the right to ensure the operators are still applied to the same axes as before.

For element-wise operators, rewrites are guided by shapes.
The (R-Biop) rule shows how to rewrite binary operations.
The shapes before the rewrite indicate whether broadcasting occurred originally.
If a lifted operand was broadcast to have additional axes before the rewrite, we insert axes of size $1$ after the newly added axis.
The number of inserted axes equals the number of axes added by the broadcast.
Doing so keeps the positions of the axes added by the broadcast while keeping the newly added axis as the left-most one.

The (R-Indx) rule rewrites indexing expressions.
As indexers are broadcast to a common shape, the rule first rewrites each indexer and inserts new axes when needed, similar to rewriting binary operations.
It then rewrites the base when the expression is not on the left-hand side of an update.
If the base is lifted, we add an indexer for the new left-most axis.
This indexer is the loop variable substitute stored in $\loopvarrep$, which is an array containing numbers of iterations where the expression is evaluated.
The new indexer needs to be expanded to ensure proper broadcasting and propagate the new axis.
Finally, the rule applies \textsc{UnmaskIndexer} (discussed in Section~\ref{subsec:branches}), to ensure that there is no masked indexer.

\begin{example}
Consider the code below.
\begin{lstlisting}[style=dsl,escapechar=\%]
# a: shape (L, M, N), not masked
for x in 0...S(a)[0] do
  ys %\deflit% %\arangelit%(S(a)[1]); # (M,)
  t %\deflit% a[x, ys, 0]; # (M,)
  ns %\deflit% %\arangelit%(S(t)[0]); #(M,)
  a[x, ys, 0] %$\getslit$% t[ns] + x;
\end{lstlisting}
To rewrite the for-loop, we set $\arange{\codeshapeaccess{\mathtt{a}}{0}}$ as the loop variable substitute in $\loopvarrep$.
The first statement in the loop is unchanged by the rewrite.
The next statement defines $\mathtt{t}$ using an expression involving $\mathtt{x}$.
For this statement, we replace $\mathtt{x}$ with the loop variable substitute.
Since $\mathtt{ys}$ has shape $(M,)$, the indexers were broadcast originally.
To preserve the broadcast axis, we expand a new right-most axis for the first indexer.
As $\mathtt{a}$ is not lifted, no new indexer is needed.
The returned type environment shows that $\mathtt{t}$ is lifted to the new shape $(L, M)$.
Thus, we increment the axis argument in the shape access expression in the next statement.
For the last update statement, we rewrite its left-hand side in the same way as before.
On the right-hand side, the base $\mathtt{t}$ is lifted, so we add an indexer for its new axis.
Because $\mathtt{ns}$ was 1-D before, the new indexer is expanded for proper broadcasting.
Here, $\mathtt{x}$ is also replaced by the loop variable substitute and then expanded.
The final result is shown below.
\begin{lstlisting}[style=dsl,escapechar=\%]
  xs %\deflit% %\arangelit%(S(a)[0]);
  xse %\deflit% %\expandlit%(xs, (1,));
  ys %\deflit% %\arangelit%(S(a)[1]);
  t %\deflit% a[xse, ys, 0]; # (L, M)
  ns %\deflit% %\arangelit%(S(t)[1]);
  a[xse, ys, 0] %$\getslit$% t[xse, ns] + xse;
\end{lstlisting}
\end{example}

\subsection{Rewriting Branches} \label{subsec:branches}

Recall from Section~\ref{sec:overview} that rewriting branches requires excluding values from invalid iterations.
For such rewrites, we first rewrite the condition.
If it is not lifted, we rewrite the two branch bodies and keep the branch structure.
If lifted, we flatten the branches by composing the rewritten branch bodies sequentially, following the (R-Brch1) rule in Figure~\ref{fig:rewrite-stmt-shown}.

Before rewriting each branch body, the rule updates the loop variable substitute stored in $\loopvarrep$ by masking the original substitute with the lifted condition using a $\makemaskedarrayname$ call. 
The substitute then has all the entries corresponding to invalid iterations masked.
In other words, the then-branch receives the iteration numbers for which the condition is true, with all others masked, and the else-branch uses the negated condition.
Nested branches work in the same way, because the effective mask of nested $\makemaskedarrayname$ calls is the conjunction of the masks from each call.
The rewrite also masks loop-local variables defined outside the flattened branch, ensuring invalid values not leaked into the branch.

Because the DSL disallows masked indexers, we use \textsc{UnmaskIndexer} to check if any indexer is masked under the updated type environment, and if so, wrap them in $\filledname$ with 0 as the fill value.
We also wrap the entire indexing expression in $\makemaskedarrayname$, masking it with the conjunction of the indexers' masks.
Such rewrites ensure that the entries selected by the supposedly masked indices remain masked in the result.
Normal updates may be turned into masked updates, if the left-hand side gets masked after such rewrites.

\begin{example}
Consider the code below as an example.
\begin{lstlisting}[style=dsl,escapechar=\%]
# a, b: shape (M,), not masked
for i in 0...S(a)[0] do
  if i > 2 then 
    a[i] %$\getslit$% b[i];
\end{lstlisting}
The code block below shows the rewrite result.
\begin{lstlisting}[style=dsl,escapechar=\%]
is_ %\deflit% %\arangelit%(S(a)[0]);
mis_t %\deflit% %\makemaskedarraylit%(is_, is_ > 2);
bis_ %\deflit% %\makemaskedarraylit%(b[%\filledlit%(mis_t,0)], is_ > 2);
%\makemaskedarraylit%(a[%\filledlit%(mis_t,0)], is_ > 2) %$\getslit$% bis_;
\end{lstlisting}
Here, replacing $\mathtt{i}$ with $\mathtt{is\_}$ lifts the branch condition to the array $\mathtt{is\_ > 2}$.
Before rewriting the then-branch, the rewrite records $\mathtt{mis\_t}$ as the loop variable substitute, masking out iterations where the condition is false.
All occurrences of $\mathtt{i}$ in the then-branch are replaced by this masked substitute.
Because masked arrays cannot be indexers, the rewrite fills the masked indices with $0$ and moves the same mask to the indexing result.
The last statement is a masked update, so it updates exactly the elements of $\mathtt{a}$ corresponding to iterations that execute the then-branch.
Were there an else-branch, it would be handled analogously with the negated condition.
\end{example}

\subsection{Dataflow Analysis for the Depends-On Relation}

There are two code patterns that need special handling.
The first is \emph{false dependence} \cite{pugh1992eliminating}: a loop-local variable defined by an expression not lifted is updated by a lifted expression. In this case, the defining expression must be lifted explicitly to fit the updates.
The second pattern is \emph{branch-restrained variables defined outside of the loop}.
If the flattened branch puts conditions on other loop variables, only masking loop-local variables may incorrectly remove those restrictions.
To address these issues, we first give the following definition.
\begin{definition}[Depends-on]
A non-variable expression $\expre$ depends on a loop variable $y$ if evaluating $\expre$ directly involves a variable depending on $y$.
A variable $x$ defined inside the loop of $y$ depends on $y$ if its defining or updating expressions depends on $y$, or if $x$ is updated inside a branch whose condition depends on $y$.
Loop variables are self-dependent.
\end{definition}
Before rewriting, a dataflow analysis maps each variable to a set of loop variables on which it depends.
% The analysis is standard, so we omit the detailed rules.
The analysis result allows us to identify variables incurring false dependence and use $\replicatename$ to explicitly expand them.
It also allows us to find variables escaping the restrictions when flattening branches and thus mask them.
\begin{example}
This example shows both uses of the analysis.
\begin{lstlisting}[style=dsl,escapechar=\%]
# a: shape (M, N), not masked; assume M > N
for i in 0...S(a)[0] do
  for j in 0...S(a)[1] do
    b %\deflit% %\oneslit%((1,));
    if i < j then
      b[0] %$\getslit$% a[j, i];
\end{lstlisting}
After rewriting the inner loop, we obtain the code below.
\begin{lstlisting}[style=dsl,escapechar=\%]
for i in 0...S(a)[0] do
  js %\deflit% %\arangelit%(S(a)[1]);
  b %\deflit% %\replicatelit%(%\oneslit%((1,)),(0,),(S(a)[1],));
  m %\deflit% i < js; # shape (N,)
  ir %\deflit% %\makemaskedarraylit%(i, m); # shape (N,)
  jsm %\deflit% %\makemaskedarraylit%(js, m);
  jsmf %\deflit% %\filledlit%(jsm, 0);
  irf %\deflit% %\filledlit%(ir,0);
  %\makemaskedarraylit%(b[jsmf, 0],m) %$\getslit$% \
    %\makemaskedarraylit%(a[jsmf, irf], %\logicalandlit%(m, m));
\end{lstlisting}
The dataflow analysis reports that $\mathtt{b}$ depends on $\mathtt{j}$.
Therefore, the rewrite wraps its definition in $\replicatename$, creating a copy for each $\mathtt{j}$.
Since $\mathtt{b}$ is now lifted, indexing into $\mathtt{b}$ needs a new indexer.
The analysis also shows that $\mathtt{i}$ is restricted by the branch.
When the branch is flattened, the $\mathtt{i}$ in $\mathtt{a[j, i]}$ is masked by $\mathtt{i < js}$.
The masked indexers are then filled as before, leaving $\mathtt{jsmf}$ and $\mathtt{irf}$ as the final indexers.
As the potentially out-of-bound values are masked before indexing into $\mathtt{a}$, the rewritten expression remains well-defined.
\end{example}

\subsection{Vectorizability} \label{sec:vectorizability}

In general, our rewrite rules cannot handle unvectorizable loops.
Loops' vectorizability is determined by the presence of loop-carried dependence~\cite{allen1987automatic}.
As discussed in prior works~\cite{bernstein1966, kennedy2001optimizing}, if two statements access the same memory location, and at least one of them writes, the later one is said to be \emph{dependent} on the earlier one.
In most cases, cycles or backward dependence in dependence graphs prevent direct vectorization~\cite{kennedy2001optimizing, mendis2024advanced}.
Thus, our technique cannot vectorize programs with loop-carried dependence, except for one special case.

Since loops are commonly used to implement reduction operators such as sum, we add a rule that specifically rewrite such statements. 
First, we introduce the following definition.
\begin{definition}[Rewritable reduce statements]
An update statement in a loop is a \emph{rewritable reduce statement} if it meets the following conditions:
(1) the top-level operator on the right-hand side is one of $\codeliteral{+}$, $\codeliteral{*}$, $\codeliteral{maximum}$, or $\codeliteral{minimum}$;
(2) the first operand of the operator is the same as the update target;
(3) the updated variable is not defined in the same loop; and
(4) the left-hand side depends neither on the loop variable of the enclosing loop nor any restricted loop variable. 
\end{definition}

For loops with one such statement, we can replace it with the corresponding reduction operator.
If the reduced expression is not lifted, we explicitly replicate it.
If the statement is in a branch, the reduction is done selectively via masked arrays.
To avoid reductions on masked arrays returning a masked value, we fill the reduced array with the reduction's identity value.

\begin{example}
Consider the code snippet below.
The first loop adds up a sequence of 10s, which can be replaced by a call to $\codesum$.
The second finds the smallest $\mathtt{j}$ such that $\mathtt{j} \geq \mathtt{a[0]}$, which is a $\codemin$ reduction on a subset of the values taken by $\mathtt{j}$.
Both loops contain rewritable reduce statements.
\begin{lstlisting}[style=dsl,escapechar=\%]
# a: shape (M,), not masked; assume M < 100.
acc %\deflit% %\oneslit%((2,)) * 100;
for i in 0...S(a)[0] do
  acc[0] %$\getslit$% acc[0] + 10;
for j in 0...S(a)[0] do
  if j >= a[0] then
    acc[1] %$\getslit$% %\minimumlit%(acc[1], j);
\end{lstlisting}
After the write we get this code:
\begin{lstlisting}[style=dsl,escapechar=\%]
acc %\deflit% %\oneslit%((2,)) * 100;
rep_10 %\deflit% %\replicatelit%(10, (0,), (S(a)[0],));
acc[0] %$\getslit$% acc[0] + %\sumlit%(rep_10, 0);
js %\deflit% %\arangelit%(S(a)[0]);
jsm %\deflit% %\makemaskedarraylit%(js, js >= a[0]);
jsmf %\deflit% %\filledlit%(jsm, inf);
acc[1] %$\getslit$% %\minimumlit%(acc[1],%\minlit%(jsmf, 0));
\end{lstlisting}

In the first loop, because 10 is not lifted, we replicate it explicitly for reduction.
Adding the reduction expression to the accumulator is equivalent to performing the additions iteratively.
In the second loop, the reduced array is $\mathtt{jsm}$, which is lifted from $\mathtt{j}$ and no replication is needed.
If $\mathtt{a[0]}$ is greater than every value of $\mathtt{j}$, $\mathtt{jsm}$ is fully masked, and reducing it would result in a masked value.
To avoid this, we fill $\mathtt{jsm}$ with $\mathtt{inf}$, which denotes infinity, the identity value for $\codemin$.
\end{example}

Next, we define rewritability more precisely in the context of our DSL and state the main correctness theorem.
\begin{definition}[Rewritability]
A loop is rewritable if (1) it has no loop-carried dependence, or (2) it contains only one rewritable reduce statement and the only loop-carried dependence is the rewritable statement on itself.
A program is rewritable if all of its loops are rewritable.
\end{definition}

\begin{theorem}[Correctness of the \textsc{Vectorize} Routine] \label{thm:soundness-vectorize-shown}
Let $\cprog$ be a rewritable program, and $\cann$ be the type annotation for its arguments.
If \textsc{Vectorize}($\cprog, \cann$) returns $\cprog'$, then, for any evaluation environment $\venvs$ under which the execution of $\cprog$ returns a value, executing $\cprog'$ under $\venvs$ returns the same value.
\end{theorem}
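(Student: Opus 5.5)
The proof will go by induction on the number of iterations of the main loop of \textsc{Vectorize} in \Cref{algo:vectorize}. Each iteration replaces the innermost loop $\cloop$ of the current program by $\textsc{Rewrite}(\cloop)$. It therefore suffices to prove a single-step lemma: replacing one innermost rewritable loop by its rewrite preserves the returned value, and the resulting program is again rewritable and well-typed. Together with \Cref{thm:type-soundness-shown}, which guarantees that the types from \textsc{AnalyzeShape} describe the runtime shapes and maskedness, this lets the induction carry through. Since the rewrite only changes $\cloop$, and the language forbids names bound inside a loop from escaping it, I would reduce the single-step lemma to a local equivalence. For every store $\venv$ reaching $\cloop$, executing $\cloop$ and executing its rewrite from $\venv$ yield stores that agree on all variables visible after the loop. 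Preservation of rewritability needs its own argument: the rewrite introduces no new loops, and it adds no dependences between the remaining outer loops beyond those already present in $\cloop$.

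For the local equivalence I would prove a stronger \emph{lifting invariant} by simultaneous induction on the derivations of the expression and statement rewrite judgments. Write the loop bound as $n$ and the environment at iteration $k$ of the original loop as $\venv_k$. A lifted expression $\expre'$, evaluated in the rewritten environment, should equal the stack over $k<n$ of the values of $\expre$ in $\venv_k$. The stacking axis is the leftmost one, and entries for iterations that do not reach the expression's program point are masked; the mask comes from the current substitute in $\loopvarrep$. A non-lifted expression evaluates identically in every iteration. For statements, the invariant states that each variable lifted in $\senva'$ holds the stacked values of its per-iteration counterparts. It also states that each array updated by the statement ends in the same final state as after sequentially running all $n$ iterations. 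The expression cases are mostly mechanical. (R-LVar) holds because $\arange{\integrali}$ stacks $0,\dots,n-1$. (R-Shape), (R-Exp) and the reductions hold by the axis shift. (R-Biop) and (R-Indx) need a broadcasting lemma: if operand shapes were $\shapes_1,\shapes_2$ before lifting, inserting $|\broadcast(\shapes_1,\shapes_2)|-|\shapes_i|$ unit axes after the new leading axis makes broadcast commute with stacking. \textsc{UnmaskIndexer} needs a lemma that filling masked indexers with $0$ and re-masking the result agrees, on unmasked entries, with the per-iteration indexing.

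The statement cases follow the same pattern. (R-Seq) is compositional. (R-Bind1) follows from the expression invariant. False-dependence variables wrapped by $\replicatename$ are justified through the depends-on analysis. (R-Brch1) relies on masked substitutes, using the fact that nested $\makemaskedarrayname$ masks conjoin; the then-branch and else-branch masks are complementary, and the branch bodies write disjoint iteration sets. The reduction rule uses associativity and commutativity of $+,*,\maximum,\minimum$, with filling by the identity element to handle fully masked slices.

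I expect the main obstacle to be the update case: justifying that one vectorized (masked) update equals $n$ sequential updates. This is exactly where rewritability is used. Absence of loop-carried dependence implies that the index sets written in different iterations do not overlap each other. They also do not overlap locations read in other iterations, so the parallel write is well defined. In addition, reads may be evaluated against the pre-loop state, provided the rewrite orders each read-before-write within a statement as the original body did. I would first try to formalize this as a commutation lemma: for dependence-free iterations, running the body for $k=0,\dots,n-1$ equals running each statement for all $k$ before moving to the next statement. The statement-level invariant then only needs to relate one statement executed across all iterations to its vectorized form, which is the form the rule-by-rule induction above handles.
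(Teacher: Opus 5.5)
Your structure matches the paper's: an induction over the iterations of \textsc{Vectorize} built on a one-loop soundness result (Theorem~\ref{thm:soundness-rewrite-prog}), and a lifting invariant that is the paper's ``parallel evaluation''. Your commutation lemma is the Kennedy--Allen result the paper cites as Theorem~\ref{thm:allen}, combined with its statement-by-statement definition of parallel execution, and you treat (R-Rstmt) separately with an identity fill, as the paper does. You also point out that every intermediate program must stay rewritable. That improves on the paper, whose ``simple induction'' reapplies Theorem~\ref{thm:soundness-rewrite-prog} without checking this. Your justification for it is not literally true, though, because (Unmask) reads cell $0$ in masked lanes. Suppose an outer loop over $\mathtt{i}$ runs an inner loop with body $\cmdITE{\mathtt{j == i}}{\codeupdate{\mathtt{c[i]}}{\mathtt{a[j]}}}{\cmdSkip}$ and then executes $\codeupdate{\mathtt{a[i]}}{\mathtt{7}}$. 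The original outer loop has no loop-carried dependence. After the inner rewrite, however, every outer iteration reads $\mathtt{a[0]}$, which iteration $0$ writes. So the dependence notion you preserve must ignore masked lanes.

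The step that fails as written is the non-lifted half of your invariant. You take $\venv_k$ from the \emph{original} loop and claim that non-lifted expressions evaluate identically in every iteration. Consider $\codeupdate{\mathtt{t[j]}}{\mathtt{0}}\codesemicolon \codeupdate{\mathtt{t[j]}}{\mathtt{t[j] + A[j]}}$ with $\mathtt{t}$ defined outside the loop. The loop has no loop-carried dependence, and (R-Updt) rewrites it with $\mathtt{t}$ left unlifted. Yet the base $\mathtt{t}$ of the read $\mathtt{t[j]}$ differs across the $\venv_k$. Its value in the rewritten environment, with all rewritten cells already $0$, in general equals none of them. So the (R-Indx) case has no usable hypothesis for its base.

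This is why the paper's definition of parallel evaluation has a separate clause for variables defined outside $\cloop$. Their value is the pre-loop value with every iteration's preceding updates merged in. That value agrees with $\venv_k$ only on the cells iteration $k$ reads, so absence of loop-carried dependence is needed inside the \emph{expression} induction, not only in the update case. Equivalently, state your invariant after the commutation lemma, taking $\venv_k$ from the distributed execution in which every iteration of a statement reads the state left by all iterations of the preceding statements. This also corrects your remark that reads may be evaluated against the pre-loop state: in the example above, the read of $\mathtt{t[j]}$ must see $0$.

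Finally, your case list omits (R-Var). Its masking of branch-restrained outer variables, together with the fill by $0$, is what keeps masked lanes in bounds. Since the theorem requires $\cprog'$ to return a value, your invariant must also assert that rewritten expressions are defined on those lanes.
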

\begin{proof}
The proof is provided in Appendix~\ref{sec:rewrite-proof}.
\end{proof}

%% file: figures/rewrite/algo-vectorize.tex
\begin{figure}[!t]
\begin{algorithm}[H]
\caption{Top-level algorithm for vectorizing programs.}
\label{algo:vectorize}
\begin{algorithmic}[1]
\Procedure{\textsc{Vectorize}}{$\cprog, \cann$}
\Statex \textbf{Input:} A program $\cprog$ and the input type annotation $\cann$.
\Statex \textbf{Output:} The vectorized program.

\While{$\textsf{HasVectorizableLoops}(\cprog)$}
    \State $\senv \gets \textsc{AnalyzeShape}(\cprog, \cann)$
    \State $\cloop \gets \textsf{InnerMostLoop}(\cprog)$
    \State $\cloop', \_ \gets \textsc{Rewrite}(\cloop, \senv, \senv, \emptyset)$ 
    \State $\cprog \gets \cprog[\cloop'/\cloop]$
\EndWhile
\State \Return $\cprog$

\EndProcedure
\end{algorithmic}
\end{algorithm}
\vspace{-20pt}
\end{figure}

%% file: figures/rewrite/rewrite-stmt-primary.tex
\begin{figure}[t]
\[
\scriptsize
\begin{array}{c}
\irulelabel{
    \begin{array}{c}
    \loopvarrep' = \{x \mapsto \arange{\integrali}\} \quad
    \senvb, \senva, \loopvarrep' \vdash \stmts \stmtrewriteto \stmts', \senva'
    \end{array}
}{
    \senvb, \senva, \loopvarrep \vdash \cmdFor{x}{ \codeliteral{0 \ldots} \integrali}{\stmts} \stmtrewriteto \stmts', \senva'
}{
    \textrm{(R-For)}
}
\\ \ \\

\irulelabel{
    \begin{array}{c}
        \senvba, \loopvarrep \vdash \stmts_1 \stmtrewriteto \stmts'_1, \senva' \quad
        \senvb, \senva', \loopvarrep \vdash \stmts_2 \stmtrewriteto \stmts'_2, \senva'' 
    \end{array}
}{
    \senvba, \loopvarrep \vdash \stmts_1 \codesemicolon \stmts_2 \stmtrewriteto \stmts'_1 \codesemicolon \stmts'_2, \senva''
}{\textrm{(R-Seq)}} 
\\ \ \\

\irulelabel{
    \begin{array}{c}
        \senvb \vdash \expre \shapetypeof \shapes \quad
        \senva \vdash \expre' \shapetypeof \shapes' \quad
        \senva \vdash \expre' \masktypeof \maskm' \\ 
        \senvba, \loopvarrep \vdash \expre \looprewriteto \expre' \quad
        \shapes \neq \shapes' \quad
        \senva' =  \senva[\name \mapsto (\shapes', \maskm')]
    \end{array}
}{ \senvba, \loopvarrep \vdash \cmdAssign{x}{\expre} \stmtrewriteto \cmdAssign{x}{\expre'}, \senva'}
{
    \textrm{(R-Bind1)}
}
\\ \ \\

\irulelabel{
    \begin{array}{c}
        \senvba, \loopvarrep \vdash \expre_{c} \looprewriteto \expre'_{c} \quad 
        \senvb \vdash \expre_c \shapetypeof \shapes_c \quad
        \senva \vdash \expre'_c \shapetypeof \shapes'_c \quad
        \shapes'_c \neq \shapes_c \\
        \curloopvarreplacer(\loopvarrep) = \expre_s \;
        \curloopvar(\loopvarrep) = y \enspace
        \expre''_{c} = \logicalnot{\expre'_{c}} \\
        \senvba, \loopvarrep[y \mapsto \makemaskedarray{\expre_s}{\expre'_{c}}] \vdash \stmts_{t} \stmtrewriteto \stmts'_{t}, \senva' \\
        \senvb, \senva', \loopvarrep[y \mapsto \makemaskedarray{\expre_s}{\expre''_{c}}] \vdash \stmts_{e} \stmtrewriteto \stmts'_{e}, \senva''
    \end{array}
}{ 
    \begin{array}{c}
            \senvba, \loopvarrep \vdash \cmdITE{\expre_{c}}{\stmts_{t}}{\stmts_{e}} \stmtrewriteto \stmts'_t \codesemicolon \stmts'_e, \senva'' 
    \end{array}
}{
    \textrm{(R-Brch1)}
}

\end{array}
\]
\vspace{-10pt}
\caption{Sample statement rewrite rules. }
\label{fig:rewrite-stmt-shown}
\vspace{-10pt}
\end{figure}

%% file: figures/rewrite/rewrite-expr-primary.tex
\begin{figure*}[!t]
\[
\scriptsize
\hspace{-10pt}
\begin{array}{c}
\irulelabel{ 
    \begin{array}{c}
        \textbf{Var } x \quad
        \curloopvarreplacer(\loopvarrep) = \expre_s \\
        \curloopvar(\loopvarrep) = x
    \end{array}
}{
    \senvba,\loopvarrep \vdash x \looprewriteto \expre_s
}{
    \textrm{(R-LVar)}
} 
\quad

\irulelabel{
    \begin{array}{c}
        d = c + 1 \quad
        \senvb \vdash \expre_t \shapetypeof \shapes \\
        \senvba, \loopvarrep \vdash \expre_t \looprewriteto \expre'_t \quad
        \senva \vdash \expre'_t \shapetypeof \shapes' \\
        \expre' = \ite{\shapes \neq \shapes'}{\codeshapeaccess{\expre'_t}{d}}{\codeshapeaccess{\expre'_t}{c}}
    \end{array}
}{ 
    \senvba, \loopvarrep \vdash \codeshapeaccess{\expre_t}{c} \looprewriteto e'
}{
    \textrm{(R-Shape)}
}
\quad

\irulelabel{
    \begin{array}{c}
        \expre = \expand{\expre_t}{\codetuple{c_0, \ldots, c_m}}\\
        \senvb \vdash \expre_t \shapetypeof \shapes \quad 
        \senvba, \loopvarrep \vdash \expre_t \looprewriteto \expre'_t \\
        \senva \vdash \expre'_t \shapetypeof \shapes' \quad
        0 \leq j \leq m \\ 
        d_j = \ite{\shapes' \neq \shapes}{c_j+1}{c_j} \\
        \expre' = \expand{\expre'_t}{\codetuple{d_0, \ldots, d_m}}
    \end{array}
}{ 
    \senvba, \loopvarrep \vdash \expre \looprewriteto \expre'
}{
    \textrm{(R-Exp)}
}
\\ \ \\

\irulelabel{
    \begin{array}{c}
        \fops \in \text{Binary } \opops \quad
        \senvb \vdash \expre_1 \shapetypeof (a_{m+n}, \ldots, a_0) \\
        \senvb \vdash \expre_2 \shapetypeof \shapes_2 \quad
        \shapes_2 = (b_{m}, \ldots, b_0) \\
        \senvba, \loopvarrep \vdash \expre_1 \looprewriteto \expre'_1 \quad
        \senvba, \loopvarrep \vdash \expre_2 \looprewriteto \expre'_2 \\
        \senva \vdash \expre'_2 \shapetypeof \shapes'_2 \quad
        \expre''_2 = \expand{\expre'_2}{\codetuple{\codeliteral{1}, \ldots, n}}\\
        \expre'''_2 = \ite{\shapes'_2 \neq \shapes_2}{\expre''_2}{\expre'_2}
    \end{array}
}{
    \senvba, \loopvarrep \vdash \f{\expre_1, \expre_2} \looprewriteto \f{\expre'_1, \expre'''_2}
}{
    \textrm{(R-Biop)}
}
\quad

\irulelabel{
    \begin{array}{c}
        \expre = \arrindex{\expre_t}{\expre_m, \ldots, \expre_1} \quad
        \senvb \vdash \expre_t \shapetypeof \shapes \quad
        \senvba, \loopvarrep \vdash \expre_t \looprewriteto \expre'_t \quad 
        \expre''_t = \ite{\notonlhs{\expre}}{\expre'_t}{\expre_t} \\
        \senva \vdash \expre''_t \shapetypeof \shapes'' \quad
        \senvba,\loopvarrep \vdash \squarebrack{\expre_m, \ldots, \expre_1} \looprewriteto \squarebrack{ \expre'_m, \ldots, \expre'_1 } \quad
        \curloopvarreplacer(\loopvarrep) = \expre_s\\
        1 \leq j \leq m \quad 
        \senvb \vdash \expre_j \shapetypeof (a_{n_j}^j, \ldots, a_0^j) \quad 
        \auximax(n_1, \ldots, n_m) = n' \\
        \expre' = \ite{\shapes'' = \shapes}{\arrindex{\expre''_t}{\expre'_m, \ldots, \expre'_1}}{\arrindex{\expre''_t}{\expand{\expre_s}{\codetuple{\codeliteral{1}, \ldots, n'}}, \expre'_m, \ldots, \expre'_1}} \\
    \end{array}
}{
    \senvba, \loopvarrep \vdash \expre \looprewriteto \rmmaskedindexer(\expre', \senva)
}{ \textrm{(R-Indx)}}

\end{array}
\]
\vspace{-10pt}
\caption{Sample rewrite rules for expressions. $\onlhs{\expre}$ checks if $\expre$ is on the left-hand side of an update. }
\label{fig:rewrite-expr-shown}
% \vspace{-10pt}
\end{figure*}

%% file: sections/implementation.tex
\section{Implementation}

We have implemented the proposed technique in a tool called \tool based on standard \numpy and the \lstinline|ast| module of Python, making \tool lightweight and easy to run.
Furthermore, many constructs in practical \numpy programs map directly to our DSL, which allows \tool to process source programs with only minor adaptations, such as renaming variables to satisfy the requirements.

\bfpara{Postprocessing the rewrite results.}
Code generated by the rewrite rules may incur unnecessary runtime overhead or contain masked updates that cannot be expressed directly in Python because function calls are not l-values.
To address these issues, \tool applies a pipeline of postprocessing passes that transforms rewritten programs to executable Python while eliminating common inefficiencies.
For example, the pipeline simplifies boolean mask manipulations, replaces masked updates with \lstinline|np.where| and advanced indexing with slices and transposes, collapses sum-after-multiplications into \lstinline|np.tensordot| calls, and removes unnecessary materializations, copies, calls to $\filledname$, and masked array constructions. 
It also performs standard optimizations such as common-subexpression elimination (CSE) and unused-variable elimination.
Because the main rewrite procedure emits mostly straight-line code, the resulting programs are amenable to dataflow analyses such as reaching definitions, making these postprocessing passes effective and straightforward.

%% file: sections/eval.tex
\section{Evaluation}

We evaluated \tool on \numpy programs that use explicit loops over arrays to answer the following questions.
\vspace{-14pt}
\begin{itemize}[leftmargin=30pt]
\item[\textbf{RQ1.}] Is \tool effective at vectorizing programs?
\item[\textbf{RQ2.}] How much time is needed to vectorize programs?
\item[\textbf{RQ3.}] How much performance improvement does the program vectorized by \tool achieve?
\end{itemize}
\vspace{-6pt}

\subsection{Evaluation Set-up}

\vspace{-5pt}
\bfpara{Benchmarks.}
We collected 150 benchmarks from 12 datasets, which are \blas \cite{blackford2002updated}, \blend \cite{blend}, \darknet \cite{darknet13}, \dsp \cite{dsp}, \dspstone \cite{zivojnovic1994dspstone}, \llama \cite{llamacpp}, \makespeare \cite{rosin2019stepping}, \mathfu \cite{mathfu}, \polybench \cite{polybench}, \simplarray \cite{so2017synthesizing}, \utdsp \cite{saghir1998application}, and \stackoverflow.

Among these datasets, \blas, \darknet, \dsp, \dspstone, \makespeare, \mathfu, \simplarray, and \utdsp were collected by \citet{c2taco} and are available as C source code.
The \blend and \llama datasets were collected by \citet{blend} and \citet{qiu2024tenspiler}, available as C++ code.
The 15 \polybench benchmarks were selected by \citet{brauckmann2025tensorize} from the \polybench suite~\cite{polybench}.
For these benchmarks, we translated the original C or C++ source code into Python programs using \numpy while preserving their logic and structure.

In addition, we also created 51 benchmarks based on questions from \stackoverflow.
Specifically, we searched for code snippets and functionality descriptions in questions related to \numpy and vectorization, and created benchmarks based on the problem descriptions or code snippets in the questions.
Compared to the benchmarks translated from C or C++, some of these \stackoverflow benchmarks already use high-level \numpy features and are therefore more representative of practical \numpy programs.

\vspace{-3pt}
\bfpara{Baselines.}
We compared \tool with two state-of-the-art tools for vectorizing array programs: \tensorize\cite{brauckmann2025tensorize} and \tenspiler\cite{qiu2024tenspiler}.
Specifically, \tensorize uses symbolic execution and a symbolic algebraic solver to search for a program using high-level APIs that is equivalent to the original program.
It expects input in the Affine dialect of MLIR, though we were unable to generate such code that \tensorize accepts.
Hence, we compared with it only on the benchmarks whose MLIR code is available in its repository, which covers all datasets except \stackoverflow. 
\tenspiler uses verified lifting techniques based on program synthesis to search for a vectorized program equivalent to the original.
To compare with \tenspiler, we manually translated the benchmarks not used in its evaluation into C++ as input.
Since \tenspiler requires users to provide driver scripts to configure synthesis for each benchmark, we reused the driver scripts from its repository for benchmarks included in its evaluation.
For other benchmarks, we wrote automatic driver scripts with which \tenspiler infers the synthesis configurations.

\vspace{-3pt}
\bfpara{Configurations.}
We set a 30-minute time limit for synthesis per benchmark for each tool.
All experiments were conducted on a machine with an Intel Core i5-13600K CPU and 64 GB of memory, running Linux Mint 22.

\subsection{Effectiveness and Efficiency}

\begin{figure}[!t]
    \includegraphics[width=0.85\linewidth]{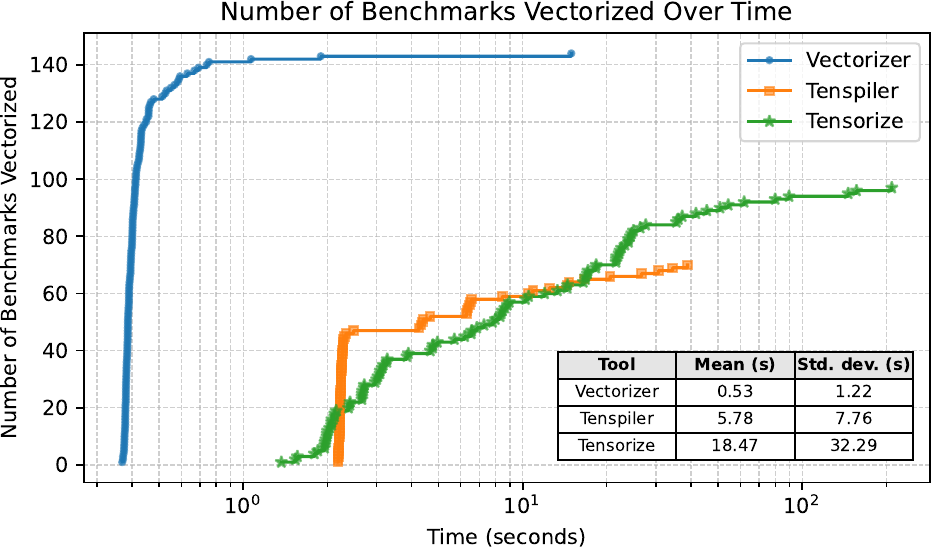}
    \vspace{-12pt}
    \caption{Running time. The x-axis is on a log scale. The table reports the mean and standard deviation of the per-benchmark vectorization time for each tool.}
    \label{fig:synthesis-time}
    \vspace{-10pt}
\end{figure}

\vspace{-5pt}
\bfpara{Effectiveness.}
Across the 150 benchmarks, \tool vectorizes 142, corresponding to an approximately 95\% success rate.
Upon further inspection of the 8 failed benchmarks, we found that 2 can be solved after minor changes to the code.
Specifically, one needs to replace a branch guarding an update with the $\codeliteral{\maximumlit}$ operator.
The other requires adding more axes to an argument to eliminate false dependence, which our technique cannot handle automatically.
The remaining 6 failures are primarily due to complex loop-carried dependence that our technique cannot handle.

By contrast, \tenspiler solves 70 of the 150 benchmarks.
Upon inspection, we found that \tenspiler can efficiently find solutions with driver scripts that have manually specified configurations, but it is less effective with automatic driver scripts and imposes several syntactic restrictions on inputs.
For example, with automatic drivers, \tenspiler can only handle benchmarks with a single top-level loop that cannot be more than two levels deep.

Among the 99 benchmarks lowered to MLIR in the \tensorize repository, \tensorize generates output for 97.
However, many outputs are not directly executable because they contain undefined symbols or incorrect arguments to \numpy APIs.
Therefore, we considered these outputs to be intermediate representations and used coding agents based on large language models (LLMs) to analyze and normalize them while preserving their structure, operator sequence, and logic as closely as possible.
LLM-assisted inspection of the normalized code suggested that 15 of the 97 outputs are semantically different from their source programs.

\bfpara{Efficiency.}
Figure~\ref{fig:synthesis-time} presents the running time of different tools.
On average, \tool takes 0.53 seconds to vectorize a program, which is about 10.91$\times$ faster than \tenspiler (5.78 seconds) and 34.85$\times$ faster than \tensorize (18.47 seconds).
Because \tool does not rely on search or symbolic reasoning, it is fast even when input programs contain semantically complex operators or control-flow structures.

\bfpara{Answer to RQ1 and RQ2.}
The results show that \tool is both effective and efficient.
It correctly vectorizes substantially more benchmarks than other tools and frees users from per-benchmark configuration and normalizing the emitted code.

\subsection{Performance Improvement}

To measure performance improvement, we compared each vectorized program against its original loop-based version and computed the speedup of the vectorized program.
To measure the execution time, we generated random input data whose sizes correspond to 25,000,000 iterations of the inner-most loops. 
The workload is selected to minimize the variance caused by overly simple executions while preventing the experiments from being prohibitively slow or running out of memory on the host machine.
We applied this setup to all benchmarks except 3 from the \stackoverflow dataset, whose program structures require fixed, benchmark-specific dimensionalities.
We ran each program 10 times and computed the speedups using the median measurements.

\input{figures/evaluations/speedup-comparison}

\begin{figure*}[!t]
\centering
  \includegraphics[width=\linewidth]{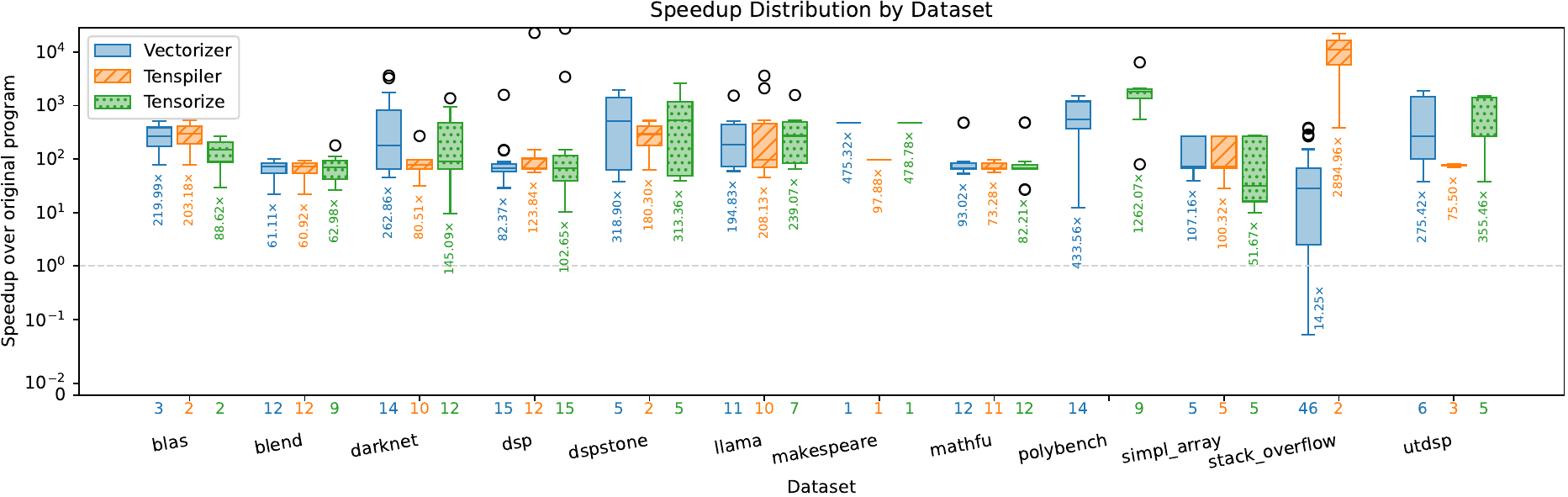}
  \vspace{-15pt}
  \caption{Speedup distribution by tool and dataset. Numbers on the x-axis show the number of benchmarks included in the box. Labels below boxes show geometric mean speedups of the benchmarks counted. The y-axis is on a log scale.}
  \label{fig:speedup}
  % \vspace{-5pt}
\end{figure*}

\bfpara{Results.}
Table~\ref{tab:speedup} summarizes the geometric mean speedups, and Figure~\ref{fig:speedup} shows their distributions.
Overall, \tool's emissions gain modestly higher speedups than those emitted by other tools. 
Nevertheless, \tool slows down 7 benchmarks from \stackoverflow, and our inspection reveals two primary causes.
First, vectorization can eagerly materialize large intermediate arrays, whose memory management overhead may outweigh the computational benefits of vectorization. 
In such cases, loops that operate on small, cache-friendly array chunks can be more efficient.
Second, for some benchmarks involving branches or data filtering, vectorization may replace control flow with predicated execution or reduction, which eliminates the branch and memory pruning available to the original loop-based implementation.
In a few extremely complex benchmarks, the current postprocessing pipeline may not be sufficient to simplify mask manipulations.
One way to address these issues is to design a workload-aware cost model for deciding whether vectorization is desirable.
We leave it as future work.

On 2 benchmarks, \tensorize and \tenspiler produce shallow copies of the inputs instead of the intended deep copies, yielding apparent speedups exceeding $22,000\times$.
\tensorize assumes real number arithmetics and uses real division for a few benchmarks instead of integer division, which is slower.

Compared to \tenspiler, \tool picks up more sum-after-multiplication snippets and replaces them with fast \lstinline|np.tensordot| calls.
The CSE pass also eliminates some redundancy kept by \tenspiler.
\tensorize, aided by the symbolic algebra solver, finds more compact computation routes for some benchmarks.
However, it often uses \lstinline|np.full| to explicitly materialize large arrays of constants, 
while \tool can take advantage of implicit broadcasting.

\bfpara{Comparing with \numba.}
All comparisons so far focus on source-to-source translations.
Next, let us consider whether \tool's vectorization also complements just-in-time (JIT) compilation.
To this end, we evaluate \numba~\cite{lam2015numba}, a popular JIT compiler for Python and NumPy programs, on both the original benchmarks and \tool's emissions.
\numba only supports a limited subset of \numpy. 
Adding the \lstinline|@njit| decorator allows \numba to compile 139 of the 150 original benchmarks, producing a geometric mean speedup of $77.56\times$.
However, some operations in \tool's emissions, such as \lstinline|np.tensordot|, fall outside the subset of \numpy supported by \numba.
We thus apply an additional postprocessing pass that replaces common unsupported operations with \numba-compatible equivalents.
After postprocessing, \numba compiles 122 of \tool's emissions, which achieve a geometric mean speedup of $111.91\times$ over the original, non-compiled benchmarks.
For a direct comparison, we consider the 121 benchmarks for which \numba compiles both the original program and \tool's emission.
On this common subset, applying \numba to \tool's emissions yields a geometric mean speedup of $1.45\times$ over applying \numba directly to the original programs.

\bfpara{Answer to RQ3.}
\tool substantially improves loop-based programs with a geometric mean speedup of 74.83$\times$.
Such vectorization also benefits \numba JIT compilations, bringing a geometric mean speedup of 1.45$\times$.

%% file: figures/evaluations/speedup-comparison.tex
\begin{table}[t]
\centering
\footnotesize
\caption{Geometric mean of speedups.
The two benchmarks solved after minor changes are included.
Benchmarks solved by \tool are a superset of others.
The speedup in row $x$ and column $y$ denotes the speedup of tool $x$ on the benchmarks that can be solved by tool $y$.}
\vspace{-8pt}
\label{tab:speedup}
\begin{tabular}{c c c c c}
\toprule
\textbf{Vectorized by} & \makecell{\tool \\ (144)} & \makecell{\tenspiler \\ (70)} & \makecell{\tensorize \\ (82)} & \makecell{All \\ (59)} \\
\midrule
\tool                  & 74.83$\times$             & 107.16$\times$                & 163.25$\times$                & 106.71$\times$ \\
\tenspiler             & -                         & 110.27$\times$                & -                             & 93.89$\times$  \\
\tensorize             & -                         & -                             & 157.67$\times$                & 101.54$\times$ \\
\bottomrule
\end{tabular}
\vspace{-12pt}
\end{table}

%% file: sections/relatedwork.tex
\section{Related Work} \label{sec:related}

\vspace{-5pt}
\bfpara{DSLs for programming multidimensional arrays.}
Many specialized libraries, DSLs, and IRs support array programming. 
\numpy~\cite{harris2020array} is widely used, underpins libraries such as SciPy~\cite{2020SciPy} and OpenCV-Python~\cite{opencv-library}, and has influenced machine-learning frameworks with similar APIs~\cite{tensorflow2015-whitepaper,jax2018github,chen2015mxnet,paszke2019pytorch}.
Other systems adopt specialized models: Halide~\cite{ragan2013halide} separates image-processing algorithms from schedules, TACO~\cite{kjolstad2017tensor} compiles symbolic index notation into efficient tensor kernels, and StableHLO~\cite{hlo} provides a high-level IR for compiler stacks such as MLIR~\cite{mlir}. 
We target \numpy because its API and programming model are broadly adopted and have influenced many subsequent array-programming systems.

\bfpara{Program synthesis for multidimensional array DSLs.}
Differences in programming models and interfaces make array DSLs difficult to adopt and complicate the migration of legacy code, motivating automated synthesis.
TF-Coder~\cite{shi2022tf} uses enumerative search to synthesize TensorFlow programs from input-output examples, while \ctotaco~\cite{c2taco} lifts array-manipulating C code to TACO. 
\dexter~\cite{blend} combines pattern matching with enumerative search to lift C++ image-processing code to Halide, and \tensorize~\cite{brauckmann2025tensorize} uses symbolic algebra to guide lifting from MLIR Affine to \numpy and MLIR HLO. 
Cobbler~\cite{lubin2024equivalence} uses syntactic canonicalization to synthesize equivalent rewrites of 1-D \numpy programs. 
Built on \metalift~\cite{bhatia2023building}, \tenspiler~\cite{qiu2024tenspiler} searches for logical program summaries to lift C++ code to high-level DSLs. 
Other approaches predict PyTorch API usage with machine learning~\cite{nam2022predictive} or employ LLMs for lifting~\cite{2026joseaccelerating,2025liguided,de2025guess}. 
In contrast, our type-directed rewrites lift loop-based array programs without costly search and are more deterministic and cost-effective than learning-based approaches.

\bfpara{Program rewrites.}
Rewriting is a core compiler optimization technique. 
The Glasgow Haskell Compiler lets programmers express domain-specific optimizations as rewrite rules~\cite{jones2001playing}, while modular rewriting systems and DSLs~\cite{visser1998building,visser2001stratego} separate transformation rules from strategies governing their application. 
This approach has been applied to instruction selection, program interpretation, and constant propagation~\cite{bravenboer2002rewriting,DOLSTRA200257,OLMOS2002156}. 
In high-performance computing, Steuwer et al.~\cite{steuwer2015generating} reshape high-level programs to generate OpenCL~\cite{munshi2009opencl} code, Panyala et al.~\cite{panyala2012use} use term rewriting to migrate applications across platforms, and Elevate~\cite{hagedorn2020achieving} optimizes functional programs for parallel architectures. 
We also use source-to-source rewrites, but for vectorizing \numpy programs.

\bfpara{Predicated execution.}
Predicated expressions have long been used to transform control flow into guarded computation.
Allen et al.~\cite{allen1983conversion} introduce logical guards for this purpose, and Park and Schlansker~\cite{park1991predicated} use predicates to flatten branched loops. 
Subsequent work applies predication to architectures supporting instruction-level parallelism~\cite{mahlke1995comparison,august1997framework,carter1999predicated}. 
Modern compilers similarly eliminate branches through selection and if-conversion: LLVM provides \texttt{llvm.vp.select} for vector selection~\cite{lattner2004llvm}, while GCC performs tree-level if-conversion for vectorization~\cite{stallman2003using}. 
Predicated execution is also supported at the hardware level.
For example, NVIDIA PTX provides predicated instructions~\cite{compute2010ptx}, and divergent warp paths execute serially with inactive threads disabled. 
\tool similarly represents conditional computation using masked arrays. While the idea is inspired by predicated execution, \tool focuses on source-to-source transformation of high-level \numpy programs.

%% file: sections/conclusion.tex
\section{Conclusion}
We presented \tool, a correct-by-construction approach for vectorizing loop-based \numpy programs using inside-out rewrites guided by array types and dataflow analysis.
\tool does not rely on search or symbolic reasoning, so it is consistently fast in practice.
\tool vectorizes 142 of the 150 benchmarks directly, and 2 more benchmarks after minor adaptations.
The average time needed for vectorizing a benchmark is 0.53 seconds, and the resulting programs achieved a geometric mean speedup of 74.83$\times$ over the original implementations.

%% file: sections/semantics.tex
\section{Formal Semantics of the Proposed DSL} \label{sec:sem}
The formal semantics of the proposed DSL is given in Figure~\ref{fig:expr-runtime-maskedness},~\ref{fig:stmt-semantics},~\ref{fig:prog-semantics},~\ref{fig:expr-semantics1}, and~\ref{fig:expr-semantics2}.
Figure~\ref{fig:expr-runtime-maskedness} specifies how to determine the maskedness of an expression at runtime under an evaluation environment.
Figure~\ref{fig:stmt-semantics} specifies how statements in a program are executed.
Figure~\ref{fig:prog-semantics} specifies the rule for evaluating a program in the DSL. 
Finally, Figure~\ref{fig:expr-semantics1} and~\ref{fig:expr-semantics2} specify how expressions in the DSL are evaluated to values under an evaluation environment.

Following the format of the paper, we use $\codeliteral{blue}$ texts for literal code in the proposed DSL.
In our formalism, the state of each variable at runtime is a pair $(\valuev, \maskmu)$. 
Here, $\valuev$ is the value of the variable and $\maskmu$ is the maskedness of the variable.
We use $\venvs$ to represent a program state, which is a stack of maps from variables to such pairs.
We refer to each map in the stack as a scope.
We use $\venv$ to represent a scope.
$\peek{\venvs}, \pop{\venvs},$ and $\push{\venv}{\venvs}$ return the scope at the top of the stack, the stack with the top scope removed, and the stack with a newly pushed-in scope $\venv$.
We use $\venvs \vdash \stmts \tovenvs \venvs'$ to denote that executing the statement $\stmts$ under the environment $\venvs$ results in a new state environment $\venvs'$.
We use $\venvs \vdash \expre \tovalue \valuev$ to denote an expression node $\expre$ is evaluated to a value $\valuev$ under the state $\venvs$.
Similarly, $\venvs \vdash \expre \todynmaskness \maskmu$ denotes that the maskedness of $\expre$ at runtime is $\maskmu$ when evaluated under $\venvs$.
We use $\denot{\fops}(\valuev_1, \ldots, \valuev_n)$ to denote the returned value of a routine $\fops$ with arguments $\valuev_1, \ldots, \valuev_n$.
We use $\dynarraymasked$ to denote arrays that are masked at runtime and $\dynarraynotmasked$ to denote arrays that are unmasked at runtime. 
We use the notation $\venvs \vdash \cmdProgram{x_0, \ldots, x_n}{\stmts}{y} \toprogramresult (\valuev, \maskmu)$ to denote that a program taking arguments $x_0, \ldots, x_n$ returns a value $\valuev$ whose runtime maskedness is $\maskmu$ under the initial environments $\venvs$.

To conveniently define the semantics of our DSL, we also included a few auxiliary operators and defined their semantics. 
The auxiliary operators are technically not part of our DSL.
The rules specifying the semantics of these auxiliary operators have names starting with ``Aux.''
Auxiliary operators' names are in the \textsc{small caps font}. 
The auxiliary indexing operator $\valuev\auxiindex{\integrali}$, which means picking out a top-level element from an array based on the given index, is in \textcolor{purple}{purple}, so it is differentiated from the indexing operator in the DSL.

At a high-level, executing statements under an evaluation environment results in an updated evaluation statement.
Variable binding statements create a new mapping from the variable name to a runtime value-maskedness pair in the top scope (SEM-Bind).
The execution of a branch entails the evaluation of the branch condition first (SEM-If). 
The corresponding branch is run under the evaluation environment with an empty scope pushed to the top.
After the execution of the branch, the top-level scope is removed.
Loops' execution is defined recursively, which is shown by (SEM-For0) and (SEM-Fori).
When the loop bound evaluates to zero, no update to the execution environment is made.
When the evaluation of the loop bound is non-zero, we first try to execute the loop with a decremented loop bound and then execute the loop under the current loop bound. 
Similar to the execution of branches, a new scope with only the state of the loop variable is pushed to the evaluation environment before each iteration's evaluation.
The top scope is removed after each iteration's evaluation.
$\auxiupdate{\venvs}{x}{\valuev}$ is an auxiliary operator that digs into $\venvs$ to find the scope in which $x$ is defined and updates the value bound to $x$ to $\valuev$. 
The semantics of this auxiliary operator is shown by the rules (SEM-Auxupd1) and (SEM-Auxupd2).
The semantics of normal update statements (SEM-Upd) and masked update statements (SEM-MskUpd) are defined in terms of this auxiliary operator.

The rules for evaluating expressions' value and maskedness are pretty straightforward.
For the value evaluations, we use the $\shape{\valuev}$ operator to specify the runtime shapes of the evaluation results.
The semantics of this auxiliary operator is shown by the rules (RTV-AuxSBase) and (RTV-AuxSInd).
We use the auxiliary indexing operator to specify how each individual element in the evaluation result is related to the elements in the operands.
The semantics of this auxiliary indexing operator is shown by the rule (RTV-AuxIdx).
As the denotational semantics for most of the operators in our DSL are well-known, we omit them here and use $\denot{\fops}$ to represent the result of calling these operators.

The rules for maskedness evaluations are also straightforward. 
In general, if one argument of an operator is masked, the evaluation result is also masked.
There are some exceptions.
Masked arrays cannot be indexers of indexing operators, and cannot be arguments to the $\matmulname$ operator. 
Also, the second argument of the $\filledname$ should not be masked.

Finally, the semantics of executing a program in the DSL is simply executing the program body under an evaluation environment that can evaluate all the arguments to the program.
The returned value and its maskedness are then fetched from the updated evaluation environment after the execution of the loop body.
\input{figures/semantics/runtime-maskness}

\input{figures/semantics/semantics}

%% file: figures/semantics/runtime-maskness.tex
\begin{figure*}[!t]
\scriptsize
\[
\begin{array}{c}

\irulelabel{
    \textbf{Intgral } \integrali
}{
    \venvs \vdash \integrali \todynmaskness \dynarraynotmasked
}{
    \textrm{(RTM-Lit)}
}
\quad

\irulelabel{
    \begin{array}{c}
        \textbf{Var } x \quad 
        \venv = \pop{\venvs} \\ 
        x \in \domain{\venv } \quad 
        \venv(x) = (\valuev, \maskmu)
    \end{array}
}{
    \venvs \vdash x \todynmaskness \maskmu
}{
    \textrm{(RTM-Var1)}
}
\quad

\irulelabel{
    \begin{array}{c}
        \textbf{Var } x \quad 
        \venv = \peek{\venvs} \\
        x \notin \domain{\venv} \quad 
        \venvs' = \pop{\venv} \quad
        \venvs' \vdash x \todynmaskness \maskmu
    \end{array}
}{
    \venvs \vdash x \todynmaskness \maskmu
}{
    \textrm{(RTM-Var2)}
}
\\ \ \\

\irulelabel{
    \begin{array}{c}
        \venvs \vdash \expre \todynmaskness \maskmu \quad 
        0 \leq k \leq n \\ 
        \venvs \vdash \expre_k \todynmaskness \dynarraynotmasked
    \end{array}
}{
    \venvs \vdash \arrindex{\expre}{\expre_0, \ldots, \expre_n} \todynmaskness  \maskmu
}{
    \textrm{(RTM-Index)}
}
\quad

\irulelabel{
    \begin{array}{c}
        \fops \in \text{Unary } \opops \cup \text{Binary } \opops \\
        \fops = \makemaskedarrayname \lor (\exists k. 1 \leq k \leq n \land \venvs \vdash \expre_k \todynmaskness \dynarraymasked)
    \end{array}
}{
    \venvs \vdash \op{\expre_1, \ldots, \expre_n} \todynmaskness \dynarraymasked
}{
    \textrm{(RTM-Op1)}
}

\irulelabel{
    \begin{array}{c}
        \fops \in \text{Unary } \opops \cup \text{Binary } \opops \\
        \fops \neq \makemaskedarrayname \\
        1 \leq k \leq n \quad 
        \venvs \vdash \expre_k \todynmaskness \dynarraynotmasked 
    \end{array}
}{
    \venvs \vdash \op{\expre_1, \ldots, \expre_n} \todynmaskness \dynarraynotmasked
}{
    \textrm{(RTM-Op2)}
}
\\ \ \\

\irulelabel{
    \begin{array}{c}
        \venvs \vdash \expre \todynmaskness  \maskmu
    \end{array}
}{
    \venvs \vdash \expand{\expre}{\codetuple{\overline{c}}} \todynmaskness  \maskmu
}{
    \textrm{(RTM-exp)}
}
\quad

\irulelabel{
    \begin{array}{c}
        \venvs \vdash \expre \todynmaskness  \maskmu
    \end{array}
}{
    \venvs \vdash \replicate{\expre}{\codetuple{\overline{c}}}{\codetuple{\overline{\integrali}}} \todynmaskness  \maskmu
}{
    \textrm{(RTM-Rep)}
}
\quad

\irulelabel{
    \venvs \vdash \expre \todynmaskness  \maskmu \quad 
    \fops \in \reduceops
}{
    \venvs \vdash \f{\expre,c} \todynmaskness  \maskmu
}{
    \textrm{(RTM-Rdce)}
}
\\ \ \\

\irulelabel{
    \begin{array}{c}
        \venvs \vdash \expre_1 \todynmaskness \dynarraynotmasked \quad 
        \venvs \vdash \expre_2 \todynmaskness \dynarraynotmasked
    \end{array}
}{
    \venvs \vdash \matmul{\expre_1}{\expre_2} \todynmaskness \dynarraynotmasked
}{
    \textrm{(RTM-Matmul)}
}
\quad

\irulelabel{
    \begin{array}{c}
        \venvs \vdash \expre_1 \todynmaskness  \maskmu \quad \venvs \vdash \expre_2 \todynmaskness \dynarraynotmasked
    \end{array}
}{
    \venvs \vdash \filled{\expre_1}{\expre_2} \todynmaskness \dynarraynotmasked
}{
    \textrm{(RTM-Fill)}
}

\irulelabel{
    \begin{array}{c}
        \expre = \ones{\codetuple{\overline{\integrali}}}
    \end{array}
}{
    \venvs \vdash \expre \todynmaskness \dynarraynotmasked
}{
    \textrm{(RTM-Ones)}
}

\irulelabel{
    \begin{array}{c}
        \expre = \arange{\integrali}
    \end{array}
}{
    \venvs \vdash \expre \todynmaskness \dynarraynotmasked
}{
    \textrm{(RTM-Aran)}
}
\end{array}
\]
\caption{The rules for evaluating maskedness of expressions at runtime.}
\label{fig:expr-runtime-maskedness}
\end{figure*}

%% file: figures/semantics/semantics.tex
\begin{figure*}[t!]
\scriptsize
\[
\begin{array}{c}

\irulelabel{
    
}{
    \venvs \vdash \cmdSkip \tovenvs \venvs
}{
    \textrm{(SEM-Skp)}
}
\quad

\irulelabel{
    \begin{array}{c}
        \venv = \peek{\venvs} \quad 
        \venvs \vdash \expre \tovalue \valuev \quad
        \venvs \vdash \expre \todynmaskness \maskmu \\ 
        \venv' = \venv[x \mapsto (\valuev, \maskmu)] \quad
        \venvs' = \push{\venv'}{\pop{\venvs}}
    \end{array}
}{
    \venvs \vdash \cmdAssign{x}{\expre} \tovenvs \venvs'
}{
    \textrm{(SEM-Bind)}
}
\quad

\irulelabel{
    \begin{array}{c}
        \venvs \vdash \expre \tovalue \valuev \quad 
        \venvs \vdash \expre \todynmaskness \dynarraynotmasked \quad
        \venvs' = \push{\emptyset}{\venvs} \\
        \venvs' \vdash \stmts_t \tovenvs \venvs''_t \quad 
        \venvs' \vdash \stmts_e \tovenvs \venvs''_e \\
        \venvs'' = \ite{v = 0}{\pop{\venvs''_e}}{\pop{\venvs''_t}}
    \end{array}
}{
    \venvs \vdash \cmdITE{\expre}{\stmts_t}{\stmts_e} \tovenvs \venvs''
}{
    \textrm{(SEM-If)}
}
\\ \ \\

\irulelabel{
    \begin{array}{c}
        \venvs \vdash \integrali \tovalue 0 
    \end{array}
}{
    \venvs \vdash \cmdFor{\name}{\codeliteral{0 \ldots} \integrali}{\stmts} \tovenvs \venvs
}{
    \textrm{(SEM-For0)}
}
\quad

\irulelabel{
    \begin{array}{c}
        \venvs \vdash \integrali \tovalue d \quad
        d \geq 1 \quad
        \venvs \vdash \cmdFor{\name}{\codeliteral{0 \ldots} d-1}{\stmts} \tovenvs \venvs' \\
        \venvs'' = \push{\{x \mapsto (d-1, \dynarraynotmasked) \}}{\venvs'} \quad
        \venvs'' \vdash \stmts \tovenvs \venvs''' 
    \end{array}
}{
    \venvs \vdash \cmdFor{\name}{\codeliteral{0 \ldots} \integrali}{\stmts} \tovenvs \pop{\venvs'''}
}{
    \textrm{(SEM-Fori)}
}
\quad

\irulelabel{
    \begin{array}{c}
        \venvs \vdash \stmts_1 \tovenvs \venvs' \\
         \venvs' \vdash \stmts_2 \tovenvs \venvs''
    \end{array}
}{
    \venvs \vdash \stmts_1 \codesemicolon \stmts_2  \tovenvs \venvs''
}{
    \textrm{(SEM-Seq)}
}
\\ \ \\

\irulelabel{
    \begin{array}{c}
        \venv = \peek{\venvs} \quad
        x \in \domain{\venv} \quad 
        \venv(x) = (\valuev', \maskmu) \\
        \venvs'' = \push{\venv[x \mapsto (\valuev,\maskmu)]}{\pop{\venvs}}
    \end{array}
}{
    \auxiupdate{\venvs}{x}{\valuev} = \venvs''
}{
    \textrm{(SEM-Auxupd1)}
}
\quad

\irulelabel{
    \begin{array}{c}
        \venv = \peek{\venvs} \quad
        x \notin \domain{\venv} \\ 
        \auxiupdate{\pop{\venvs}}{x}{\valuev} = \venvs'
    \end{array}
}{
    \auxiupdate{\venvs}{x}{\valuev} = \push{\venv}{\venvs'}
}{
    \textrm{(SEM-Auxupd2)}
}
\\ \ \\

\irulelabel{
    \begin{array}{c}
        0 \leq t \leq m+1 \quad
        \venvs \vdash x \tovalue \valuev \quad 
        \venvs \vdash \expre_t \tovalue v_t \quad
        \broadcast(\shape{\valuev_0}, \ldots, \shape{\valuev_{m+1}}) = (a_n, \ldots, a_0) \quad
        \shape{\valuev} = (b_m, \ldots, b_0) \\ 
        \shape{\valuev'} = (b_m, \ldots, b_0) \quad
        \broadcastto{\valuev_t}{(a_n, \ldots, a_0)} = v'_t \quad

        \forall i_n. \ldots \forall i_0. \bigwedge_{k=0}^n 0 \leq i_k \leq a_k \rightarrow \bigwedge_{j=0}^{m+1} \valuev'_{j}\auxiindex{i_n}\ldots\auxiindex{i_0} \neq \masked \vspace{0.5em}\\ 
        \begin{array}{l}
            \forall i_m. \ldots \forall i_0. \bigwedge^m_{j=0} 0 \leq i_j < b_j \rightarrow \\
            (
                (
                    v\auxiindex{i_m}\ldots\auxiindex{i_0} = \masked \lor (\forall k_n. \ldots k_0. \bigwedge_{l=0}^n 0 \leq k_l < a_l \rightarrow \bigvee^m_{p=0} \valuev'_p\auxiindex{k_n}\ldots\auxiindex{k_0} \neq i_p )
                ) \rightarrow 
                    \valuev'\auxiindex{i_m}\ldots\auxiindex{i_0} = \valuev\auxiindex{i_m}\ldots\auxiindex{i_0}
            ) \land \\
            (
                (
                    \valuev\auxiindex{i_m}\ldots\auxiindex{i_0} \neq \masked \land (\exists k_n. \ldots \exists k_0. \bigwedge_{l=0}^n 0 \leq k_l < a_l \bigwedge^m_{p=0} \valuev'_p\auxiindex{k_n}\ldots\auxiindex{k_0} = i_p )
                ) \rightarrow \\
                \valuev'\auxiindex{i_m}\ldots\auxiindex{i_0} = \valuev'_{m+1}\auxiindex{q_n}\ldots\auxiindex{q_0} \bigwedge_{l=0}^n 0 \leq q_l < a_l \bigwedge^m_{p=0} \valuev'_p\auxiindex{q_n}\ldots\auxiindex{q_0} = i_p 
            )
        \vspace{4pt}
        \end{array}
    \end{array}
}{
    \venvs \vdash \codeupdate{\arrindex{x}{e_m, \ldots, e_0}}{e_{m+1}} \tovenvs \auxiupdate{\venvs}{x}{\valuev'}
}{\textrm{(SEM-Upd)}}

\\ \ \\

\irulelabel{
    \begin{array}{c}
        0 \leq t \leq m + 1 \quad
        \venvs \vdash x \tovalue v \quad 
        \venvs \vdash \expre_t \tovalue \valuev_t \quad
        \broadcast(\shape{\valuev_0}, \ldots, \shape{\valuev_{m+1}}) = (a_n, \ldots, a_0) \quad

        \shape{\valuev} = (b_m, \ldots, b_0) \\  
        \shape{\valuev'} = (b_m, \ldots, b_0) \quad 

        \broadcastto{\valuev_t}{(a_n, \ldots,a_0)} = \valuev'_t \quad

        \forall i_n. \ldots \forall i_0. \bigwedge_{k=0}^n 0 \leq i_k \leq a_k \rightarrow \bigwedge_{j=0}^{m} \valuev'_{j}\auxiindex{i_n}\ldots\auxiindex{i_0}  \neq \masked \\

        \venvs \vdash \logicalor{\hat{\expre}_0}{\logicalor{\hat{\expre}_1}{...\logicalor{\hat{\expre}_{r-1}}{\hat{\expre}_r}}} \tovalue \hat{\valuev} \quad
        \broadcastto{\hat{\valuev}}{(a_n, \ldots, a_0)} = \hat{\valuev}' \vspace{0.5em}\\ 

        \begin{array}{l}
            \venvs \vdash \forall i_m. \ldots \forall i_0. \bigwedge^m_{j=0} 0 \leq i_j < b_j \rightarrow \\
            (
                (
                    \valuev \auxiindex{i_m}\ldots\auxiindex{i_0} = \masked \lor (\forall k_n. \ldots k_0. \bigwedge_{l=0}^n 0 \leq k_l < a_l \rightarrow 
                    \bigvee^m_{p=0} \valuev'_p\auxiindex{k_n}\ldots\auxiindex{k_0} \neq i_p \lor 
                    \hat{\valuev}'\auxiindex{k_n}\ldots\auxiindex{k_0} \in \{0, \masked \} \lor 
                    \valuev'_{m+1}\auxiindex{k_n}\ldots\auxiindex{k_0} = \masked)
                ) \rightarrow \\ 
                \valuev'\auxiindex{i_m}\ldots\auxiindex{i_0} = \valuev \auxiindex{i_m}\ldots\auxiindex{i_0}
            ) \land \\
            (
                (
                    \valuev \auxiindex{i_m}\ldots\auxiindex{i_0} \neq \masked \land (\exists k_n. \ldots \exists k_0. \bigwedge_{l=0}^n 0 \leq k_l < a_l 
                    \bigwedge^m_{p=0} \valuev'_p\auxiindex{k_n}\ldots\auxiindex{k_0} = i_p \land 
                    \hat{\valuev}'\auxiindex{k_n}\ldots\auxiindex{k_0} \notin \{0, \masked\} \land 
                    \valuev'_{m+1}\auxiindex{k_n}\ldots\auxiindex{k_0} \neq \masked)
                ) \rightarrow \\
            \valuev'\auxiindex{i_m}\ldots\auxiindex{i_0} = \valuev'_{m+1}\auxiindex{q_n}\ldots\auxiindex{q_0} 

            \bigwedge_{l=0}^n 0 \leq q_l < a_l \bigwedge^m_{p=0} \valuev'_p\auxiindex{q_n}\ldots\auxiindex{q_0} = i_p \land 
            \hat{\valuev}'\auxiindex{q_n} \ldots \auxiindex{q_0} \notin \{0, \masked\} \land 
            \valuev'_{m+1}\auxiindex{q_n}\ldots\auxiindex{q_0} \neq \masked) 
        \vspace{4pt}
        \end{array} 
    \end{array}
}{
    \venvs \vdash \codeupdate{\lhsmakemaskedarray{\ldots\lhsmakemaskedarray{\arrindex{x}{e_m, \ldots, e_0}}{\hat{e}_0}\ldots}{\hat{e}_r}}{ e_{m+1} } \tovenvs \auxiupdate{\venvs}{x}{\valuev'}
}{\textrm{(SEM-MskUpd)}}
\end{array}
\]
\caption{The semantics of the statements.}
\label{fig:stmt-semantics}
\end{figure*}

\input{figures/semantics/program-sem}

\begin{figure*}[]
\scriptsize
\[
\begin{array}{c}
\irulelabel{
    \begin{array}{c}
        \textbf{Integer literal } c
    \end{array}
}{
    \venvs \vdash c \tovalue c
}{
    \textrm{(RTV-Lit)}
}

\quad 

\irulelabel{
    \begin{array}{c}
        \textbf{Variable } x \quad 
        \venv = \peek{\venvs} \\
        x \in \domain{\venv} \quad 
        \venv(x) = (\valuev, \maskmu)
    \end{array}
}{
    \venvs \vdash x \tovalue \valuev
}{
    \textrm{(RTV-Var1)}
}
\quad

\irulelabel{
    \begin{array}{c}
        \textbf{Variable } x \quad 
        \venv = \peek{\venvs} \quad
        x \notin \domain{\venv} \\ 
        \venvs' = \pop{\venvs} \quad 
        \venvs' \vdash x \tovalue \valuev
    \end{array}
}{
    \venvs \vdash x \tovalue \valuev
}{
    \textrm{(RTV-Var2)}
}
\\ \ \\

\irulelabel{
    \begin{array}{c}
        \valuev \in \mathbb{R} \cup \{\masked\}
    \end{array}
}{
    \shape{\valuev} = ()
}{
    \textrm{(RTV-AuxSBase)}
}
\quad

\irulelabel{
    \begin{array}{c}
        \valuev = (\valuev_1, \ldots, \valuev_m) \quad 
        1 \leq j \leq m \\
        \shape{v_j} = (a_n,\ldots, a_0)
    \end{array}
}{
    \shape{\valuev} = (m, a_n, \ldots, a_0)
}{
    \textrm{(RTV-AuxSInd)}
}
\quad

\irulelabel{
    \begin{array}{c}
        \venvs \vdash \expre \tovalue \valuev \quad
        \shape{\valuev} = (a_0, \ldots, a_n) \\ 
        0 \leq c \leq n
    \end{array}
}{
    \venvs \vdash \codeshapeaccess{\expre}{c} \tovalue a_c
}{
    \textrm{(RTV-SAcc)}
}
\\ \ \\

\irulelabel{
    \begin{array}{c}
        \valuev = (\valuev_0, \ldots, \valuev_n) \quad
        n \geq 0 \\ 
        0 \leq c \leq n \quad 
        c \in \mathbb{Z}
    \end{array}
}{
    \valuev \auxiindex{c} = v_{c}
}{
    \textrm{(RTV-AuxIdx)}
}
\quad

\irulelabel{
    \begin{array}{c}
        \shape{\valuev} = (a_m, \ldots, a_0) \quad
        \broadcast((a_m, \ldots, a_0), (b_n, \ldots, b_0)) = (b_n, \ldots, b_0) \\
        \shape{\valuev'} = (b_n, \ldots, b_0) \quad 
        0 \leq j \leq m \quad
        c_j = \ite{a_j=1}{0}{1} \\
        \forall i_n. \ldots \forall i_0. 
                \bigwedge_{j=0}^n 0 \leq i_j < b_j
                \rightarrow 
                \valuev' \auxiindex{i_n} \ldots \auxiindex{i_0} = \valuev \auxiindex{c_m * i_m} \ldots \auxiindex{c_0 * i_0}
         \\
    \end{array}
}{
    \begin{array}{c}
         \broadcastto{\valuev}{(b_n, \ldots, b_0)} = \valuev'  \\
    \end{array}
}{
    \textrm{(RTV-AuxBT)}
}
\\ \ \\

\irulelabel{
    \begin{array}{c}
        0 \leq l \leq m \quad 
        \venvs \vdash \expre_l \tovalue \valuev_l \quad
        \broadcast(\shape{\valuev_0}, \ldots, \shape{\valuev_m}) = (a_n, \ldots, a_0) \quad
        \venvs \vdash \expre \tovalue \valuev \quad
        \shape{\valuev} = (b_m, \ldots, b_0) \\
        \shape{\valuev'} = (a_n, \ldots, a_0) \quad

        \broadcastto{\valuev_l}{(a_n, \ldots, a_0)} = \valuev'_l \quad

        \forall i_n. \ldots \forall i_0. \bigwedge_{k=0}^n 0 \leq i_k < a_k \rightarrow \bigwedge_{j=0}^m \valuev'_{j}\auxiindex{i_n}\ldots\auxiindex{i_0} \in \mathbb{N} \cup \{0\}\\

        \forall i_n. \ldots \forall i_0. \bigwedge_{j=0}^n 0 \leq i_j < a_j \rightarrow \valuev' \auxiindex{i_n} \ldots \auxiindex{i_0} = \valuev \auxiindex{\valuev'_m\auxiindex{i_n} \ldots \auxiindex{i_0}} \ldots \auxiindex{\valuev'_0\auxiindex{i_n} \ldots \auxiindex{i_0}}
    \end{array}
}{
    \venvs \vdash \arrindex{\expre}{\expre_m, \ldots, \expre_0} \tovalue \valuev'
}{\textrm{(RTV-Index)}}
\\  \ \\

\irulelabel{
    \begin{array}{c}
        \fops \in \text{Unary } \opops \cup \text{Binary } \opops \quad
        1 \leq l \leq m \quad 
        \venvs \vdash \expre_l \tovalue \valuev_l \quad 
        \broadcast(\shape{\valuev_1}, \ldots, \shape{\valuev_m}) = (a_n, \ldots, a_0) \quad
        \broadcastto{\valuev_l}{(a_n, \ldots, a_0)} = \valuev'_l \\ 
        \shape{v} = (a_n, \ldots, a_0) \quad
        \forall i_n. \ldots \forall i_0. 
                \bigwedge_{j=0}^n 0 \leq i_j < a_j \rightarrow
                    \valuev \auxiindex{i_n} \ldots \auxiindex{i_0} = \denot{\fops} (\valuev'_{1}\auxiindex{i_n} \ldots \auxiindex{i_0}, \ldots, \valuev'_{m}\auxiindex{i_n} \ldots \auxiindex{i_0})
        \\
    \end{array}
}{
    \venvs \vdash \f{\expre_1, ..., \expre_m} \tovalue \valuev
}{\textrm{(RTV-Op)}}
\\  \ \\

\irulelabel{
    \begin{array}{c}
        \fops \in \reduceops \quad 
        \venvs \vdash \expre \tovalue \valuev \quad 
        \shape{\valuev} = (a_0, \ldots, a_n) \quad
        0 \leq c \leq n \quad 
        \shape{\valuev'} = (a_0, \ldots, a_{c-1}, a_{c+1}, \ldots a_n) \vspace{0.5em} \\
        \begin{array}{l}
            \forall i_0. \ldots \forall i_{c-1}. \forall i_{c+1}. \ldots \forall i_n. 
                \bigwedge_{j\in(0\ldots c-1, c+1, \ldots, n)} 0 \leq i_j < a_j \rightarrow \\
                \valuev' \auxiindex{i_0}\ldots\auxiindex{i_{c-1}}\auxiindex{i_{c+1}}\ldots\auxiindex{i_n} = 
            \denot{\fops}(\valuev \auxiindex{i_0} \ldots \auxiindex{i_{c-1}}\auxiindex{0}\auxiindex{i_{c+1}} \ldots \auxiindex{i_n}, \ldots, \auxiindex{i_0} \ldots\auxiindex{i_{c-1}}\auxiindex{a_c-1}\auxiindex{i_{c+1}} \ldots \auxiindex{i_n} )
        \end{array}
    \end{array}
}{
    \venvs \vdash \f{\expre,c} \tovalue \valuev'
}{ \textrm{(RTV-Rdce)} }
\\ \ \\

\irulelabel{
    \begin{array}{c}
        \venvs \vdash \expre \tovalue \valuev \quad 
        \shape{\valuev} = (a_0, \ldots a_n) \quad
        \shape{\valuev'} = (b_0, \ldots, b_{n+1}) \quad
        0 \leq c \leq n + 1 \quad
        0 \leq i \leq n + 1 \quad
        b_i = \ite{i = c}{1}{\ite{i < c}{a_i}{a_{i-1}}} \vspace{4pt} \\
        \begin{array}{l}
            \forall k_0. \ldots \forall k_{n+1}. \bigwedge^{n+1}_{m=0} 0 \leq k_m < b_m \rightarrow 
            \forall l. j_l = \ite{l < c}{k_l}{k_{l+1}}\land \valuev'\auxiindex{k_0} \ldots \auxiindex{k_{n+1}} = \valuev\auxiindex{j_0} \ldots \auxiindex{j_n}   
        \end{array}
        
    \end{array}
}{
    \venvs \vdash \expand{\expre}{\codetupleone{c}} \tovalue \valuev'
}{
    \textrm{(RTV-ExpBase)}
}
\\ \ \\

\irulelabel{
    \begin{array}{c}
        \forall k. 0 < k \leq m \rightarrow c_k > c_{k-1} \\
        \venvs \vdash \expand{\expand{\expre}{\codetuple{c_0, \ldots, c_{m-1}}}}{\codetupleone{c_m}}\tovalue \valuev
    \end{array}
}{
    \venvs \vdash \expand{\expre}{\codetuple{c_0, \ldots, c_m}} \tovalue \valuev
}{
    \textrm{(RTV-ExpInd)}
}
\quad

\irulelabel{
    \begin{array}{c}
        0 \leq j \leq n \quad 
        \venvs \vdash \integrali_j \tovalue c_j \\
        \venvs \vdash \shape{\valuev} \tovalue (c_0, \ldots, c_n) \\
        \forall k_0. \ldots \forall k_n. \bigwedge_{l=0}^n 0 \leq k_l \leq c_l \rightarrow \valuev\auxiindex{k_0} \ldots \auxiindex{k_n} = 1
    \end{array}
}{
   \venvs \vdash \ones{\codetuple{\integrali_0, \ldots, \integrali_n}} \tovalue \valuev
}{
    \textrm{(RTV-Ones)}
}

\\ \ \\

\irulelabel{
    \begin{array}{c}
        \venvs \vdash \integrali \tovalue c \quad 
        \shape{v} = (c,) \quad 
        \forall j. 0 \leq j < c \rightarrow \valuev\auxiindex{j} = j 
    \end{array}
}{
    \venvs \vdash \arange{\integrali} \tovalue \valuev
}{
    \textrm{(RTV-Aran)}
}
\end{array}
\]
\caption{The rules for evaluating values of expressions at runtime.}
\label{fig:expr-semantics1}

\end{figure*}

\begin{figure*}[]
\scriptsize
\[
\begin{array}{c}
\irulelabel{
    \begin{array}{c}
        \venvs \vdash \expre \tovalue \valuev \quad
        0 \leq c \leq n + 1 \quad 
        \shape{\valuev} = (a_0, \ldots a_n) \quad
        \shape{\valuev'} = (b_0, \ldots, b_{n+1}) \\
        \venvs \vdash \integralj \tovalue d \quad
        0 \leq i \leq n + 1 \quad
        b_i = \ite{i = c}{d}{\ite{i < c}{a_i}{a_{i-1}}} \\
        \begin{array}{l}
            \forall k_0. \ldots \forall k_{n+1}. \bigwedge^{n+1}_{m=0} 0 \leq k_m < b_m \rightarrow 
            \forall l. j_l = \ite{l < c}{k_l}{k_{l+1}}\land \valuev'\auxiindex{k_0} \ldots \auxiindex{k_{n+1}} = \valuev\auxiindex{j_0} \ldots \auxiindex{j_n}   
        \end{array} 
    \end{array}
}{
    \venvs \vdash \replicate{e}{\codetupleone{c}}{\codetupleone{\integralj}} \tovalue \valuev'
}{
    \textrm{(RTV-RepBase)}
}
\\ \ \\

\irulelabel{
    \begin{array}{c}
        \forall k. 0 < k \leq m \rightarrow c_k > c_{k-1} \quad
        \venvs \vdash \replicate{\replicate{\expre}{\codetuple{c_0, \ldots, c_{m-1}}}{\codetuple{\integralj_0, \ldots, \integralj_{m-1}}}}{\codetupleone{c_m}}{\codetupleone{\integralj_m}} \tovalue \valuev
    \end{array}
}{
    \venvs \vdash \replicate{\expre}{\codetuple{c_0, \ldots, c_m}}{\codetuple{\integralj_0, \ldots, \integralj_m}}\tovalue \valuev
}{
    \textrm{(RTV-RepInd)}
}
\\  \ \\

\irulelabel{
    \begin{array}{c}
        \venvs \vdash \expre_1 \tovalue \valuev_1 \quad 
        \venvs \vdash \expre_2 \tovalue \valuev_2 \quad 
        \shape{\valuev_1} = (a_m, \ldots, a_0) \quad 
        \shape{\valuev_2} = (b_n, \ldots,b_2, a_0, b_0) \quad
        m \geq 1 \quad 
        n \geq 1 \quad
        \shape{\valuev} = (c_l, \ldots, c_0) \\
        c_1 = a_1 \quad
        c_0 = b_0 \quad
        \broadcast((a_m, \ldots, a_2), (b_n,\ldots, b_2)) = (c_l, \ldots c_2) \quad
        \broadcastto{\valuev_1}{(c_l, \ldots, c_2, a_1, a_0)} = \valuev'_1 \\ 
        \broadcastto{\valuev_2}{(c_l, \ldots, c_2, a_0, b_0)} = \valuev'_2 \quad
        \forall i_l. \ldots \forall i_0. \bigwedge_{j=0}^l 0 \leq i_j < c_j \rightarrow \valuev\auxiindex{i_l}\ldots\auxiindex{i_0} = \sum^{a_0 - 1}_{k=0} \valuev'_1\auxiindex{i_l}\ldots\auxiindex{i_1}\auxiindex{k} * \valuev'_2\auxiindex{i_l}\ldots\auxiindex{i_2}\auxiindex{k}\auxiindex{i_0}
    \end{array}
}{
    \venvs \vdash \matmul{\expre_1}{\expre_2} \tovalue \valuev
}{\textrm{(RTV-Matmul)}}
\\  \ \\

\irulelabel{
    \begin{array}{c}
        \venvs \vdash \expre_1 \tovalue \valuev_1 \quad
        \shape{\valuev_1} = (a_m, \ldots, a_0) \quad 
        \shape{\valuev} = (a_m, \ldots, a_0) \quad
         \venvs \vdash e_2 \tovalue c \quad 
        \shape{c} = () \quad c \neq \masked \vspace{0.5em}\\
         \begin{array}{l}
              \forall i_m. \ldots \forall i_0. \bigwedge_{j=0}^m 0 \leq i_j < a_j  \rightarrow ((\valuev_1\auxiindex{i_m}\ldots\auxiindex{i_0} = \masked \rightarrow \valuev\auxiindex{i_m}\ldots\auxiindex{i_0} = c) \land 
              (\valuev_1\auxiindex{i_m}\ldots\auxiindex{i_0} \neq \masked \rightarrow \valuev\auxiindex{i_m}\ldots\auxiindex{i_0} = \valuev_1\auxiindex{i_m}\ldots\auxiindex{i_0}))
         \end{array}
    \end{array}
}{
    \venvs \vdash \filled{\expre_1}{\expre_2} \tovalue \valuev
}{
    \textrm{(RTV-Fill)}
}
\end{array}
\]
\caption{The rules for evaluating values of expressions at runtime, cont.}
\label{fig:expr-semantics2}

\end{figure*}

%% file: figures/semantics/program-sem.tex
\begin{figure}[]
\scriptsize
\[
\begin{array}{c}
    \irulelabel{
    \begin{array}{c}
        0 \leq k \leq n \quad 
        \venvs \vdash x_k \tovalue \valuev_k \quad
        \venvs \vdash x_k \todynmaskness \maskmu_k \\ 
        \venvs \vdash \stmts \tovenvs \venvs' \quad 
        \venvs' \vdash y \tovalue \valuev \quad 
        \venvs' \vdash y \todynmaskness \maskmu
    \end{array}
}{
    \venvs \vdash \cmdProgram{x_0, \ldots, x_n}{\stmts}{y} \toprogramresult (\valuev, \maskmu)
}
{
    \textrm{(SEM-Prog)}
}
\end{array}
\]
\caption{The semantics of programs.}
\label{fig:prog-semantics}
\end{figure}

%% file: sections/type.tex
\section{Type Inference Rules}\label{sec:type-complete}
Figure~\ref{fig:shape-expr} shows the complete set of rules for static shape inference.
Figure~\ref{fig:mask-expr} shows the complete set of rules for static maskedness inference.

The type environment $\senv$ is a map from variable names to pairs of shape and maskedness.
Here, $\senv \vdash \expre \shapetypeof \shapes$ and $\senv \vdash \expre \masktypeof \maskm$ denote that $\expre$ is inferred to have a static shape $\shapes$ and a static maskedness $\maskm$.
We use the $\arraymasked$ symbol to denote masked arrays and use the $\arraynotmasked$ symbol to denote unmasked arrays.
We only support typing arrays with a fixed number of axes, though the dimensionality of each axis may be dynamic.
Therefore, static shapes are presented by tuples of symbols and integers.
We use the $\integraleval{\integrali}$ notation to represent evaluating an integral expression statically, which may return an integer or a symbol.

\input{figures/types/type-complete}

%% file: figures/types/type-complete.tex
\begin{figure*}[t!]
    \scriptsize
\[
\begin{array}{c}

\irulelabel{
    \textbf{Integral } \integrali
}{
    \shapetenv \vdash \integrali \shapetypeof ()
}{
    \textrm{(S-Lit)}
}

\irulelabel{
    \textbf{Var } x \quad \shapetenv(x) = (\shapes, m)
}{
    \shapetenv \vdash x \shapetypeof \shapes
}{
    \textrm{(S-Var)}
}

\irulelabel{
    \begin{array}{c}
        0 \leq i \leq n \quad \shapetenv \vdash \expre_i \shapetypeof \shapes_i \\
        \broadcast(\shapes_0, \ldots, \shapes_n) = \shapes
    \end{array}
}{
    \shapetenv \vdash \arrindex{\expre}{\expre_0, \ldots, \expre_n} \shapetypeof \shapes
}{
    \textrm{(S-Idx)}
}

\irulelabel{
    \begin{array}{c}
        1 \leq i \leq n \quad \shapetenv \vdash \expre_i \shapetypeof \shapes_i \\
        \broadcast(\shapes_1, \ldots, \shapes_n) = \shapes \quad
        \fops \in \text{Unary } \opops \cup \text{Binary } \opops
    \end{array}
}{
    \shapetenv \vdash \f{\expre_1, \ldots, \expre_n} \shapetypeof \shapes
}{
    \textrm{(S-Op)}
}
\\ \ \\

\irulelabel{
    \begin{array}{c}
        \shapetenv \vdash \expre \shapetypeof (a_0, \ldots, a_n) \quad 0 \leq c \leq n \quad
        \fops \in \reduceops
    \end{array}
}{
    \shapetenv \vdash \f{\expre, c} \shapetypeof (a_0, \ldots, a_{c-1}, a_{c+1}, \ldots, a_n)
}{
    \textrm{(S-Rdce)} 
}
\quad

\irulelabel{
    \begin{array}{c}
        \shapetenv \vdash \expre \shapetypeof (a_0, \ldots, a_n) \quad
        0 \leq c \leq n + 1 \quad \shapes = (b_0, \ldots, b_{n+1}) \\
        0 \leq j \leq n + 1 \quad
        b_j = \ite{j = c}{1}{\ite{j < c}{a_j}{a_{j-1}}}
    \end{array}
}{
    \shapetenv \vdash \expand{\expre}{\codetupleone{c}} \shapetypeof \shapes
}{
    \textrm{(S-ExpBase)}
}
\\ \ \\

\irulelabel{
    \begin{array}{c}
        \forall j. 0 < j \leq m \rightarrow c_j > c_{j-1} \\
        \shapetenv \vdash \expand{\expand{\expre}{\codetuple{c_0, \ldots, c_{m-1}}}}{\codetupleone{c_m}}\shapetypeof \shapes
    \end{array}
}{
    \shapetenv \vdash \expand{\expre}{\codetuple{c_0, \ldots, c_m}} \shapetypeof \shapes
}{
    \textrm{(S-ExpInd)}
}

\irulelabel{
    \begin{array}{c}
        \shapetenv \vdash \expand{\expre}{\codetupleone{c}} \shapetypeof (a_0, \ldots, a_n) \quad
        \integraleval{\integrali} = d \\
        0 \leq j \leq n \quad
         b_j = \ite{j = c}{d}{a_j}
    \end{array}
}{
    \shapetenv \vdash \replicate{\expre}{\codetupleone{c}}{\codetupleone{\integrali}}\shapetypeof (b_0, \ldots, b_n)
}{
    \textrm{(S-RepBase)}
}
\\ \ \\

\irulelabel{
    \begin{array}{c}
        \forall j. 0 < j \leq m \rightarrow c_j > c_{j-1} \\
        \shapetenv \vdash \replicate{\replicate{\expre}{\codetuple{c_0, \ldots, c_{m-1}}}{\codetuple{\integrali_0, \ldots, \integrali_{m-1}}}}{\codetupleone{c_m}}{\codetupleone{\integrali_m}} \shapetypeof \shapes
    \end{array}
}{
    \shapetenv \vdash \replicate{\expre}{\codetuple{c_0, \ldots, c_m}}{\codetuple{\integrali_0, \ldots, \integrali_m}} \shapetypeof \shapes
}{
    \textrm{(S-RepInd)}
}

\irulelabel{
    \begin{array}{c}
        \integraleval{\integrali} = c 
    \end{array}
}{
    \shapetenv \vdash \arange{\integrali} \shapetypeof (c,)
}{
    \textrm{(S-Aran)}
}
\\ \ \\

\irulelabel{
    \begin{array}{c}
        \shapetenv \vdash \expre_1 \shapetypeof (a_m, \ldots, a_0) \quad
        \shapetenv \vdash \expre_2 \shapetypeof (b_n, \ldots, b_2, a_0, b_0) \\
        m \geq 1 \quad n \geq 1 \\
        \broadcast((a_m, \ldots, a_2), (b_n, \ldots, b_2)) = (c_l, \ldots, c_2)
    \end{array}
}{
    \shapetenv \vdash \matmul{\expre_1}{\expre_2} \shapetypeof (c_l, \ldots, c_2,a_1,b_0)
}
{
    \textrm{(S-Matmul)}
}

\irulelabel{
    \begin{array}{c}
        \shapetenv \vdash \expre_1 \shapetypeof \shapes \quad \shapetenv \vdash \expre_2\shapetypeof()
    \end{array}
}{
    \shapetenv \vdash \filled{\expre_1}{\expre_2} \shapetypeof \shapes
}{
    \textrm{(S-Fill)}
}
\irulelabel{
    \begin{array}{c}
        0 \leq j \leq n \quad
        \integraleval{\integrali_j} = c_j
    \end{array}
}{
    \shapetenv \vdash \ones{\codetuple{\integrali_0, \ldots, \integrali_n}} \shapetypeof (c_0,\ldots, c_n)
}{
    \textrm{(S-Ones)}
}

\end{array}
\]
\caption{Rules for analyzing shapes of expressions. $\integraleval{\integrali}$ means symbolically evaluating the integral expression $\integrali$ under environment $\shapetenv$.}
\label{fig:shape-expr}
\end{figure*}

\begin{figure*}[t!]
    \scriptsize
\[
\begin{array}{c}

\irulelabel{
    \textbf{Intgral } \integrali
}{
    \masktenv \vdash \integrali \masktypeof \arraynotmasked
}{
    \textrm{(M-Lit)}
}

\irulelabel{
    \begin{array}{c}
        \textbf{Var } x \\ \masktenv(x) = (\shapes, \maskm)
    \end{array}
}{
    \masktenv \vdash x \masktypeof \maskm
}{
    \textrm{(M-Var)}
}

\irulelabel{
    \begin{array}{c}
        \masktenv \vdash \expre \masktypeof \maskm \quad 
        0 \leq k \leq n \\ 
        \masktenv \vdash \expre_k \masktypeof \arraynotmasked
    \end{array}
}{
    \masktenv \vdash \arrindex{\expre}{\expre_0, \ldots, \expre_n} \masktypeof \maskm
}{
    \textrm{(M-Idx)}
}

\irulelabel{
    \begin{array}{c}
        \fops \in \text{Unary } \opops \cup \text{Binary } \opops \\
        \fops = \makemaskedarrayname \lor (\exists k. 1 \leq k \leq n \land \masktenv \vdash \expre_k \masktypeof \arraymasked)
    \end{array}
}{
    \masktenv \vdash \f{\expre_1, \ldots, \expre_n} \masktypeof \arraymasked
}{
    \textrm{(M-Op1)}
}
\\ \ \\

\irulelabel{
    \begin{array}{c}
        \fops \in \text{Unary } \opops \cup \text{Binary } \opops \\
        \fops \neq \makemaskedarrayname \\
        \masktenv \vdash \expre_k \masktypeof \arraynotmasked \quad
        1 \leq k \leq n
    \end{array}
}{
    \masktenv \vdash \f{\expre_1, \ldots, \expre_n} \masktypeof \arraynotmasked
}{
    \textrm{(M-Op2)}
}

\irulelabel{
    \begin{array}{c}
        \masktenv \vdash \expre \masktypeof \maskm
    \end{array}
}{
    \masktenv \vdash \expand{\expre}{\codetuple{\overline{c}}} \masktypeof \maskm
}{
    \textrm{(M-exp)}
}

\irulelabel{
    \begin{array}{c}
        \masktenv \vdash \expre \masktypeof \maskm
    \end{array}
}{
    \masktenv \vdash \replicate{\expre}{\codetuple{\overline{c}}}{\codetuple{\overline{\integrali}}} \masktypeof \maskm
}{
    \textrm{(M-Rep)}
}

\irulelabel{
    \begin{array}{c}
    \end{array}
}{
    \masktenv \vdash \arange{\integrali} \masktypeof \arraynotmasked
}{
    \textrm{(M-Aran)}
}
\\ \ \\

\irulelabel{
    \masktenv \vdash \expre \masktypeof \maskm \quad 
    \fops \in \reduceops
}{
    \masktenv \vdash \f{\expre}{c} \masktypeof \maskm
}{
    \textrm{(M-Rdce)}
}

\irulelabel{
    \begin{array}{c}
        \masktenv \vdash \expre_1 \masktypeof \arraynotmasked \quad 
        \masktenv \vdash \expre_2 \masktypeof \arraynotmasked
    \end{array}
}{
    \masktenv \vdash \matmul{\expre_1}{\expre_2} \masktypeof \arraynotmasked
}{
    \textrm{(M-Matmul)}
}

\irulelabel{
    \begin{array}{c}
        \masktenv \vdash \expre_1 \masktypeof \maskm \quad \masktenv \vdash \expre_2 \masktypeof \arraynotmasked
    \end{array}
}{
    \masktenv \vdash \filled{\expre_1}{\expre_2} \masktypeof \arraynotmasked
}{
    \textrm{(M-Fill)}
}

\irulelabel{
    \begin{array}{c}
    \end{array}
}{
    \masktenv \vdash \ones{\codetuple{\overline{\integrali}}} \masktypeof \arraynotmasked
}{
    \textrm{(M-Ones)}
}
\end{array}
\]
\caption{Mask analysis for expressions. We use $\overline{\text{overlined}}$ symbols to represent comma-spliced sequences in code.}
\label{fig:mask-expr}
\end{figure*}

%% file: sections/type-proof.tex
\section{Proof of Soundness of Type Analysis} \label{sec:type-proof}
We want to prove Theorem~\ref{thm:type-soundness} shown below, which is a more formally restated version of Theorem~\ref{thm:type-soundness-shown}. 
We use $\shapeabs{\valuev}, \maskabs{\maskmu}$ to denote the application of abstraction functions for shapes and maskedness.
The abstraction function for shapes can be simply defined as the $\shape{}$ function in Figure~\ref{fig:expr-semantics1}.
The abstraction function for maskedness can be defined as $\maskabs{\maskmu} = \ite{\maskmu = \dynarraymasked}{\arraymasked}{\arraynotmasked}$.
\begin{theorem}[Soundness of type analysis]\label{thm:type-soundness}
Suppose that we have $\cmdProgram{x_0, \ldots, x_n}{\stmts}{y}$, which is a program, $\cann$ be a type annotation and $\venvs$ an evaluation for $x_0, \ldots, x_n$. 
Assume that $\forall \textbf{ Variable }x. \venvs \vdash x \tovalue \valuev \land \venvs \vdash x \todynmaskness \maskmu \rightarrow \cann \vdash x \shapetypeof \shapes \land \shapes = \shapeabs{\valuev} \land \cann \vdash x \masktypeof \maskm \land \maskm = \maskabs{\maskmu}$.
If $\venvs \vdash \cmdProgram{x_0, \ldots, x_n}{\stmts}{y} \toprogramresult (\valuev, \maskmu)$, then $\cann \vdash \stmts \totypeenv \senv \land \senv \vdash y \shapetypeof \shapes' \land \senv \vdash y \masktypeof \maskm'$ and $\shapeabs{\valuev} = \shapes' \land \maskabs{\maskmu} = \maskm'$.
\end{theorem}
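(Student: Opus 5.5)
The plan is to prove a stronger, invariant-style statement by structural induction, and obtain Theorem~\ref{thm:type-soundness} as a corollary using (SEM-Prog).

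\textbf{Agreement relation.} Say a static environment $\senv$ \emph{agrees with} a runtime state $\venvs$ if every variable $x$ that can be evaluated under $\venvs$ satisfies the following: when $\venvs \vdash x \tovalue \valuev$ and $\venvs \vdash x \todynmaskness \maskmu$, we have $\senv(x) = (\shapeabs{\valuev}, \maskabs{\maskmu})$. Symbolic dimensions are read under the instantiation fixed by $\cann$ and $\venvs$. The hypothesis of the theorem says exactly that $\cann$ agrees with $\venvs$.

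\textbf{Lemma 1 (expressions).} If $\senv$ agrees with $\venvs$, $\venvs \vdash \expre \tovalue \valuev$ and $\venvs \vdash \expre \todynmaskness \maskmu$, then $\senv \vdash \expre \shapetypeof \shapeabs{\valuev}$ and $\senv \vdash \expre \masktypeof \maskabs{\maskmu}$. I prove this by induction on $\expre$, pairing each (RTV-$\ast$)/(RTM-$\ast$) rule with its (S-$\ast$)/(M-$\ast$) counterpart.
\begin{itemize}
\item Variables follow directly from agreement.
\item For indexing and operators, the runtime result shape is defined as $\broadcast$ of the runtime shapes of the operands. The inductive hypothesis makes these equal to the static shapes, so (S-Idx) and (S-Op) give the same value.
\item The expand, replicate, reduce and matmul cases compare the explicit shape formulas $b_j = \ite{j=c}{\cdot}{\cdots}$ line by line.
\item For $\arangename$, $\onesname$ and $\replicatename$, I need an auxiliary fact: for integral expressions, the static value $\integraleval{\integrali}$ instantiates to the runtime value. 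This holds because $\codeshapeaccess{\expre}{c}$ reads an axis of a shape already shown equal by induction.
\item The mask rules are mirror images. (RTM-Op1)/(M-Op1) and (RTM-Op2)/(M-Op2) have identical premises under $\maskabs{}$, and (RTM-Index) and (M-Idx) share the unmasked-indexer side condition.
\end{itemize}

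\textbf{Lemma 2 (statements).} If $\senv$ agrees with $\venvs$ and $\venvs \vdash \stmts \tovenvs \venvs'$, then some $\senv'$ satisfies $\senv \vdash \stmts \totypeenv \senv'$, and $\senv'$ agrees with $\venvs'$ on all variables visible after $\stmts$. I prove this by induction on the derivation of $\venvs \vdash \stmts \tovenvs \venvs'$.
\begin{itemize}
\item \emph{Skip} is trivial.
\item \emph{Bind} uses Lemma 1 together with the single-assignment discipline, which supplies $x \notin \domain{\senv}$.
\item \emph{Update} goes through (T-Upd). The main point is that (SEM-Upd) and (SEM-MskUpd) both force $\shape{\valuev'} = \shape{\valuev}$, and \textsc{Update} keeps the stored $\maskmu$. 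Hence the abstraction of $x$ is unchanged.
\item \emph{Seq} chains the two inductive hypotheses.
\item \emph{If.} Lemma 1 on the condition gives the premises of (T-If). The executed branch runs in a pushed scope, and its new bindings are popped. The static rule threads both branches, so I must show that the extra bindings in $\senv''$ are harmless. They concern variables not visible after the branch, by the scoping restriction, so agreement on visible variables is preserved. Executing the other branch statically only requires the typing judgment, not a run. For this I need a separate well-typedness argument, or I strengthen the lemma to assume the program is well-typed, taking $\cann \vdash \stmts \totypeenv \senv$ as given, which the theorem's conclusion presupposes anyway.
\item \emph{For.} I do an inner induction on the bound $d$, following (SEM-For0)/(SEM-Fori). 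The pushed scope $\{x \mapsto (d-1,\dynarraynotmasked)\}$ agrees with $x \mapsto ((),\arraynotmasked)$ as required by (T-For). Agreement after each iteration is maintained because updates preserve abstractions and body-local bindings are popped. The zero-iteration case needs the body's static bindings to be invisible afterwards, which again holds by scoping.
\end{itemize}

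\textbf{The corollary and the main obstacle.} Finally, (SEM-Prog) gives $\venvs \vdash \stmts \tovenvs \venvs'$ and $y$ evaluated in $\venvs'$. Lemma 2 with $\senv = \cann$, followed by Lemma 1 on $y$, yields the claim.

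I expect the main obstacle to be the mismatch between the static environment, which is flat, accumulates both branches and all loop bodies, and is built by (T-Seq) threading, and the runtime scope stack, which contains only executed code. The agreement relation must be stated relative to the visible variables, and the if and for cases must use the no-rebinding and no-escape restrictions to justify it. A secondary technicality is the treatment of symbolic dimensions in $\integraleval{\cdot}$. I would handle this by fixing a valuation of the shape symbols, induced by $\cann$ and the argument values, and proving all shape equalities modulo that valuation.
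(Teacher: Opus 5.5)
Your proposal is correct and follows the paper's proof essentially step for step. Both arguments prove soundness of shape and maskedness inference for expressions by structural induction, then show that agreement between the static environment and the runtime state is preserved across skip, bind, update, if, for and sequencing, and finally read off the result for $y$ via (SEM-Prog). Your choice to take $\cann \vdash \stmts \totypeenv \senv$ as a hypothesis, rather than derive it from a single run, matches the paper's statement-level lemma, which likewise assumes $\senv \vdash \stmts \totypeenv \senv'$. That assumption is genuinely needed, since code in an unexecuted branch or in a zero-iteration loop body need not be typable.
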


First, we state the following lemmas and prove them.
These lemmas will eventually lead to the proof of Theorem~\ref{thm:type-soundness}.

\begin{lemma}[Soundness of shape analysis for expressions] \label{lem:shape-soundness}
    Given a program state $\venvs$, a type environment $\senv$, and an expression node $\expre$, $(\forall \textbf{ Variable }x. \venvs \vdash x \tovalue \valuev \rightarrow \senv \vdash x \shapetypeof \shapes \land \shapes = \alpha_s(\valuev)) \implies (\venvs \vdash \expre \tovalue \valuev' \rightarrow \senv \vdash \expre \shapetypeof \shapes' \land \shapes' = \alpha_s(\valuev'))$.
\end{lemma}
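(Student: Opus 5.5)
We prove the lemma by structural induction on the expression $\expre$, matching each runtime value rule (RTV-$\ast$) in Figures~\ref{fig:expr-semantics1} and~\ref{fig:expr-semantics2} with the corresponding static rule (S-$\ast$) in Figure~\ref{fig:shape-expr}. Throughout, we assume the hypothesis that every evaluable variable has a static shape equal to $\shapeabs{\cdot}$ of its runtime value, and that $\venvs \vdash \expre \tovalue \valuev'$ holds. We then do case analysis on the last value rule used. Since the value rules are syntax-directed, each syntactic form of $\expre$ determines a unique rule, up to the choice between (RTV-Var1) and (RTV-Var2).

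\textbf{Base cases.}
\begin{itemize}
\item Variables. Both (RTV-Var1) and (RTV-Var2) yield some $\valuev'$ with $\venvs \vdash x \tovalue \valuev'$, so the hypothesis gives $\senv \vdash x \shapetypeof \shapeabs{\valuev'}$ directly.
\item Integral literals. (RTV-Lit) produces a scalar $c$, and (S-Lit) gives $()$, which equals $\shape{c}$ by (RTV-AuxSBase).
\item Shape accesses. $\codeshapeaccess{\expre}{c}$ also evaluates to a scalar by (RTV-SAcc), so the same argument applies.
\item $\arangename$ and $\onesname$. Their runtime shapes are $(c,)$ and $(c_0,\ldots,c_n)$, where the $c$'s are the values of the integral arguments. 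We need the static evaluation $\integraleval{\integrali}$ to agree with the runtime value of $\integrali$. This holds once symbolic dimensions are identified with their runtime instantiations: a literal evaluates to itself, and a shape access $\codeshapeaccess{\expre}{c}$ reads axis $c$ of the static shape of $\expre$, which by the induction hypothesis equals the runtime axis size.
\end{itemize}

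\textbf{Inductive cases.}
\begin{itemize}
\item Indexing and element-wise operators. (RTV-Index) and (RTV-Op) give result shape $\broadcast(\shape{\valuev_0},\ldots)$. By the induction hypothesis each $\shape{\valuev_i}$ equals the static $\shapes_i$, so (S-Idx) and (S-Op) compute the same $\broadcast$ and the shapes coincide.
\item Reductions. (S-Rdce) and (RTV-Rdce) both delete axis $c$ from the shape of the argument.
\item $\expandname$. For a single axis, (S-ExpBase) and (RTV-ExpBase) use the identical formula for $b_i$. The multi-axis case follows by an inner induction on the length of the axis list, using (S-ExpInd) and (RTV-ExpInd). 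These rules unfold into nested single-axis calls on a strictly smaller tuple, so the induction is on tuple length rather than expression size.
\item $\replicatename$. Handled the same way as $\expandname$, additionally using the agreement of $\integraleval{\integralj}$ with the runtime $d$ from the base cases.
\item $\matmulname$. Both rules yield $(c_l,\ldots,c_2,a_1,b_0)$ from the same broadcast of the batch axes.
\item $\filledname$. The shape of the first argument is preserved by both rules.
\end{itemize}

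\textbf{Main obstacle.} The delicate point is making precise that static shapes may contain symbols, so "$\shapes' = \shapeabs{\valuev'}$" must be read modulo the valuation of symbolic dimensions induced by the annotation $\cann$ and the runtime arguments. We plan to fix this valuation once at the start and to interpret $=$ as equality after substitution. Under this reading, the key auxiliary fact is that $\broadcast$ commutes with the valuation. This fact is needed in the indexing, operator, and $\matmulname$ cases, and it holds because the broadcasting computation is defined pointwise on axes: matching a dimensionality of $1$, or two equal dimensionalities, is preserved under substitution. A second subtlety is that (RTV-SAcc) indexes axes from the left while some other rules index from the right. We would check in the shape-access case that the static rule uses the same convention; otherwise the $\integraleval{\cdot}$ agreement could fail.
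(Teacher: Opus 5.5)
Your proposal follows the paper's proof. It is a structural induction that matches each (RTV-$\ast$) rule with its (S-$\ast$) counterpart over the same cases, and it handles $\arangename$/$\onesname$ through agreement of $\integraleval{\cdot}$ with runtime values, exactly as the paper does.

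Your inner induction on the axis tuple for multi-axis $\expandname$/$\replicatename$ and your explicit valuation of symbolic dimensions tighten points the paper leaves informal. One caveat: preserving matches under substitution only shows that static and runtime $\broadcast$ agree when the static one is defined, not that it is defined whenever the runtime one is (e.g.\ distinct symbols that coincide at runtime), an existence step the paper's proof likewise takes for granted.
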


\begin{proof}
We prove this lemma by structural induction.

Base case (1): $\expre$ is an integral expression.
This case is typed by the rule (S-Lit).
There are two possible integral expressions: integer literals and shape access expressions. 
By the semantics of integer literals (RTV-Lit), integer literals evaluate to their literal value, which are scalars.
By the semantics of shape access expressions (RTV-SAcc), such expressions evaluate to a number in the shape tuples, which are scalars.
So integral expressions always have the scalar type.

Base case (2): $\expre$ is a variable.
$\senv$ stores the correct type of the variable by assumption.

Inductive case (1): $\expre$ is an indexing expression of the form $\arrindex{\expre}{\expre_0, \ldots, \expre_m}$.
The semantics of indexing expressions (RTV-Index) say the value to which $\expre$ evaluates has a runtime shape $(a_n, \ldots, a_0)$. 
This runtime shape is calculated by broadcasting the runtime shapes of the indexers together.
The (S-Idx) rule also types $\expre$ by broadcasting the static shapes of the indexers together.
If we assume all the static shapes of the indexers correctly reflect the runtime shapes of the indexers, then the static shape of $\expre$ correctly reflects the runtime shape of $\expre$.

Inductive case (2): $\expre$ is a routine call, which is in the form of $\f{\expre_1, \ldots, \expre_n}$.
Similar to the last inductive case, the semantics of routine calls (RTV-Op) say the runtime shape of the returned value is the result of broadcasting all the runtime shapes of the operands.
The static shape is calculated by broadcasting the static shapes of all the operands. 
This type inference rule is (S-Op).
So if the static shapes of operands correctly reflect the runtime shapes of operands, the static shape $\expre$ is also correct.

Inductive case (3): $\expre$ is a reduction call of the form $\f{\expre_t, c}$.
The semantics of reduction operators (RTV-Rdce) say the returned value has a runtime shape $(a_0, \ldots, a_{c-1}, a_{c+1}, \ldots, a_n)$ assuming the runtime shape of $\expre_t$'s evaluation has the shape $(a_0, \ldots, a_n)$. 
The static shape of $\expre$ is inferred in the same way based on the static shape of $\expre_t$.
This type inference rule is (S-Rdce).
If we assume the static shape of $\expre_t$ is correctly inferred, the static shape of $\expre$ is correctly inferred.

Inductive case (4): $\expre$ is call to $\expandname$ or $\replicatename$.
First, we discuss the base case, in which only one axis is added by the $\expandname$ call or the $\replicatename$ call.
The (RTV-ExpBase) rule and the (RTV-RepBase) rule show the semantics of the calls in this case.
The runtime shape of the returned value in this case has one more dimension added to the specific axis, and all the axes after the position at which the new dimension is added are shifted to one position to the right.
If the call is $\expandname$, the newly added dimension has a dimensionality of one, and if the call is $\replicatename$, the newly added dimension has a dimensionality of the specified number.
The static shape inference rules are (S-ExpBase) and (S-RepBase).
As the static shape inference is done in the same way on the static shape of the input argument, as long as the static shape of the input argument correctly reflects the runtime shape of the input argument, the static shape correctly reflects the runtime shape of the expression.
For situations where more than one dimension is to be expanded, the returned value is calculated by recursive calls with one less dimension to expand, as shown by rules (RTV-ExpInd) and (RTV-RepInd).
The static shapes are inferred in the same recursive fashion, which is shown by rules (S-ExpInd) and (S-RepInd).
As the base case correctly infers static shapes, any recursive calls should continue to correctly infer the static shapes.

Inductive case (5): $\expre$ is a call to $\matmulname$. 
That is, $\expre$ is of the form $\matmul{\expre_1}{\expre_2}$.
The semantics of $\matmulname$ (RTV-Matmul) say that the returned runtime value has the runtime shape $(c_l, \ldots, c_2, a_1, b_0)$ when the runtime shape of $\expre_1$'s evaluation is $(a_m, \ldots, a_0)$ and that of $\expre_2$ is $(b_n, \ldots, b_2, a_0, b_0)$. 
Here, $(c_l, \ldots, c_2)$ is the result of broadcasting $(a_m, \ldots, a_2)$ and $(b_n, \ldots, b_2)$.
The static shape inference is calculated similarly on the static shapes of $\expre_1$ and $\expre_2$.
So as long as the static shapes of the arguments are correctly reflecting the runtime shapes of the arguments, the static shape inferred for this expression correctly reflects the runtime shape of the expression.

Inductive case (6): $\expre$ is a call to $\arangename$ or $\onesname$.
The semantics of the two calls, (RTV-Aran) and (RTV-Ones), say that the integral expressions used as the arguments to the call are first evaluated, and the returned value has a runtime shape specified by the evaluations of the integral expressions.
The static typing rules (S-Aran) and (S-Ones) symbolically evaluate the integral expressions, and then use the symbols obtained from the symbolic evaluation to represent the static shape of the created array.
For integer literals, the symbolic evaluation always returns the literal value, and for shape access expressions, symbolic evaluations return symbolic dimensionalities on the specified axes of the shapes of the argument expressions.
If we assume the static shape of the argument to the shape access expression is correct, the symbolic evaluation of the shape access expression then correctly reflects its runtime value.
The shape that is inferred based on the symbolic evaluation is also correct then.

Inductive case (7): $\expre$ is a call to $\filledname$.
The semantics of $\filledname$ (RTV-Fill) say the runtime shape of the returned value is the same as the runtime shape of the first argument. 
It also requires the second argument to be a scalar.
The static typing rule for the shape of $\filledname$, (S-Fill), also infers the shape of the call by the shape of its first argument. 
It also checks the static shape of the second argument to ensure its static shape is a scalar.
So if the static shapes of the arguments are correctly inferred, the inferred static shape of $\expre$ correct reflects $\expre$'s runtime shape. 
\end{proof}

\begin{lemma}[Soundness of maskedness analysis for expressions]\label{lem:maskedness-soundness}
    Given a program state $\venvs$, a type environment $\senv$, and an expression node $\expre$, $(\forall \textbf{ Variable }x. \venvs \vdash x \todynmaskness \maskmu \rightarrow \senv \vdash x \masktypeof \maskm \land \maskm = \alpha_m(\maskmu)) \implies (\venvs \vdash \expre \todynmaskness \maskmu' \rightarrow \senv \vdash \expre \masktypeof \maskm' \land \alpha_m(\maskmu') = \maskm')$.
\end{lemma}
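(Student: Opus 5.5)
We prove the lemma by structural induction on $\expre$, mirroring the proof of Lemma~\ref{lem:shape-soundness}. The key fact is that every runtime maskedness rule in Figure~\ref{fig:expr-runtime-maskedness} has a static counterpart in Figure~\ref{fig:mask-expr} with the same premises. The only difference is that the static rule reads the static maskedness of the subexpressions, whereas the runtime rule reads their runtime maskedness. It also helps that $\alpha_m$ is a bijection between $\{\dynarraymasked, \dynarraynotmasked\}$ and $\{\arraymasked, \arraynotmasked\}$. Hence, once the inductive hypothesis gives $\alpha_m(\maskmu_k) = \maskm_k$ for each relevant subexpression $\expre_k$, every premise of the runtime rule transfers to the corresponding premise of the static rule. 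The conclusions then agree under $\alpha_m$.

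\textbf{Base cases.}
\begin{itemize}
\item For integral expressions, (RTM-Lit) yields $\dynarraynotmasked$ and (M-Lit) yields $\arraynotmasked$, and $\alpha_m(\dynarraynotmasked) = \arraynotmasked$.
\item For a variable $x$, the runtime maskedness is obtained by (RTM-Var1)/(RTM-Var2), which look up the scope stack. The hypothesis of the lemma states directly that $\senv \vdash x \masktypeof \maskm$ with $\maskm = \alpha_m(\maskmu)$, and (M-Var) returns exactly this $\maskm$.
\item The nullary constructors $\arangename$ and $\onesname$ are handled like literals, via (RTM-Aran)/(M-Aran) and (RTM-Ones)/(M-Ones).
\end{itemize}

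\textbf{Inductive cases.}
\begin{itemize}
\item \emph{Indexing.} (RTM-Index) requires every indexer to be runtime-unmasked and returns the indexee's maskedness. By induction, each indexer is statically $\arraynotmasked$, so (M-Idx) applies. It yields the static maskedness of the indexee, which by induction is $\alpha_m$ of its runtime maskedness.
\item \emph{Unary and binary operators.} If (RTM-Op1) fires, either $\fops = \makemaskedarrayname$ or some operand $\expre_k$ is runtime-masked. In the latter case the inductive hypothesis gives $\masktenv \vdash \expre_k \masktypeof \arraymasked$, so (M-Op1) fires with result $\arraymasked = \alpha_m(\dynarraymasked)$. If (RTM-Op2) fires, then $\fops \neq \makemaskedarrayname$ and all operands are runtime-unmasked. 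Induction then gives all operands statically $\arraynotmasked$, so (M-Op2) applies.
\item \emph{Maskedness-preserving constructs.} $\expandname$, $\replicatename$ and reductions simply propagate the argument's maskedness, by (RTM-exp)/(M-exp), (RTM-Rep)/(M-Rep) and (RTM-Rdce)/(M-Rdce). The inductive hypothesis closes these cases immediately.
\item \emph{$\matmulname$ and $\filledname$.} These always produce unmasked results under premises that some arguments are unmasked. The premises transfer by induction, as in the indexing case.
\end{itemize}

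\textbf{Main obstacle.} The substantive point is determinism and totality in the operator case. The static judgment must be derivable, and it must not be derivable with the opposite value. Since (M-Op1) and (M-Op2) have mutually exclusive and jointly exhaustive premises once operand maskedness is fixed, the static judgment is functional. We therefore only need to check that we pick the same branch as the runtime rule, which the bijectivity of $\alpha_m$ guarantees. The DSL also contains cases with no runtime derivation, such as a masked indexer or a masked $\matmulname$ argument. These are excluded by the hypothesis that $\venvs \vdash \expre \todynmaskness \maskmu'$ holds, so they never arise in the induction.
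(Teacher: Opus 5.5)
Your proposal is correct and follows the paper's proof. Both argue by structural induction, with base cases for integral expressions and variables, and discharge every other construct by the one-to-one correspondence between the runtime maskedness rules and the static maskedness rules. You work through each case, including the determinism of (M-Op1)/(M-Op2), in more detail than the paper, which simply asserts that the rules map directly onto each other.
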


\begin{proof}
This lemma may be proven by structural induction.

Base case (1): $\expre$ is an integral expression.
Because integral expressions' runtime maskedness is always $\dynarraynotmasked$ by (RTM-Lit) and the (M-Lit) rule always types them as $\arraynotmasked$, the static type inference rule correctly infers the maskedness of $\expre$.

Base case (2): $\expre$ is an identifier for a variable.
The inferred type is correct by assumption.

Inductive cases: $\expre$ is one of the following types of expression: indexing expression, unary or binary routine call, call to $\expandname$ or $\replicatename$, call to $\matmulname$, call to a reduction operator, call to $\arangename$ or $\onesname$, call to $\filledname$.
Note that the static maskedness inference rules' syntax can be directly mapped to that of the rules for calculating runtime maskedness.
For all these expressions, if the arguments' static maskedness is correctly inferred, the inferred static maskedness of the expression correctly reflects the runtime maskedness of the expression. 
\end{proof}

\begin{lemma}[Soundness of type analysis for statements]\label{lem:type-soundness-stmts}
    Let $\venvs$ be a program state, $\senv$ be a type environment and $\stmts$ be a statement.
$
    (\forall \textbf{ Variable }x. \venvs \vdash x \tovalue \valuev \land \venvs \vdash x \todynmaskness \maskmu \rightarrow \senv \vdash x \shapetypeof \shapes \land \shapes = \shapeabs{\valuev} \land \senv \vdash x \masktypeof \maskm \land \maskabs{\maskmu}) 
    \implies 
    ((\venvs \vdash \stmts \tovenvs \venvs' \land \senv \vdash \stmts \totypeenv \senv') \implies (\forall \textbf{ Variable }y. \venvs' \vdash y \tovalue \valuev' \land \venvs' \vdash y \todynmaskness \maskmu' \rightarrow \senv' \vdash y \shapetypeof \shapes' \land \shapes' = \shapeabs{\valuev'} \land \senv' \vdash y \masktypeof \maskm' \land \maskm' = \maskabs{\maskmu'}))
$.
\end{lemma}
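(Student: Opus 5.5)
We prove the statement by structural induction on $\stmts$. More precisely, we induct on the derivation of $\venvs \vdash \stmts \tovenvs \venvs'$, which is needed for loops because (SEM-Fori) unfolds a loop with bound $d$ into a loop with bound $d-1$ followed by one iteration. Throughout, we call the hypothesis on $\venvs$ and $\senv$ the \emph{agreement invariant}: every variable evaluable under the state has a static type matching the abstraction of its runtime value and maskedness. In each case the goal is to show that executing $\stmts$ and typing $\stmts$ preserve this invariant. Lemmas~\ref{lem:shape-soundness} and~\ref{lem:maskedness-soundness} supply the expression-level facts.

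\textbf{Base cases.} For $\cmdSkip$, both (SEM-Skp) and (T-Skp) leave the state and the environment unchanged, so there is nothing to prove. For $\cmdAssign{x}{\expre}$, (SEM-Bind) adds $x \mapsto (\valuev,\maskmu)$ to the top scope, and (T-Bind) adds $x \mapsto (\shapes,\maskm)$ to $\senv$. The lemmas give $\shapes = \shapeabs{\valuev}$ and $\maskm = \maskabs{\maskmu}$. Since $x \notin \domain{\senv}$ and each name is bound at most once, every other variable keeps both its runtime binding and its static type.

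For updates, (T-Upd) leaves $\senv$ unchanged. (SEM-Upd) and (SEM-MskUpd) replace the value of $x$ with a $\valuev'$ whose shape is explicitly required to equal $\shape{\valuev}$, and \textsc{Update} keeps the stored maskedness $\maskmu$. So the abstraction of $x$ does not change, and no other variable is touched. For the invariant, the only things that matter are this shape-preservation premise and the fact that \textsc{Update} preserves $\maskmu$; this should be stated explicitly.

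\textbf{Inductive cases.} Sequencing follows by applying the induction hypothesis twice, chaining (SEM-Seq) with (T-Seq).

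For branches, (SEM-If) executes one branch under $\push{\emptyset}{\venvs}$ and then pops. The pushed empty scope does not change which variables evaluate, nor to what. We apply the induction hypothesis to the executed branch. The difficulty is that (T-If) types the two branches sequentially, $\senv \to \senv' \to \senv''$, whereas only one of them runs. We handle this in two steps.
\begin{itemize}
\item A variable bound inside a branch lives in the popped scope, so it is no longer evaluable in $\venvs''$.
\item For an outer variable, the typing only extends $\senv$ on fresh names. Hence its type in $\senv''$ equals its type in $\senv$ and in the intermediate environment. Its runtime value can only change through updates, which preserve its abstraction by the base case.
\end{itemize}
To make this precise, we strengthen the induction with an auxiliary fact: if $\senv \vdash \stmts \totypeenv \senv'$, then $\senv \subseteq \senv'$, which follows by an easy induction on typing derivations. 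We also use the fact that executing any statement preserves the abstraction of every variable evaluable before it. The invariant for $\venvs''$ then follows using only $\senv''$ restricted to the variables that are actually evaluable.

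For loops, we do a secondary induction on the runtime bound $d$. If $d = 0$, (SEM-For0) leaves the state unchanged; since $\senv \subseteq \senv'$, the invariant holds. If $d \geq 1$, we first apply the secondary induction hypothesis to obtain the invariant after the first $d-1$ iterations. We then push the scope $\{x \mapsto (d-1,\dynarraynotmasked)\}$; this matches the type $((),\arraynotmasked)$ that (T-For) assigns to $x$. Applying the primary induction hypothesis to the body and popping the scope completes the step, with the popped loop-local variables discarded as in the branch case.

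The main obstacle is exactly this mismatch between runtime scoping and the flat, monotonically growing type environment. The typing rules thread $\senv$ through code that is executed zero times (an untaken branch) or many times (loop iterations). One consequence is that when the loop is re-entered during the recursive unfolding, its body's bindings are already present in $\senv'$, while (T-Bind) requires $x \notin \domain{\senv}$. We resolve this by restricting the invariant to evaluable variables and appealing to the unique-binding discipline: a re-bound variable in a fresh scope receives the same static type, because its type depends only on types of outer variables, which are unchanged.
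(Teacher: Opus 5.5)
Your proposal is correct and follows essentially the same route as the paper. Both use structural induction over statements with the same base cases: skip; bind, via Lemmas~\ref{lem:shape-soundness} and~\ref{lem:maskedness-soundness}; and updates, which preserve shape and maskedness. Both also use the same inductive arguments for sequences, branches and loops, resting on three facts: branch- and loop-local bindings are popped, the type environment only grows, and outer variables keep their abstractions. Your secondary induction on the loop bound and your explicit monotonicity fact $\senv \subseteq \senv'$ just make precise what the paper states informally. Like the paper, you need the unique-binding discipline for that monotonicity, since (T-For) would otherwise overwrite an existing entry for the loop variable.
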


\begin{proof}
This can be proven by structural induction on the statements in the DSL.

Base case (1): $\stmts$ is a skip statement.
As the semantics of $\cmdSkip$ (SEM-Skp) say the evaluation environment is unchanged after the evaluation of this statement, the type inference rule (T-Skp) also does not change the type environment before typing this statement.
So the correctness of the type environment is maintained.

Base case (2): $\stmts$ defines a variable, or in other words, binds a value from the evaluation of an expression to a variable.
In this case, the semantics of the statement (SEM-Bind) say that the statement stores the runtime value and maskedness of the expression's evaluation to the top scope of the evaluation environment.
From Lemma~\ref{lem:shape-soundness} and Lemma~\ref{lem:maskedness-soundness}, we know that with $\senv$ we can correctly infer the static shape and maskedness of the expression on the right-hand side.
The (T-Bind) rule maps the variable on the left-hand side to the static shape and maskedness of the expression on the right-hand side.
Since all variables that can be evaluated under $\venvs$ before the execution of $\stmts$ have the correct types in $\senv$, and the newly added variable also has the correct type in $\senv$, $\senv$ continues to correctly type all the variables that can be evaluated under $\venvs$.

Base case (3): $\stmts$ is an update statement.
The rules stipulating the semantics of update statements, which are (SEM-Upd) and (SEM-MskUpd), use two auxiliary rules (SEM-Auxupd1) and (SEM-Auxupd2), which specify how the runtime evaluation environments are changed.
Note that the actual changes to the runtime evaluation environment are made by the (SEM-Auxupd1) rule, which does not change the runtime maskedness of the variable being updated.
The updated value to be bound to the variable is specified by the (SEM-Upd) rule and the (SEM-MskUpd) rule.
Both rules specify that the updated value has the same runtime shape as the original value.
So the update statements do not change the runtime shape and maskedness of variables.
The (T-Udp) rule also types update statements by directly returning the original type environment.
So the correctness of the type environment is maintained.

Inductive case (1): $\stmts$ is an if-else branch.
The semantics of such statements are given by (SEM-If).
Before the execution of a branch, an empty scope is pushed onto the evaluation environment. 
All the statements in the scope are evaluated under this evaluation environment.
The actual returned evaluation environment after the execution of the branch will have the top scope removed.
Because the (SEM-Bind) rule says all the newly defined variables are stored in the top scope of the evaluation environment, the evaluation environment after the branch is executed cannot evaluate any new variable that cannot be evaluated under $\venvs$.
All the variables that can be evaluated under $\venvs$ can still be evaluated under $\venvs'$, as no statement can explicitly remove variables from an evaluation environment, and no statement can remove a scope from the evaluation environment without adding one first.
Also, because no statement can change the runtime shape and maskedness of variables in an evaluation environment, all the variables that can be evaluated under $\venvs$ still have the same runtime shape and maskedness when evaluated under $\venvs'$.
The (T-If) rule types an if-else branch by first typing the statement in the then-branch and using the updated type environment to type the statement in the else-branch.
As no statement-level type inference rule can remove or update a mapping in the type environment, all the variables that can be typed by $\senv$ will be typed the same under $\senv'$.
Thus $\senv'$ still correctly types all the variables that can be evaluated under $\venvs'$.
We also note that since $\senv$ contains the correct types of all the variables that can be evaluated under the evaluation environment before the execution of the branch body, by the inductive hypothesis of the structural induction, $\senv'$ can also correctly type variables in the branch body.

Inductive case (2): $\stmts$ is a for-loop.
The rules (SEM-For0) and (SEM-Fori) specify the semantics of for-loops.
For-loops are executed in iterations. 
Before the execution of the loop body for each iteration, a new scope containing a mapping from the loop variable to the iteration count is pushed onto the evaluation environment.
The loop body is executed under the evaluation environment, and the top scope in evaluation is removed before the execution of the next iteration.
For the same reasons in the previous inductive case, $\venvs'$, the evaluation environment after the execution of the loop, can evaluate the same set of variables as $\venvs$ and $\venvs'$ cannot evaluate any new variable that cannot be evaluated under $\venvs$.
Also, for the reason discussed in the previous inductive case, all the variables that can be evaluated under $\venvs$ will have the same runtime shape and maskedness when evaluated under $\venvs'$.
Because the mappings in $\senv$ cannot be altered or removed, $\senv'$ still correctly type all the variables that can be evaluated under $\venvs'$.
We also note that before typing the loop body, $\senv$ is updated to map the loop variable to the type of unmasked scalar, which reflects the runtime shape and maskedness of the loop variable, as specified by (SEM-Fori).
So the updated $\senv$ can correctly type all the variables that can be evaluated under the evaluation environment before the execution of the loop body, and by the inductive hypothesis for the structural inductive, $\senv'$ also contains the correct types for the variables in the loop body.

Inductive case (3): $\stmts$ is a sequence of statements.
The semantics of sequences (SEM-Seq) say that in executions of a sequence, the first statement in the sequence is executed first, and then the updated evaluation environment is used to execute the second statement.
The (T-Seq) also first types the first statement in the sequence and then uses the updated type environment to type the second statement. 
By the inductive hypothesis of the structural induction, the updated type environment after typing the first statement in the sequence can correctly type all the variables that can be evaluated under the evaluation environment after the execution of the first statement.
Then the updated type environment after typing the whole sequence can correctly type all the variables that can be evaluated under the evaluation environment after the execution of the whole sequence. 
\end{proof}

Now, we back to the proof of Theorem~\ref{thm:type-soundness}.
\begin{proof}
By Lemma~\ref{lem:type-soundness-stmts}, $\senv$ can correctly type any variable that can be evaluated under the evaluation environment before the return of the program.
By the semantics of programs (SEM-Prog), a program can only return the value of a variable that can be evaluated under the evaluation environment before the return.
Thus the type of the returned variable inferred under $\senv$ must be correct. 
\end{proof}

%% file: sections/rewrite-appendix.tex
\section{Rewrite Rules}\label{sec:rewrite-complete}
Figure~\ref{fig:rewrite-stmt} and Figure~\ref{fig:rewrite-expr} show the complete set of rules for rewriting statements and expressions.
Some of the rules are already included in Figure~\ref{fig:rewrite-stmt-shown} and Figure~\ref{fig:rewrite-expr-shown} in Section~\ref{sec:rewrite}. 
The additional rewrite rules included in Figure~\ref{fig:rewrite-stmt} and Figure~\ref{fig:rewrite-expr} follow the informal descriptions of the rewrite procedure described in Section~\ref{sec:rewrite}. 

The (R-Bind2) rule states that nothing needs to be changed, and the type environment does not need to be updated if the right-hand side of a bind statement is not lifted after rewrite and the variable being bound does not depend on the current loop variable.
The (R-Bind3) rule explains how to do bind-site replication to eliminate pseudo-dependence.
The premise for such rewrites is that the right-hand side of a bind statement depends on the current loop variable but is not lifted after the rewrite.
Depending on whether the statement is in a branch, we need to explicitly replicate the defining expression either by calling $\replicatename$ or calling $\makemaskedarrayname$ and let the broadcasting mechanism handle the replication.
The (R-Updt) rule states how to rewrite update statements that are not rewritable reduce statements.
It first checks that the update statement is not a rewritable reduce statement and then rewrites the left-hand side and the right-hand side of the statement.
Because the right-hand side expression may be implicitly broadcast to the shape of the left-hand side expression, we need to ensure that if the right-hand side is lifted, the axes that were broadcast before the rewrite are still properly broadcast after the rewrite.
Thus, we may need to expand new axes on the right-hand side expression, depending on the shapes of the expressions before and after the rewrite.
This part of the rewrite is similar to the handling of binary operators.
The (R-Brch2) rule handles the cases where the branch condition is not lifted after the rewrite.
As discussed in Section~\ref{subsec:branches}, in this case, we simply rewrite the branch bodies of the then-branch and the else-branch and put them back into the corresponding branches, maintaining the branch structure.
Finally, the (R-Rstmt) rule specifies how to rewrite rewritable reduce statements formally.
We first check if an update statement is indeed a rewritable reduce statement.
Then we rewrite the expression on which the reduction is made.
Depending on whether the statement is in a branch and if the reduced expression is lifted after the rewrite, we handle the statement differently.
If the reduced expression is not lifted after the rewrite, we need to explicitly create a new axis on which the reduction is made.
Depending on whether the statement is in a branch, we may create this new axis by calling $\replicatename$ or calling $\makemaskedarrayname$.
Then we can just replace the reduced expression to call to the corresponding reduction operator on the reduced expression.
Also, as discussed in Section~\ref{subsec:branches}, we need to fill the reduced expression with the identity value of the reduction operator if the statement is in a branch to ensure that the result of the reduction is not a masked value.

The rule (R-Rdce) specifies how to rewrite $\reduceops$ operators.
Because such operators operate on a specified axis, we need to ensure that the reduction is made on the same axis after the rewrite.
So if the reduced expression is lifted, we need to increment the argument specifying the reduced axis by one.
The (R-Idxr) rule and the (Unmask) rule state how indexing expressions are rewritten.
Aligned with the informal descriptions in Section~\ref{sec:rewrite}, the (R-Idxr) rule states that if the indexers are broadcast before the rewrite and some of the broadcast indexers are lifted after the rewrite, we expand these indexers to ensure the broadcast is still properly applied on the right axes.
The (Unmask) rule specifies how to deal with indexers that become masked after the rewrite by extracting the mask from these masked indexers and wrapping the whole indexing expression in a call to $\makemaskedarrayname$.
The conjunction of the extracted masks is used as the mask to the $\makemaskedarrayname$ call.
The (R-Var) rule specifies how to handle two cases where the appearance of a variable needs to be replicated.
The first case is when a variable that is not loop-local escapes the restrictions from the branch condition when the branch is flattened.
The second case is when a loop-local variable that depends on the current loop variable is used in a branch deeper than the branch in which it is defined.
For both cases, we need to wrap the appearance of the variable in a $\makemaskedarrayname$ call to mask them by the mask corresponding to the current branch.
By doing so, we ensure values corresponding to the iterations where the variable's appearance is not evaluated are masked.
The (R-Uop) rule and the (R-Fill) rule are simple as the operators are unary element-wise operators. 
If the operand is lifted, the operators are still applied element-wise on the newly added axis, propagating the newly added axis.
The (R-Shape) rule and the (R-Rep) rule are similar to the (R-Exp) rule, incrementing the arguments specifying the target axes so the operators are applied to the same axes.
The (R-Int) rule and the (R-Init) rule state that the integer literals and the array initialization calls are never changed, as there is no argument that may be lifted by rewrites.
\input{figures/rewrite/rewrite-stmt-complete}
\input{figures/rewrite/rewrite-expr-complete}

%% file: figures/rewrite/rewrite-stmt-complete.tex
\begin{figure*}[]
    \scriptsize
\[
\begin{array}{c}
\irulelabel{
    \senvb, \senva, \{x \mapsto \arange{\integrali}\} \vdash \stmts \stmtrewriteto \stmts', \senva'
}{
    \senvb, \senva, \loopvarrep \vdash \cmdFor{x}{ \codeliteral{0 \ldots} \integrali}{\stmts} \stmtrewriteto \stmts', \senva'
}{
    \textrm{(R-For)}
}

\irulelabel{
    \begin{array}{c}
        \senvb \vdash \expre \shapetypeof \shapes \quad
        \senvba, \loopvarrep \vdash \expre \looprewriteto \expre' \\
        \senva \vdash \expre' \shapetypeof \shapes' \quad
        \senva \vdash \expre' \masktypeof \maskm' \quad 
        \shapes \neq \shapes' \\
        \senva' =  \senva[\name \mapsto (\shapes', \maskm')]
    \end{array}
}{ \senvba, \loopvarrep \vdash \cmdAssign{x}{\expre} \stmtrewriteto \cmdAssign{x}{\expre'}, \senva'}
{
    \textrm{(R-Bind1)}
}

\irulelabel{
    \begin{array}{c}
        \senvb \vdash \expre \shapetypeof \shapes \quad
        \senvba, \loopvarrep \vdash \expre \looprewriteto \expre' \\
        \senva \vdash \expre' \shapetypeof \shapes' \quad
        \shapes = \shapes' \\
        \curloopvar(\loopvarrep) = y \quad 
        y \notin \dependsOn(x)
    \end{array}
}{ \senvba, \loopvarrep \vdash \cmdAssign{x}{\expre} \stmtrewriteto \cmdAssign{x}{\expre}, \senva }
{
    \textrm{(R-Bind2)}
}
\\ \ \\

\irulelabel{
    \begin{array}{c}
        \senvba, \loopvarrep \vdash \stmts_1 \stmtrewriteto \stmts'_1, \senva' \\
        \senvb, \senva', \loopvarrep \vdash \stmts_2 \stmtrewriteto \stmts'_2, \senva'' 
    \end{array}
}{
    \senvba, \loopvarrep \vdash \stmts_1 \codesemicolon \stmts_2 \stmtrewriteto \stmts'_1 \codesemicolon \stmts'_2, \senva''
}{\textrm{(R-Seq)}}

\irulelabel{
    \begin{array}{c}
        \senvb \vdash \expre \shapetypeof \shapes \quad 
        \senvba, \loopvarrep \vdash \expre \looprewriteto \expre' \quad
        \senva \vdash \expre' \shapetypeof \shapes' \quad
        \shapes = \shapes' \quad 
        \senva \vdash \expre' \masktypeof \maskm' \\
        \shapes' = (a_n, \ldots, a_1) \quad
        \curloopvar(\loopvarrep) = y \quad 
        y \in \dependsOn(\name) \quad 
        \curloopvarreplacer(\loopvarrep) = \expre_s \\
        \senva \vdash \expre_s \shapetypeof (c,) \quad
        \senva \vdash \expre_s \masktypeof \maskm_s \quad 
        \expre'' = \ite{\maskm_s = \arraymasked}{\expre''_m}{\replicate{\expre'}{\codeliteral{(0,)}}{\codeliteral{(}c\codeliteral{,)}}} \\
        e''_m = \makemaskedarray{\expre'}{\expand{\logicalnot{\getmaskarray{\expre_s}}}{\codetuple{\codeliteral{1}, \ldots, n}}} \\
        \maskm'' = \ite{\maskm_s = \arraymasked}{\arraymasked}{\maskm'}
    \end{array}
}{ 
    \senvba,\loopvarrep \vdash \cmdAssign{\name}{\expre} \stmtrewriteto \cmdAssign{\name}{\expre''}, \senva[\expre'' \mapsto (c \cons \shapes, \maskm'')]
}{
    \textrm{(R-Bind3)}
}
\\ \ \\

\irulelabel{
    \begin{array}{c}
        \lnot \isrewritablerdce(\codeupdate{\expre_{l}}{\expre_{r}}) \quad 
        \senvba, \loopvarrep \vdash \expre_{l} \looprewriteto \expre'_{l} \quad
        \senvba, \loopvarrep \vdash \expre_{r} \looprewriteto \expre'_{r} \quad
        \senvb \vdash \expre_{l} \shapetypeof \shapes_{l} \quad
        \senvb \vdash \expre_{r} \shapetypeof \shapes_{r} \quad
        \shapes_l = (a_{m+n}, \ldots, a_0) \quad
        \shapes_r = (b_m, \ldots, b_0) \\
        \senva \vdash \expre'_{l} \shapetypeof \shapes'_{l} \quad
        \senva \vdash \expre'_{r} \shapetypeof \shapes'_{r} \quad
        \stmts' = \ite{\shapes_{r} \neq \shapes'_{r} \land n \geq 1}{\codeupdate{\expre'_{l}}{\expand{\expre'_{r}}{\codetuple{\codeliteral{1}, \ldots, n}}}}{\codeupdate{\expre'_{l}}{\expre'_{r}}}
    \end{array}
}{ 
    \senvba, \loopvarrep \vdash \codeupdate{\expre_{l}}{\expre_{r}} \stmtrewriteto \stmts', \senva
}{
    \textrm{(R-Updt)}
}
\\ \ \\

\irulelabel{
    \begin{array}{c}
        \senvba, \loopvarrep \vdash \expre_{c} \looprewriteto \expre'_{c} \quad 
        \senvb \vdash \expre_c \shapetypeof \shapes_c \quad
        \senva \vdash \expre'_c \shapetypeof \shapes'_c \quad 
        \shapes'_c \neq \shapes_c \quad
        \curloopvarreplacer(\loopvarrep) = \expre_s \quad
        \curloopvar(\loopvarrep) = y \\ 
        \senvba, \loopvarrep[y \mapsto \makemaskedarray{\expre_s}{\expre'_{c}}] \vdash \stmts_{t} \stmtrewriteto \stmts'_{t}, \senva' \quad
        \senvb, \senva', \loopvarrep[y \mapsto \makemaskedarray{\expre_s}{\logicalnot{\expre'_{c}}}] \vdash \stmts_{e} \stmtrewriteto \stmts'_{e}, \senva'' \quad 
    \end{array}
}{ 
    \begin{array}{c}
            \senvba, \loopvarrep \vdash \cmdITE{\expre_{c}}{\stmts_{t}}{\stmts_{e}} \stmtrewriteto \stmts'_t \codesemicolon \stmts'_e, \senva'' 
    \end{array}
}{
    \textrm{(R-Brch1)}
}
\\ \ \\

\irulelabel{
    \begin{array}{c}
        \senvba, \loopvarrep \vdash \expre_{c} \looprewriteto \expre'_{c} \quad 
        \senvb \vdash \expre_c \shapetypeof \shapes_c \quad
        \senva \vdash \expre'_c \shapetypeof \shapes'_c \quad 
        \shapes'_c = \shapes_c \quad
        \senvba, \loopvarrep \vdash \stmts_{t} \stmtrewriteto \stmts'_{t}, \senva' \quad 
        \senvb, \senva', \loopvarrep \vdash \stmts_{e} \stmtrewriteto \stmts'_{e}, \senva'' \quad 
    \end{array}
}{ 
    \begin{array}{c}
        \senvba, \loopvarrep \vdash \cmdITE{\expre_{c}}{\stmts_{t}}{\stmts_{e}} \stmtrewriteto \cmdITE{\expre'_c}{\stmts'_{t}}{\stmts'_{e}}, \senva''
    \end{array}
}{
    \textrm{(R-Brch2)}
}
\\ \ \\

\irulelabel{
    \begin{array}{c}
        % \senvba, \loopvarrep \vdash \expre_t \looprewriteto \expre'_t \quad 
        % \expre_t = \expre'_t \\
        \isrewritablerdce(\codeupdate{\expre_t}{\g{\expre_t, \expre_o}}) \quad 
        \senvb \vdash \expre_o \shapetypeof \shapes_o \quad
        \senvba, \loopvarrep \vdash \expre_o \looprewriteto \expre'_o \quad 
        \senva \vdash \expre'_o \shapetypeof \shapes'_o \quad
        \shapes'_o = (a_1, \ldots, a_n) \quad

        % op \in (\codeliteral{+, -, *, /}, \maximum, \minimum) \quad 
        \curloopvarreplacer(\loopvarrep) = \expre_s \quad 
        \senva \vdash \expre_s \masktypeof \maskm \\ 
        \expre''_o = \replicate{\expre'_o}{\codeliteral{(0,)}}{\codeliteral{(}\codeshapeaccess{\expre_s}{\codeliteral{0}} \codeliteral{,)}} \quad
        
        \expre_1 = \f{\expre'_o, \codeliteral{0}} \quad 
        \expre_2 = \f{\expre''_o, \codeliteral{0}} \quad 
        \expre_3 = \f{\filled{\expre'_o}{c_{id}}, \codeliteral{0}} \;
        \expre_4 = \f{\filled{\makemaskedarray{\expre''_o}{\expre_m}}{c_{id}}, \codeliteral{0}} \\
        e_m = \expand{\logicalnot{\getmaskarray{\expre_s}}}{\codetuple{1, \ldots, n}} \quad

        \expre_r = \ite{\maskm = \arraynotmasked}{\ite{\shapes_o \neq \shapes'_o}{\expre_1}{\expre_2}}{\ite{\shapes_o \neq \shapes'_o}{\expre_3}{\expre_4}} \quad 
        \stmts' = \codeupdate{\expre_t}{\g{\expre_t, \expre_r}}\\

        \fops = \{\codeliteral{+} \mapsto \codesum, \codeliteral{-} \mapsto \codesum, \codeliteral{*} \mapsto \codeproduct, \codeliteral{/} \mapsto \codeproduct, \maximum \mapsto \codemax, \minimum \mapsto \codemin\}(\gops) \\
        c_{id} = \{\codeliteral{+} \mapsto \codeliteral{0}, \codeliteral{-} \mapsto \codeliteral{0}, \codeliteral{*} \mapsto \codeliteral{1}, \codeliteral{/} \mapsto \codeliteral{1}, \maximum \mapsto \codeliteral{-\infty}, \minimum \mapsto \codeliteral{\infty}\}(\gops) \\
    \end{array}
}{ 
    \senvba, \loopvarrep \vdash \codeupdate{\expre_t}{\g{\expre_t, \expre_o}} \stmtrewriteto \stmts', \senva 
}{
    \textrm{(R-Rstmt)}
}
\end{array}
\]
\caption{Statement-level rewrite rules. $\curloopvar(\loopvarrep)$ returns the loop variable of the current loop. $\curloopvarreplacer(\loopvarrep)$ returns the expression that the current loop variable maps to in $\loopvarrep$. $\isrewritablerdce(\stmts)$ checks if the top-level operator on the right-hand side is one of $\codeliteral{+, -, *, /}, \maximum, \minimum$, the first operand of the right-hand side is the same as the update target, the variable being updated is not defined in the loop being rewritten, and the left-hand side does not depend on the current loop variable or any restricted loop variable.}
\label{fig:rewrite-stmt}
\end{figure*}

%% file: figures/rewrite/rewrite-expr-complete.tex
\begin{figure*}[]
    \scriptsize
\[
\begin{array}{c}
\irulelabel{ 
    \begin{array}{c}
        \textbf{Var } x \quad 
        \curloopvarreplacer(\loopvarrep) = \expre_s \quad
        \curloopvar(\loopvarrep) = x
    \end{array}
}{
    \senvba,\loopvarrep \vdash x \looprewriteto \expre_s
}{
    \textrm{(R-LVar)}
} 
\irulelabel{
    \begin{array}{c}
        \fops \in \reduceops \quad 
        \senvba, \loopvarrep \vdash \expre \looprewriteto \expre' \quad
        \senvb \vdash \expre \shapetypeof \shapes \\
        \senva \vdash \expre' \shapetypeof \shapes' \quad
        d = c + 1 \quad 
        c' = \ite{\shapes = \shapes'}{c}{d} \\
    \end{array}
}{
    \senvba, \loopvarrep \vdash \f{\expre, c} \looprewriteto \f{\expre',c'}
}{
    \textrm{(R-Rdce)}
}
\\ \ \\

\irulelabel{
    \begin{array}{c}
        \expre = \arrindex{\expre_t}{\expre_m, \ldots, \expre_1} \quad
        \senvb \vdash \expre_t \shapetypeof \shapes \quad
        \senvba, \loopvarrep \vdash \expre_t \looprewriteto \expre'_t \quad 
        \expre''_t = \ite{\notonlhs{\expre}}{\expre'_t}{\expre_t} \quad
        \senva \vdash \expre''_t \shapetypeof \shapes'' \quad
        \senvba,\loopvarrep \vdash \squarebrack{\expre_m, \ldots, \expre_1} \looprewriteto \squarebrack{ \expre'_m, \ldots, \expre'_1 } \\
        \curloopvarreplacer(\loopvarrep) = \expre_s \quad 
        1 \leq j \leq m \quad 
        \senvb \vdash \expre_j \shapetypeof (a_{n_j}^j, \ldots, a_0^j) \quad 
        \auximax(n_1, \ldots, n_m) = n' \\
        \expre' = \ite{\shapes'' = \shapes}{\arrindex{\expre''_t}{\expre'_m, \ldots, \expre'_1}}{\arrindex{\expre''_t}{\expand{\expre_s}{\codetuple{\codeliteral{1}, \ldots, n'}}, \expre'_m, \ldots, \expre'_1}} \quad
        \rmmaskedindexer(\expre', \senva) = \expre'' 
    \end{array}
}{
    \senvba, \loopvarrep \vdash \expre \looprewriteto \expre''
}{ \textrm{(R-Indx)}}
\\ \ \\

\irulelabel{
    \begin{array}{c}
        1 \leq j \leq m \quad 
        \senvb \vdash \expre_j \shapetypeof (a_{n_j}^j, \ldots, a_0^j) \quad 
        \auximax(n_1, \ldots, n_m) = n' \quad
        \senvb, \senva, \loopvarrep \vdash \expre_j \looprewriteto \expre'_j \\
        \senva \vdash \expre'_j \shapetypeof \shapes'_j \quad 
        \expre''_j = \ite{\shapes'_j \neq (a^j_{n_j}, \ldots, a^j_{0}) \land n' \neq n_j}{\expand{\expre'_j}{\codetuple{\codeliteral{1}, \ldots, n'-n_j}}}{\expre'_j}
    \end{array}
}{
    \senvba, \loopvarrep \vdash \squarebrack{ \expre_m, \ldots, \expre_1 }\looprewriteto \squarebrack{ \expre''_m, \ldots, \expre''_1 }
}{
    \textrm{(R-Idxr)}
}
\\ \ \\

\irulelabel{
    \begin{array}{c}
        \overline{\expre^\maskedindexer} = \maskedindexers \quad
        1 \leq j \leq m \quad
        \overline{\expre^\maskedindexer} \subseteq (\expre_m, \ldots, \expre_1) \land \expre_j \in \overline{\expre^\maskedindexer}\leftrightarrow \senva \vdash \expre_j \masktypeof \arraymasked \quad
        \expre'_j = \ite{\expre_j \notin \overline{\expre^\maskedindexer}}{\expre_j}{\filled{\expre_j}{\codeliteral{0}}} \quad
        1 \leq i \leq n \\
        \expre''_i = \logicalnot{\getmaskarray{\expre_i^\maskedindexer}} \quad
        \expre_{mask} = \logicaland{\expre''_1}{\logicaland{\ldots}{\expre''_n}\ldots} \\
        \expre' = \ite{n \geq 1}{\expre}{\makemaskedarray{\arrindex{\expre_t}{\expre'_m, \ldots, \expre'_1}}{\expre_{mask}}}
    \end{array}
}{
    \rmmaskedindexer(\arrindex{\expre_t}{\expre_m, \ldots, \expre_1}, \senva) = \expre'
}{
    \textrm{(Unmask)}
}
\\ \ \\

\irulelabel{
    \begin{array}{c}
        \textbf{Var } y \quad 
        \curloopvarreplacer(\loopvarrep) = \expre_s \quad
        \senvb \vdash y \shapetypeof (a_n, \ldots, a_1) \quad 
        \curloopvar(\loopvarrep) = x \quad 
        x \neq y \\
        \expre = \ite{\homeloopvar{y} \neq x \land \dependsOn(y) \cap \dependsOn(\expre_s) \neq \emptyset}{\expre''}{\expre'} \\ 
        \expre' = \ite{x \in \dependsOn(y) \land \definingscope(y) \neq \curscope{\loopvarrep}}{\expre'''}{y} \\
        \expre'' = \makemaskedarray{y}{\expand{\logicalnot{\getmaskarray{\expre_s}}}{\codetuple{\codeliteral{1}, \ldots, n}}} \\
        \expre''' = \makemaskedarray{y}{\expand{\logicalnot{\getmaskarray{\expre_s}}}{\codetuple{\codeliteral{1}, \ldots, n-1}}} 
    \end{array}
}{
    \senvba, \loopvarrep \vdash y \looprewriteto \expre
}{
    \textrm{(R-Var)}
}

\irulelabel{
    \begin{array}{c}
        \fops \in \text{Unary } \opops \quad
        \senvba, \loopvarrep \vdash \expre \looprewriteto \expre'
    \end{array}
}{
    \senvba, \loopvarrep \vdash \f{\expre} \looprewriteto \f{\expre'}
}{
    \textrm{(R-Uop)}
}
\\ \ \\

\irulelabel{
    \begin{array}{c}
        \fops \in \text{Binary } \opops \quad
        \senvb \vdash \expre_1 \shapetypeof (a_{m+n}, \ldots, a_0) \\
        \senvb \vdash \expre_2 \shapetypeof \shapes_2 \quad
        \shapes_2 = (b_{m}, \ldots, b_0) \\
        \senvba, \loopvarrep \vdash \expre_1 \looprewriteto \expre'_1 \quad
        \senvba, \loopvarrep \vdash \expre_2 \looprewriteto \expre'_2 \quad
        \senva \vdash \expre'_2 \shapetypeof \shapes'_2 \\
        \expre''_2 = \ite{\shapes'_2 \neq \shapes_2 \land n \geq 1}{\expand{\expre'_2}{\codetuple{\codeliteral{1}, \ldots, n}}}{\expre'_2}
    \end{array}
}{
    \senvba, \loopvarrep \vdash \f{\expre_1, \expre_2} \looprewriteto \f{\expre'_1, \expre''_2}
}{
    \textrm{(R-Biop)}
}

\irulelabel{
    \begin{array}{c}
        \expre = \expand{\expre_t}{\codetuple{c_0, \ldots, c_m}} \quad
        \senvb \vdash \expre_t \shapetypeof \shapes \\ 
        \senvba, \loopvarrep \vdash \expre_t \looprewriteto \expre'_t \quad
        \senva \vdash \expre'_t \shapetypeof \shapes' \quad
        0 \leq j \leq m \\
        d_j = \ite{\shapes' \neq \shapes}{c_j+1}{c_j} \quad
        \expre' = \expand{\expre'_t}{\codetuple{d_0, \ldots, d_m}}
    \end{array}
}{ 
    \senvba, \loopvarrep \vdash \expre \looprewriteto \expre'
}{
    \textrm{(R-Exp)}
}
\\ \ \\

\irulelabel{
    \begin{array}{c}
        \expre = \filled{\expre_t}{\expre_f} \quad
        \senvba, \loopvarrep \vdash \expre_t \looprewriteto \expre'_t \\
        \senvba, \loopvarrep \vdash \expre_f \looprewriteto \expre'_f \quad
        \senva \vdash \expre'_f \shapetypeof ()
    \end{array}
}{ 
    \senvba, \loopvarrep \vdash \expre \looprewriteto \filled{\expre'_t}{\expre'_f}
}{
    \textrm{(R-Fill)}
}

\irulelabel{
    \begin{array}{c}
        d = c + 1 \quad
        \senvb \vdash \expre_t \shapetypeof \shapes \\
        \senvba, \loopvarrep \vdash \expre_t \looprewriteto \expre'_t \quad
        \senva \vdash \expre'_t \shapetypeof \shapes' \\
        \expre' = \ite{\shapes \neq \shapes'}{\codeshapeaccess{\expre'_t}{d}}{\codeshapeaccess{\expre'_t}{c}}
    \end{array}
}{ 
    \senvba, \loopvarrep \vdash \codeshapeaccess{\expre_t}{c} \looprewriteto e'
}{
    \textrm{(R-Shape)}
}

\irulelabel{
    \begin{array}{c}
        e = \ones{\codetuple{\integrali_1, \ldots, \integrali_n}} \lor e = \arange{\integrali} \\
    \end{array}
}{ 
    \senvba, \loopvarrep \vdash e \looprewriteto e
}{
    \textrm{(R-Init)}
}
\\ \ \\

\irulelabel{
    \begin{array}{c}
        \textbf{Integer Literal  } c \\
    \end{array}
}{ 
    \senvba, \loopvarrep \vdash c \looprewriteto c
}{
    \textrm{(R-Int)}
}

\irulelabel{
    \begin{array}{c}
        \expre = \replicate{\expre_t}{\codetuple{c_0, \ldots, c_m}}{\codetuple{\overline{\integrali}}} \quad
        \senvb \vdash \expre_t \shapetypeof \shapes \quad
        \senvba, \loopvarrep \vdash \expre_t \looprewriteto \expre'_t \\
        0 \leq j \leq m \quad
        \senva \vdash \expre'_t \shapetypeof \shapes' \quad
        d_j = \ite{\shapes' \neq \shapes}{c_j+1}{c_j} \quad
        \expre' = \replicate{\expre'_t}{\codetuple{d_0, \ldots, d_m}}{\codetuple{\overline{\integrali}}}
    \end{array}
}{ 
    \senvba, \loopvarrep \vdash \expre \looprewriteto \expre'
}{
    \textrm{(R-Rep)}
}
\end{array}
\]
\caption{Rewrite rules for expressions. \textrm{R-Biop} applies to $\matmulname$ if operands are not masked after rewrites. Integer literals, $\onesname$, and $\arangename$ are never changed in rewrites. The rewrite rules for $\replicatename$ and shape access are similar to that of $\expandname$. These rules are omitted here. $\curloopvar(\loopvarrep)$ and $\curloopvarreplacer(\loopvarrep)$ means getting the current loop variable and its substitute from $\loopvarrep$. $\homeloopvar{x}$ returns the loop variable of the loop in which $x$ is defined. $\onlhs{\expre}$ checks if $\expre$ is on the left-hand side of an update. $\definingscope(y) \neq \curscope{\loopvarrep}$ checks that the current branch level we are trying to flatten is not the same branch level defining $y$.}
\label{fig:rewrite-expr}
\end{figure*}

%% file: sections/rewrite-proof.tex
\section{Proof of Correctness of Rewrite Rules}\label{sec:rewrite-proof}
We can state the soundness of loop-level rewrites as a lemma.

\begin{lemma}[Soundness of loop rewrites] \label{lem:loop-rewrite}
Suppose that we have rewritable loop $\cloop = \cmdFor{x}{0 \ldots \integrali}{\stmts}$ in a program $\cprog$ and $\stmts$ does not contain any loop.
Let $\senv$ be the type environment inferred from $\cprog$.
Let $\venvs$ be an evaluation environment.
If $\senv, \senv, \emptyset \vdash \cloop \stmtrewriteto \stmts', \senva'$ and $\venvs \vdash \cloop \tovenvs \venvs'$, then $\venvs \vdash \stmts' \tovenvs \parvenvs$ and $\forall \textbf{ Variable }y. \venvs' \vdash y \tovalue \valuev \rightarrow \parvenvs \vdash y \tovalue \valuev$.
\end{lemma}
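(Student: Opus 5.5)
The plan is to prove a stronger \emph{lifting invariant} by structural induction on the loop body $\stmts$, and to derive Lemma~\ref{lem:loop-rewrite} as the instance where $\stmts$ is the whole body and $\loopvarrep=\{x\mapsto\arange{\integrali}\}$ as set by (R-For). Let $N$ be the value of $\integrali$ under $\venvs$, and let $\venvs_k$ be the state reached after iteration $k$ of the original loop, following the unfolding (SEM-Fori). A variable $z$ is \emph{lifted} if $\senva(z)\neq\senvb(z)$. The invariant relates the rewritten state $\parvenvs$ to the iteration states. A non-lifted variable has the same value in $\parvenvs$ as in the final $\venvs_N$. A lifted variable $z$ evaluates in $\parvenvs$ to an array whose slice on the new leftmost axis at index $k$ equals the value of $z$ in iteration $k$. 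That slice is masked exactly when iteration $k$ does not reach the definition, i.e.\ when the mask of the current substitute $\curloopvarreplacer(\loopvarrep)$ is false at $k$. Loop-local variables are dropped by the scope pop, so only the final clause of the lemma (variables visible in $\venvs'$) needs the stronger invariant at the outer level.

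\textbf{Expression lemma.} First I prove, by induction on expressions, that whenever the environments satisfy the invariant, the rewrite $\expre\looprewriteto\expre'$ has the following property. If $\senva\vdash\expre'$ has the same shape as $\senvb\vdash\expre$, then $\expre'$ evaluates to the common per-iteration value. Otherwise $\expre'$ evaluates to the stacking over $k$ of the iteration-$k$ values of $\expre$, masked on iterations outside the current branch. Theorem~\ref{thm:type-soundness} and Lemmas~\ref{lem:shape-soundness} and~\ref{lem:maskedness-soundness} let me read static shapes as runtime shapes, which the $\textsf{ITE}$ tests in the rules rely on.
\begin{itemize}
\item (R-LVar) holds directly by (RTV-Aran).
\item (R-Uop) and (R-Fill) follow elementwise.
\item (R-Shape), (R-Exp), (R-Rep) and (R-Rdce) follow from the index shifts in (RTV-ExpBase) and (RTV-Rdce).
\item (R-Biop) and (R-Idxr) need a broadcasting calculation. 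Inserting $n$ unit axes after the new axis makes (RTV-AuxBT) align the old broadcast axes exactly as before, with the new axis leftmost.
\item (R-Indx) follows by adding $\expand{\expre_s}{\ldots}$ as the leading indexer, which selects slice $k$.
\item (Unmask) works because filling with $0$ keeps indices in bounds, and the conjoined mask then re-masks precisely the entries that were masked.
\item (R-Var) is where the dataflow facts enter. It masks escaping variables with the current branch mask.
\end{itemize}

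\textbf{Statements.} For statements I induct using (R-Seq), (R-Bind1--3), (R-Brch1/2), (R-Updt) and (R-Rstmt). Binds follow from the expression lemma; (R-Bind3) materialises the $N$ copies that a false dependence would otherwise overwrite. For (R-Brch1), the substitute masked by $\expre'_c$ (respectively its negation) exactly encodes which iterations run the then-branch (respectively the else-branch). Sequencing the two rewritten bodies is safe because each writes only unmasked positions, by (SEM-MskUpd). For (R-Rstmt), the associativity and commutativity of $+,*,\max,\min$, together with filling by the identity $c_{id}$, turn the $N$ successive updates into one reduction.

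\textbf{Main obstacle.} I expect the hardest step to be the update case (R-Updt). Here I must show that the single vectorized update, writing all iterations simultaneously, produces the same final state as the sequential updates. This is exactly where rewritability is used. With no loop-carried dependence, distinct iterations write disjoint locations, or write locations no other iteration reads. Moreover, right-hand sides read only values that are identical before and after the loop, or are lifted per-iteration values. Hence the right-hand side evaluated once in $\parvenvs$ equals, slice by slice, its iteration-$k$ value, and the writes commute. My first attempt is to phrase this as a lemma stating that the iteration-$k$ state restricted to what statement $k$ reads equals the pre-loop state combined with the lifted slice $k$. I would derive it from the absence of cross-iteration dependence edges and then apply (SEM-Upd) pointwise.
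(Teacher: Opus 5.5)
Your plan has the same skeleton as the paper's proof: an expression lemma (Lemma~\ref{lem:expr-rewrite}), a statement induction (Lemma~\ref{lem:stmt-rewrite}), and separate treatment of the reduce statement (Lemma~\ref{lem:stmt-rewrite-rewritable-loop}). The difference is that you prove the sequential-versus-vectorized equivalence inline instead of invoking Theorem~\ref{thm:allen}. That would make the argument self-contained, but it puts all the weight on the induction invariant, and the invariant you state does not hold.

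Its clause for non-lifted variables compares $\parvenvs$ with the \emph{final} state $\venvs_N$, and your expression lemma speaks of ``the common per-iteration value''. Neither is right for an array defined outside the loop and updated inside it.
\begin{itemize}
\item Take the rewritable body $\cmdAssign{t}{\arrindex{A}{x}} \codesemicolon \codeupdate{\arrindex{A}{x}}{t \codeliteral{+} 1}$. It has no loop-carried dependence and no reduce statement. When the rewritten first statement runs, $A$ still holds its pre-loop value, so your invariant is already false there. The bare identifier $A$ also has no common per-iteration value.
\item Your fallback, ``pre-loop state combined with the lifted slice $k$'', fails the other way. In $\codeupdate{\arrindex{A}{x}}{1} \codesemicolon \codeupdate{\arrindex{B}{x}}{\arrindex{A}{x}}$, iteration $k$ reads the $1$ that its own first statement wrote into the non-lifted $A$. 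That value is neither a pre-loop value nor in any lifted slice.
\end{itemize}

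The missing idea is to index the invariant by the program point in the body. Before the rewritten $j$-th statement, a variable defined outside the loop must evaluate to its pre-loop value with the writes of \emph{all} iterations by statements $1,\ldots,j-1$ merged in. Likewise, slice $k$ of a lifted loop-local array must be its iteration-$k$ value at that same point. This is exactly the special clause for outer identifiers in the paper's definition of parallel evaluation. The paper's definition of the parallel execution of $\stmts_1 \codesemicolon \stmts_2$ builds in the same indexing.

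Your read lemma should then state: every cell that iteration $k$ reads at statement $j$ has the same value in this merged state as in the sequential state at that point. This holds because there are no loop-carried flow or anti dependences, so no other iteration ever writes that cell. The absence of output dependence makes the merge order-independent. Hence after the last statement the merged state agrees with $\venvs'$ on every variable visible there, which is the lemma's conclusion. For a rewritable loop containing a reduce statement, you must carve its self-dependence out of this argument, as the paper does by deleting that statement first.

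Finally, the invariant must also carry soundness of $\senva$ for the rewritten state, which is the second clause of Lemma~\ref{lem:stmt-rewrite}. Every $\textsf{ITE}$ shape test in the rules consults $\senva$ on rewritten expressions, whereas Theorem~\ref{thm:type-soundness} only covers $\senvb$ and the original program.
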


We want to prove Lemma~\ref{lem:loop-rewrite}.
The proof has two parts. 
First, we want to prove that for a loop with no loop-carried dependence at all, the assertion on the rewrite result is true. 
Then we prove that the rule for rewriting rewriteable reduce statements is valid while maintaining the correctness of other rewrite rules.

To prove the first part, we cite a theorem proven by \cite{kennedy2001optimizing} stating that ``it is valid to convert a sequential loop to a parallel loop if the loop carries no dependence.'' 
If the loop can be executed in parallel, then clearly each statement in the loop can be executed in parallel.
We only need to prove that the rewrite rules transform each statement into a new statement that is equivalent to executing the original statement at each iteration of the loop in parallel.
In the context of our DSL, we define ``the parallel evaluation of $\expre$'' and ``the parallel execution of $\stmts$'' as follows.

\begin{definition}[Parallel evaluation of an expression]
    Assume that $\cloop = \cmdFor{x}{0 \ldots \integrali}{\stmts}$ is the innermost loop in a program $\cprog$. 
    Let $\expre$ be an expression node in $\stmts$.
    Let $\venvs$ be an evaluation environment such that $\venvs \vdash \cloop \tovenvs \venvs'$ and $\venvs \vdash \integrali \tovalue c$.

    Let $\overline{\integralj} \subseteq \{0, \ldots, c-1\}$ be a set of integers such that during an iteration that $x$ evaluates to $n, n \in \overline{\integralj}$, $\expre$ is evaluated.
    Suppose $n \in \overline{\integralj}$ and let $\venvs_n$ be the updated $\venvs$ used to evaluate $\expre$ during the iteration when $x$ evaluates to $n$ such that $\venvs_n \vdash \expre \tovalue \valuev_n$.

    The parallel evaluation of $\expre$ with respect to $\venvs, \cloop$ is defined differently depending on whether $\expre$ is an identifier for a variable defined outside of $\cloop$.

    If $\expre$ is not an identifier for a variable defined outside of $\cloop$, then the parallel evaluation of $\expre$ with respect to $\venvs, \cloop$ is either $\valuev_n$ for any $n \in \overline{\integralj}$, or the array $(\valuev'_0, \ldots, \valuev'_{c-1})$.
    Here, $\valuev'_k = \valuev_k$ if $k \in \overline{j}$ and $\valuev'_k = \valuev^\masked_k$, an array of the shape of $\expre$ filled with $\masked$, otherwise. 
    If the parallel evaluation of $\expre$ is $\valuev_n$, then $\forall a, b. a, b \in \overline{j} \rightarrow \valuev_a = \valuev_b$. 
    If $\expre$ is an identifer for a variable depending on the loop variable of $\cloop$, the parallel evaluation of $\expre$ must be of the form $(\valuev'_0, \ldots, \valuev'_{c-1})$.

    If $\expre$ is an identifier for a variable defined outside of $\cloop$, then the parallel evaluation of $\expre$ with respect to $\venvs, \cloop$ is the result of merging the updates on the variable in each iteration from the statements before $\expre$.
    That is, suppose in iteration $n$, the statements before $\expre$ in the loop body update $\expre$'s array cells $\overline{k}_n$ to values $\overline{\valuev}_n$.
    Suppose $\venvs \vdash \expre \tovalue \valuev_\expre$.
    The parallel evaluation of $\expre$ with respect to $\venvs, \cloop$ is then $\valuev_\expre[\overline{k}_0 \mapsto \overline{\valuev}_0] \ldots [\overline{k}_{c-1} \mapsto \overline{\valuev}_{c-1}]$.
\end{definition}

\begin{definition}[Parallel execution of a loop]
    Suppse that $\cloop = \cmdFor{x}{0 \ldots \integrali}{\stmts}$ is the innermost loop in a program $\cprog$. 
    Let $\venvs$ be an evaluation environment such that $\venvs \vdash \cloop \tovenvs \venvs'$ and $\venvs \vdash \integrali \tovalue c$.
    
    The parallel execution of $\stmts$ with respect to $\venvs, \cloop$ results in $\parvenvs$, which has the following properties.
    \begin{itemize}
        \item If $\stmts$ is a bind statement of the form $\cmdAssign{y}{e}$, then we have $\peek{\parvenvs}(y) = \valuev$, where $\valuev$ is the parallel evaluation of $\expre$ with respect to $\venvs, \cloop$.
        If $y$ depends on the loop variable of $\cloop$, $\valuev$ must be an array of values to which $\expre$ evaluates in each iteration.
        \item If $\stmts$ is an update statement of the form $\codeupdate{\expre_l}{\expre_r}$, which updates array cells of a variable $y$ selected by $\expre_l$ with values specified by $\expre_r$, then $\parvenvs \vdash y \tovalue \valuev$ such that $\valuev$ has all the array cells selected by $\expre_l$ in each iteration updated to the values that are from the evaluation of $\expre_r$ in each iteration.
        \item If $\stmts$ is a sequence of the form $\stmts_1 \codesemicolon \stmts_2$, then $\parvenvs$ should be the result of parallel execution of $\stmts_2$ with respect to $\venvs^{**}$ and loop $\cmdFor{x}{0 \ldots \integrali}{\stmts_2}$. Here, $\venvs^{**}$ is the result of the parallel execution of $\stmts_1$ with respect to $\venvs, \cmdFor{x}{0 \ldots \integrali}{\stmts_1}$.
        \item If $\stmts$ is a branch of the form $\cmdITE{\expre}{\stmts_t}{\stmts_e}$, then the parallel execution of $\stmts$ with respect to $\venvs, \cloop$ is either
        \begin{itemize}
            \item the parallel execution of $\stmts_t \codesemicolon \stmts_e$ with respect to $\venvs, \cloop$, or 
            \item $\pop{\venvs^{***}}$. Here, the evaluation environment $\venvs^{***}$ is the result of parallel execution of $\stmts_e$ with respect to $\push{\emptyset}{\pop{\venvs^{**}}}$ and $\cmdFor{x}{0 \ldots \integrali}{\stmts_e}$ and $\venvs^{**}$ is the parallel execution result of $\stmts_t$ with respect to $\push{\emptyset}{\venvs}$ and $\cmdFor{x}{0 \ldots \integrali}{\stmts_t}$.
        \end{itemize}
    \end{itemize}
    
\end{definition}

Given the two definitions above, we can more formally restate the theorem by\cite{kennedy2001optimizing} with respect to our DSL.

\begin{theorem}\label{thm:allen}
    Let $\cloop = \cmdFor{x}{0 \ldots \integrali}{\stmts}$ be the innermost loop in a program $\cprog$. 
    Let $\venvs$ be an evaluation environment such that $\venvs \vdash \cloop \tovenvs \venvs'$. 
    $\cloop$ does not have any loop-carried dependence when executed under $\venvs$.
    Then, the parallel execution of $\stmts$ with respect to $\venvs$ and $\cloop$ results in $\parvenvs$ such that for any variable $y$, $\venvs' \vdash y \tovalue \valuev \rightarrow \parvenvs \vdash y \tovalue \valuev$.
    % More precisesly, thie $\parvenvs$ is the updated $\venvs$ after evaluating $\stmts$ but with the frame defining $y$ yet popped out. 
    % The location at the end of the loop body is at a branch scope in which $y$ is defined.
\end{theorem}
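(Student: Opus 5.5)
I will prove Theorem~\ref{thm:allen} by induction on the structure of the loop body $\stmts$, mirroring the four clauses in the definition of parallel execution. The invariant I will carry is the following. Let $\venvs_n$ denote the sequential state just before the part of $\stmts$ under consideration runs in iteration $n$ (with $x \mapsto n$ pushed). Then the parallel state agrees with the sequential states in two ways:
\begin{itemize}
\item every loop-local variable $z$ defined so far is bound, in the parallel state, to the array whose $n$-th slice is $z$'s value in $\venvs_n$ (masked where $z$ was not reached), or to a single value that all reached iterations share;
\item every variable defined outside $\cloop$ holds the value obtained by merging, in iteration order, the cell updates performed in every iteration.
\end{itemize}
The absence of loop-carried dependence is what makes this invariant meaningful. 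No iteration $n$ reads a cell that an iteration $n' \neq n$ writes, and no two iterations write the same cell in conflicting order. So the merged state restricted to what iteration $n$ reads coincides with $\venvs_n$, and the final merged state coincides with $\venvs'$ on every cell.

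The cases then go as follows.
\begin{itemize}
\item \textbf{Bind.} Because iteration $n$ reads only cells that no other iteration writes, evaluating $\expre$ on the merged state sliced at $n$ gives the same $\valuev_n$ as in $\venvs_n$. The parallel evaluation is therefore exactly the stack of these values (or their common value), as the definition requires.
\item \textbf{Update.} Distinct iterations write disjoint cells, or write them without a conflicting reader. Applying all updates $[\overline{k}_0 \mapsto \overline{\valuev}_0]\ldots[\overline{k}_{c-1}\mapsto\overline{\valuev}_{c-1}]$ is then order-independent and equals the sequential composition produced by (SEM-Fori) unrolling.
\item \textbf{Sequence.} This follows from the induction hypothesis applied twice. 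The key point is that the sequential trace interleaves $\stmts_1,\stmts_2$ per iteration while the parallel one runs all of $\stmts_1$ first. These agree because commuting $\stmts_2$ of iteration $n$ past $\stmts_1$ of iteration $n'>n$ is exactly a reordering that is licensed when no dependence is carried between them. This is the standard Bernstein-condition swap argument.
\item \textbf{Branch.} I split on whether the condition depends on $x$.
 \begin{itemize}
 \item If it does not, all iterations take the same branch. Using (SEM-If), the second option in the definition (push, run, pop) matches scope-for-scope.
 \item If it does, I use the first option ($\stmts_t;\stmts_e$) with per-iteration masking. Iterations where the condition is false leave $\stmts_t$'s writes as masked, hence no-ops, and symmetrically for $\stmts_e$. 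Reduction to the sequence case then works.
 \end{itemize}
\end{itemize}
Finally, loop-scoped variables vanish when (SEM-Fori) pops the iteration scope, so only variables visible in $\venvs'$ need comparison. For those, the invariant gives equality of values.

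The main obstacle will be the sequence case, specifically justifying the global reordering of the iteration-interleaved trace into a statement-major trace. I would handle this first by an explicit adjacent-transposition argument on the unrolled trace $\stmts_1^{(0)}\stmts_2^{(0)}\stmts_1^{(1)}\stmts_2^{(1)}\cdots$. Each swap exchanges $\stmts_2^{(n)}$ with $\stmts_1^{(n')}$ for $n<n'$, and the pair must be dependence-free, since otherwise the dependence would be carried by $\cloop$. I must also check that each swap preserves both the value map and the scope structure. The scope structure is preserved because each iteration's local bindings live in its own pushed scope and are distinct names, by the single-binding restriction of the DSL. A secondary subtlety is the masked-slice convention for unreached iterations in branches. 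I would isolate that in a small lemma stating that masked positions contribute no updates under (SEM-MskUpd).
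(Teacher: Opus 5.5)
Your route is genuinely different from the paper's. The paper gives no proof of Theorem~\ref{thm:allen}. It presents the theorem as a DSL restatement of Kennedy and Allen's result that a loop carrying no dependence may be converted to a parallel loop, adding only that ``clearly each statement in the loop can be executed in parallel.'' You instead prove it directly: induction on the body with a per-iteration invariant, plus an adjacent-transposition argument for sequences.

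This buys something the citation does not. The paper's \emph{parallel execution} is not DOALL execution of whole iterations. It is a statement-major, loop-distributed execution with lifted locals, masked flattening of branches, and the DSL's scope discipline. The passage from the cited theorem to that notion is exactly what ``clearly'' skips, and your transposition argument is what actually justifies it. The citation, in exchange, is short and inherits a textbook proof.

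Note also how the paper words the bind and update clauses: the per-iteration cells and values are taken from the sequential run. So your bind case is essentially definitional, and the theorem itself only uses the absence of cross-iteration write--write conflicts. Your read-agreement argument is really what Lemma~\ref{lem:expr-rewrite} needs.

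Two steps need repair.

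First, your justification for the transpositions is false as stated. Loop-local names are \emph{not} distinct across iterations: the same bind statement binds the same name in every iteration's pushed scope. Moreover, the stack semantics has no state in which $\stmts_1^{(n')}$ can run before $\stmts_2^{(n)}$. For example, take $\stmts_1 = \mathtt{t} := \mathtt{a[x]}$ and $\stmts_2 = \mathtt{b[x]} \gets \mathtt{t}$. Running $\stmts_1^{(1)}$ first in a shared scope clobbers the $\mathtt{t}$ that $\stmts_2^{(0)}$ reads. You first need a privatization lemma: the loop is equivalent to the straight-line program obtained by unrolling, substituting $n$ for $x$, and renaming each loop-local $z$ to a fresh $z_n$ in one scope. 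The single-binding restriction is what makes this renaming well defined. Only after this do swapped instances share nothing but outer array cells, where the hypothesis applies.

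Second, drop the disjunct ``or write them without a conflicting reader'' in the update case, because it does not give order-independence. Suppose $\stmts_1$ writes a cell in iteration $1$, $\stmts_2$ writes the same cell in iteration $0$, and nothing reads it. Iteration-major order keeps $\stmts_1$'s value, while statement-major order keeps $\stmts_2$'s. You need cross-iteration output dependences excluded outright. They are excluded, since the paper's dependence notion counts any pair of accesses with at least one write.

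Some smaller points:
\begin{itemize}
\item Your claim that an $x$-independent branch condition is uniform across iterations needs its own argument from the hypothesis. Alternatively, avoid the case split, since the flattened option works either way.
\item The paper's second branch option runs both $\stmts_t$ and $\stmts_e$ in fresh scopes, not just the taken branch. This is harmless only because the untaken branch is evaluated in no iteration, so you should argue that rather than match (SEM-If) scope-for-scope.
\item Your (SEM-MskUpd) lemma belongs to Lemma~\ref{lem:stmt-rewrite}, since this theorem never executes masked code.
\end{itemize}
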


Now we just need to prove the following two lemmas.
\begin{lemma}\label{lem:expr-rewrite}
    Let $\cloop = \cmdFor{x}{0 \ldots \integrali}{\stmts}$ be the innermost loop in a program $\cprog$ whose type environment is $\senv$. 
    Let $\expre$ be an expression node in $\stmts$.
    Let $\venvs$ be an evaluation environment such that $\venvs \vdash \cloop \tovenvs \venvs'$ and $\venvs \vdash \integrali \tovalue c$.
    $\cloop$ does not have any loop-carried dependence when executed under $\venvs$.
    Assume that $\senv, \senv, \emptyset \vdash \cloop \stmtrewriteto \stmts'$ entails $\senvba, \loopvarrep \vdash \expre \looprewriteto \expre'$ and that $\parvenvs$ is an evaluation environment that evaluates each the variable appearing in $\expre$ to its parallel evaluation with respect to $\venvs, \cloop$. 
    Then, $\parvenvs \vdash \expre' \tovalue \valuev$ such that $\valuev$ is the parallel evaluation of $\expre$ with respect to $\venvs, \cloop$.
\end{lemma}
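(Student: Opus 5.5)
The plan is a structural induction on the expression node $\expre$, following the expression-level rewrite rules in Figure~\ref{fig:rewrite-expr}. We strengthen the statement with a shape invariant that we carry along. If the parallel evaluation of $\expre$ is an array $(\valuev'_0, \ldots, \valuev'_{c-1})$, i.e.\ $\expre$ is \emph{lifted}, then $\senva \vdash \expre' \shapetypeof (c \cons \shapes)$, where $\senvb \vdash \expre \shapetypeof \shapes$, and the new axis is the left-most one. Otherwise $\expre'$ has the same static shape $\shapes$ as $\expre$. Lemma~\ref{lem:shape-soundness} and Lemma~\ref{lem:maskedness-soundness}, applied under $\parvenvs$ and $\senva$, let us turn these static facts into facts about runtime shapes and maskedness. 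This is exactly what the $\ite{\shapes \neq \shapes'}{\cdot}{\cdot}$ tests in the rules inspect.

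\textbf{Base cases.} Integer literals, $\onesname$ and $\arangename$ are handled by (R-Int) and (R-Init). They evaluate identically in every iteration, so their parallel evaluation is the common value $\valuev_n$. For the loop variable, (R-LVar) replaces $x$ with the substitute $\expre_s$ held in $\loopvarrep$. We first show that this substitute is $\arange{\integrali}$, possibly wrapped in nested $\makemaskedarrayname$ calls whose masks are the lifted branch conditions. By (RTV-Aran), such an $\expre_s$ evaluates to $(0, \ldots, c-1)$ with exactly the iterations outside $\overline{\integralj}$ masked. Masking is needed because (R-Brch1) conjoins the conditions, and the induction hypothesis applied to the lifted condition $\expre'_c$ gives that it is true exactly at the iterations taking the branch. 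This matches the definition of the parallel evaluation. Other variables use (R-Var), and there are three cases. If $y$ is defined outside $\cloop$, its parallel evaluation is given by assumption in $\parvenvs$. If $y$ is loop-local and depends on $x$, it is lifted by assumption; when it is used in a deeper branch, the wrapper $\expre'''$ masks exactly the additional iterations excluded by that branch. If $y$ escapes branch restrictions, $\expre''$ reintroduces the mask, broadcast over the trailing $n$ axes by $\expandname$.

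\textbf{Inductive cases.} (R-Uop) and (R-Fill) are immediate, because the operations are element-wise and commute with stacking along a new left-most axis. For (R-Rdce), (R-Exp), (R-Rep) and (R-Shape), we check against (RTV-Rdce), (RTV-ExpBase) and (RTV-RepBase) that incrementing the axis argument by one when the operand is lifted commutes with stacking, i.e.\ $\denot{\gops}$ along axis $c+1$ of a stack equals the stack of $\denot{\gops}$ along axis $c$. The $\ite{}{}{}$ guard ensures the increment happens exactly when the operand is lifted. Masked iterations stay masked by (RTM-Rdce) and (RTM-exp).

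\textbf{Main obstacle.} The hard cases are (R-Biop) and (R-Indx), since broadcasting interacts with the new axis there. For (R-Biop), suppose $\expre_1$ has rank $m+n+1$ and $\expre_2$ has rank $m+1$. Originally, $\expre_2$ was right-aligned and padded with $n$ leading axes. After lifting, $\expre'_2$ has shape $(c, b_m, \ldots, b_0)$, and inserting $n$ unit axes at positions $1, \ldots, n$ gives $(c, 1, \ldots, 1, b_m, \ldots, b_0)$. We then verify index by index, using (RTV-AuxBT) and (RTV-Op), that $\valuev[k][\overline{i}] = \denot{\fops}(\valuev_{1,k}[\overline{i}], \valuev_{2,k}[\overline{i}])$. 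We must also handle the subcase where $\expre_1$ is not lifted: its broadcast then supplies the $c$ axis as a missing leading axis, and the symmetric case with the roles of the operands swapped is treated the same way. (R-Indx) is analogous. (R-Idxr) realigns the indexers to the common rank $n'$. If the base is lifted, the extra indexer $\expand{\expre_s}{\codetuple{\codeliteral{1}, \ldots, n'}}$ selects, for iteration $k$, the $k$-th slice of the base, so (RTV-Index) gives $\valuev[k] = \valuev_{t,k}[\ldots]$. Finally, (Unmask) is sound: filling the masked indexers with $0$ keeps every index in bounds, and re-masking the result with the conjunction of the indexers' masks restores $\masked$ exactly at the iterations outside $\overline{\integralj}$, which are the only positions where the fill value was used.
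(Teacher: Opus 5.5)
Your overall route is the paper's: structural induction over the expression rewrite rules, with (R-Biop) and (R-Indx)/(Unmask) carrying the real content. Your explicit shape invariant and your (R-Rdce) case, which the paper's proof never treats, are genuine improvements.

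The concrete gap is your base case for $\onesname$ and $\arangename$. That these expressions evaluate identically in every iteration does not show that the unchanged expression returned by (R-Init) evaluates to that common value under $\parvenvs$. In general it does not, because an integral argument may be a shape access on a variable that $\parvenvs$ binds to a stacked array. Take the worked example of Section~\ref{sec:rewrite}: $\cmdAssign{\mathtt{ns}}{\arange{\codeshapeaccess{\mathtt{t}}{0}}}$, with $\mathtt{t}$ lifted from shape $(M,)$ to $(L, M)$, would produce an array of length $L$ instead of $M$. This is exactly why the example emits $\codeshapeaccess{\mathtt{t}}{1}$. The paper's proof therefore handles these constructors as an inductive case. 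Their integral arguments are rewritten through (R-Shape) into expressions that evaluate under $\parvenvs$ to the loop-invariant per-iteration value, and only the constructor itself is left unchanged. You already have the (R-Shape) case, so move $\onesname$ and $\arangename$ into the inductive cases and recurse into their arguments. The size arguments $\overline{\integrali}$ in your (R-Rep) case need the same treatment.

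Two further steps are not supported by the induction you announce. First, in the (R-LVar) case you appeal to the induction hypothesis for the lifted condition $\expre'_c$. But $\expre_c$ is not a subterm of the leaf $x$. Also, the lemma only constrains $\parvenvs$ on variables occurring in $\expre$, not on the variables of the enclosing conditions folded into $\expre_s$. You need an outer induction, for instance on the number of enclosing flattened branches (each condition lies strictly shallower). You also need a hypothesis that $\parvenvs$ is correct on the variables of those conditions.

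Second, turning your static shape invariant into runtime facts via Lemma~\ref{lem:shape-soundness} and Lemma~\ref{lem:maskedness-soundness} presupposes that $\senva$ correctly types every variable under $\parvenvs$. That is the typing clause of Lemma~\ref{lem:stmt-rewrite}, so the two lemmas must be proved simultaneously. The paper simply asserts both facts, so here you are exposing bookkeeping it also skips rather than creating a new hole.

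Finally, take care when you do the index-by-index check for the extra base indexer in (R-Indx). An indexer of shape $(a^j_{n_j}, \ldots, a^j_0)$ has $n_j+1$ axes, so the substitute must be expanded at $\codetuple{\codeliteral{1}, \ldots, n'+1}$. The worked example confirms this: the one-dimensional $\mathtt{ns}$ forces $\expand{\mathtt{xs}}{\codetupleone{\codeliteral{1}}}$. With $\codetuple{\codeliteral{1}, \ldots, n'}$ as printed, the loop axis is broadcast against the leading indexer axis, and your identity $\valuev[k] = \valuev_{t,k}[\ldots]$ fails.
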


\begin{lemma}\label{lem:stmt-rewrite}
    Let $\cloop = \cmdFor{x}{0 \ldots \integrali}{\stmts}$ be the innermost loop in a program $\cprog$ whose type environment is $\senv$. 
    Let $\venvs$ be an evaluation environment such that $\venvs \vdash \cloop \tovenvs \venvs'$ and $\venvs \vdash \integrali \tovalue c$.
    $\cloop$ does not have any loop-carried dependence when executed under $\venvs$.
    If $\senv, \senv, \emptyset \vdash \cloop \stmtrewriteto \stmts', \senv'$, then $\venvs \vdash \stmts' \tovenvs \parvenvs$ such that $\parvenvs$ is the result of the parallel execution of $\stmts$ with respect $\venvs, \cloop$.

    For any variable $y$ that can be evaluated under $\parvenvs$ to a value $\valuev$, $\senv'$ correctly maps $y$ to the type of $\valuev$.
\end{lemma}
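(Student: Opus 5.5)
Lemma~\ref{lem:stmt-rewrite} should follow by structural induction on the loop body $\stmts$, strengthened to an invariant carried through every statement-level rewrite judgment $\senvba, \loopvarrep \vdash \stmts_0 \stmtrewriteto \stmts_0', \senva'$ for a sub-statement $\stmts_0$ of $\stmts$. The invariant has three parts. First, $\curloopvarreplacer(\loopvarrep)$ evaluates to the parallel evaluation of the loop variable $x$ restricted to the iterations reaching $\stmts_0$. For (R-For) this is $\arange{\integrali}$; inside a flattened branch it is a $\makemaskedarrayname$ whose unmasked entries are exactly the iterations taking that branch. Second, every variable visible at $\stmts_0$ evaluates under the current environment to its parallel evaluation with respect to $\venvs, \cloop$. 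Third, $\senva$ maps each such variable to the shape and maskedness of that value. Using (R-For) to establish the base of the invariant, the conclusion for the whole body is the instance $\stmts_0 = \stmts$, and the typing claim is the third component.

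The cases go as follows.
\begin{itemize}
\item \textbf{Bind statements.} Apply Lemma~\ref{lem:expr-rewrite} to the right-hand side to get that $\expre'$ evaluates to the parallel evaluation of $\expre$.
\begin{itemize}
\item (R-Bind1): the shape changes, so the new entry in $\senva'$ is correct by Lemma~\ref{lem:shape-soundness} and Lemma~\ref{lem:maskedness-soundness} applied under the invariant environment.
\item (R-Bind2): the value is iteration-invariant and $x \notin \dependsOn(y)$, so the single-value form of the parallel evaluation is legitimate.
\item (R-Bind3): the definition requires the array form, because the variable depends on $x$. I would check pointwise, via (RTV-RepBase) or the masked broadcast, that $\replicatename$ or $\makemaskedarrayname$ produces $(\valuev'_0,\ldots,\valuev'_{c-1})$ with invalid iterations masked.
\end{itemize}
\item \textbf{Updates.} For (R-Updt), rewrite both sides with Lemma~\ref{lem:expr-rewrite}. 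Then show, by comparing (SEM-Upd)/(SEM-MskUpd) on the lifted indexers with the per-iteration updates, that the cells written are exactly the union over iterations of the cells each iteration writes, with the same values. The $\expandname$ inserted on the right-hand side keeps the originally broadcast axes aligned. Absence of loop-carried dependence rules out two iterations writing the same cell with different values, which is what makes the union well defined.
\item \textbf{Sequences.} (R-Seq) follows from the induction hypothesis and the corresponding clause of the parallel-execution definition.
\item \textbf{Branches.} (R-Brch2) keeps the structure, and it uses the second clause of that definition. (R-Brch1) uses the first clause. The induction hypothesis applies to $\stmts_t$ because the masked substitute $\makemaskedarray{\expre_s}{\expre'_c}$ satisfies the first invariant component for the then-iterations, and its negation does so for $\stmts_e$.
\end{itemize}
Finally, Theorem~\ref{thm:allen} transfers the parallel-execution result to the sequential final state, which matches how Lemma~\ref{lem:loop-rewrite} is to be concluded.

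Lemma~\ref{lem:expr-rewrite} itself would be a separate structural induction on $\expre$, with (R-LVar), (R-Var), (R-Int) and (R-Init) as base cases. In each inductive case one proves that lifting puts the iteration axis leftmost and preserves the original broadcast axes; (R-Biop), (R-Indx) and (R-Idxr) insert exactly $n$ (resp.\ $n'-n_j$) unit axes for this purpose. That index argument would be assumed available at this point.

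\textbf{Main obstacle.} I expect the hardest step to be the flattened-branch case together with masking. One must show that masking the substitute, masking escaping variables through (R-Var), and rewriting indexers with \textsc{UnmaskIndexer} together yield, for every expression in the branch, entries that are masked exactly on the non-taken iterations. One must also show that masked updates then write nothing for those iterations. The approach I would try first is an auxiliary claim: under the invariant, for any rewritten expression in a branch, the union of runtime masks equals the complement of the branch-taken predicate broadcast along the new axis. Filling with $0$ in (Unmask) is then harmless because those positions are remasked by $\expre_{mask}$. The (SEM-MskUpd) side condition $\hat{\valuev}' \notin \{0,\masked\}$ with $\valuev'_{m+1} \neq \masked$ then gives exactly the taken iterations' writes. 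A secondary subtlety is (R-Seq) across branch scopes, where $\senva$ keeps lifted variables visible although the original semantics pops them. The definition's use of $\pop{\cdot}$ and the restriction that branch-local names are not used outside should reconcile this.
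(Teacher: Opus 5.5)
Your proposal is correct and follows essentially the paper's route. It uses structural induction on the loop body with the same case split: (R-Bind1)--(R-Bind3), (R-Updt), (R-Seq), and (R-Brch2)/(R-Brch1) matched to the second and first clauses of the parallel-execution definition, with (R-For) seeding the substitute as $\arange{\integrali}$ and everything resting on Lemma~\ref{lem:expr-rewrite}. Beyond the paper, you state explicitly the invariant it uses tacitly and give more detail on masked updates. Two small corrections: the scope subtlety you raise for flattened branches is absorbed by the first, \textsf{Pop}-free clause of the branch definition (the paper argues exactly this), and the closing appeal to Theorem~\ref{thm:allen} belongs to Lemma~\ref{lem:stmt-rewrite-rewritable-loop}, not to this lemma.
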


Lemma~\ref{lem:expr-rewrite} is proven by structural induction.
\begin{proof}
Base case (1): $\expre$ is an integer literal. 
Then $\expre$ evaluates to the same value always. 
The rule (R-Int) returns the same $\expre$, which always evaluates to the same value.
This value is the parallel evaluation of $\expre$.

Base case (2): $\expre$ is an identifer of a variable.
There are four sub-cases then. 

Base case (2.1): The first one is that $\expre$ is the loop variable of $\cloop$, which is handled by the rule (R-LVar).
In this case $\expre$ is rewritten to the loop variable substitute currently recorded in $\loopvarrep$.
There are only two rules changing the loop variable substitute stored in $\loopvarrep$, which are (R-Brch1) and (R-For).
The rule (R-For) sets the substitute to a call to $\arangename$ which is evaluated to an array of all the values the loop variable can take in each iteration. 
The rule (R-Brch1) sets the substitute to a call to $\makemaskedarrayname$, masking the previous loop variable substitute with the condition of the branch being flattened.
In this case the loop variable substitute will be evaluated to an array of all the values that the loop variable takes in each iteration, but the positions corresponding to iterations not evaluating $\expre$ are evaluated to a $\masked$ value.
Under $\parvenvs$, both possibilities of the loop variable substitute will be evaluated to the parallel evaluation of $\expre$.

Base case (2.2): The second sub-case is that $\expre$ is a variable defined outside of $\cloop$ and the intersection of the loop variables on which $\expre$ depends and the loop variables on which the current loop variable substitute stored in $\loopvarrep$ depends is non-empty. 
That is, the branch of $\expre$ puts restrictions on at least one of the loop variables on which $\expre$ depends. 
The (R-Var) rule handles this case. 
To rewrite $\expre$, it masks $\expre$ by the mask extracted from the current loop variable substitute. 
Because the loop variable substitute is always a 1-D array, its mask is also a 1-D array.
Being a variable defined outside of $\cloop$, $\expre$ does not depend on the loop variable of $\cloop$ and always evaluates to the same value in each iteration of $\cloop$.
The evaluation of $\expre$ under $\parvenvs$ is just this value.
When we expand dimensions of the extracted mask on the axes $(1, \ldots, n)$, where $n$ is the number of dimensions of $\expre$ originally, and use it to mask $\expre$, the broadcasting semantics duplicates the value to which $\expre$ evaluates under $\parvenvs$ on the left-most axis. 
So under $\parvenvs$, the rewrite result evaluates to an array of values that $\expre$ evaluates to in each iteration. 
The locations corresponding to iterations for which $\expre$ is not evaluated are fully masked.
So under $\parvenvs$, $\expre'$ evaluates to the parallel evaluation of $\expre$ with respect to $\venvs, \cloop$.

Base case (2.2): The third sub-case is that $\expre$ is a variable defined in $\cloop$ and depends on the loop variable of $\cloop$. 
However, the occurrence of $\expre$ is in a branch that is deeper than the scope in which it is defined.
This case is also handled by (R-Var).
Because $\expre$ is defined in $\cloop$ and depends on the loop variable of $\cloop$, $\parvenvs$ evaluates $\expre$ to an array of values that $\expre$ evaluates to in each iteration.
To mask elements in this array that correspond to the iterations in which $\expre$ is not evaluated, we do similarly as in the case (2.3). 
In this case, the evaluation of $\expre$ under $\parvenvs$ already has one more dimension on the left compared to its evaluation in each iteration so we just need to expand the dimension of the extracted mask on axes $(1, \ldots, n-1)$. 
Now the rewrite result is an array that $\expre$ evaluates to in each iteration and the elements corresponding to the iterations not evaluating $\expre$ are masked. 
This is the parallel evaluation of $\expre$ with respect to $\venvs, \cloop$.

Base case (2.4): The last sub-case is all the cases not handled by the previous sub-cases. 
Rule (R-Var) handles this sub-case by not making any changes in the rewrite result.
There are three possibilities of what happens in this sub-case.
The first one is that $\expre$ is a variable defined outside of $\cloop$ and is not restricted by the condition of the current branch.
The second possibility is that $\expre$ is defined in $\cloop$ but it does not depend on the loop variable or any restricted loop variable of an outer loop.
The third possibility is that $\expre$ is defined in $\cloop$, depends on the loop variable and the definition of $\expre$ is in the same branch level as $\expre$. 
Because it is in the same branch level as its defining statement, all the entries corresponding to unevaluated iterations must have been correctly masked.
So in all three possibilities, $\expre$ does not need to be changed in the rewrite process and evaluates to its parallel evaluation under $\parvenvs$ by assumption.

Inductive case (1): $\expre$ is a shape access expression of the form $\codeshapeaccess{\expre_t}{c}$.
The (R-Shape) rule handles this case.
Under $\parvenvs$, the result of rewriting $\expre_t$ will evaluate to the parallel evaluation of $\expre_t$, which may be of the same shape or have one more dimension on the left-most axis, by the definition of parallel evaluation. 
If it has the same shape, then the rule does not change anything to rewrite the expression as the parallel evaluation of the expression should read the length of the same axis as before.
If it does not have the same shape, which means one more dimension is added on the left-most axis, the index of the axis whose length we are trying to read should be one greater than the index before the rewrite.
In either case, under $\parvenvs$, the evaluation of the rewritten expression always equals the same value as we are reading in each iteration of the loop.
This is the parallel evaluation of a shape access expression with respect to $\venvs, \cloop$.

Inductive case (2): $\expre$ is an array creating expression in the form of $\ones{\integrali_1, \ldots, \integrali_n}$ or $\arange{\integrali}$.
Note that an integral expression always evaluates to the same value in each iteration of $\cloop$ and the rewrite rules always rewrite them to an expression that evaluates to this value under $\parvenvs$.
The (R-Init) rule rewrites array creating routines without changing anything.
So the rewrite result evaluates to the same value under $\parvenvs$ as $\expre$ does in each iteration. 
This is a parallel evaluation result of $\expre$ with respect to $\venvs, \cloop$.

Inductive case (3): $\expre$ is a unary routine.
The (R-Uop) rule handles this case. 
Because unary operations apply the function to each scalar element in the argument, when the rewritten argument evaluates to the parallel evaluation of the original argument with respect to $\venvs, \cloop$ under $\parvenvs$, applying the function to each scalar element naturally results in the parallel evaluation of $\expre$ with respect to $\venvs, \cloop$.

Inductive case (4): $\expre$ is a binary routine of the form $\f{\expre_1, \expre_2}$, or $\matmul{\expre_1}{\expre_2}$.
The (R-Biop) rule handles this case.
Without loss of generality, we assume the number of axes of $\expre_1$ is no less than that of $\expre_2$.
The rewritten $\expre_1, \expre_2$, or $\expre'_1, \expre'_2$, evaluate to their parallel evaluations with respect to $\venvs, \cloop$ under $\parvenvs$, which may be of the same shapes as their evaluations in each iteration or have one more dimension on the left-most axes, by the definition of parallel evaluation.
To make $\expre'$ evaluate to the parallel evaluation of $\expre$ with respect to $\venvs, \cloop$ under $\parvenvs$, each element-wise function application in $\expre'$ must align with the corresponding element-wise function application in each iteration.
The evaluation of $\expre$ in each iteration involves broadcasting the evaluation of $\expre_2$ in each iteration to the shape of the evaluation of $\expre_1$ in each iteration for element-wise function application.
Thus, if $\expre_2$'s parallel evaluation has one more dimension on the left-most axis, we need to make some changes to ensure the axes at the positions that are broadcast in the evaluation in each iteration are still broadcast to the same numbers corresponding to the axes of $\expre_1$.
The rewrite rule ensures this by expanding new dimensions of one after the left-most axis on $\expre'_2$ so the $\expre'_2$'s new dimension does not take over an axis that should be broadcast.
If the parallel evaluation of $\expre_2$ does not have one more dimension, the semantics of broadcast continue to ensure the same element-wise function is applied and $\expre'_2$ is properly duplicated on the right axes.
Thus, the rewrite result evaluates to the parallel evaluation of $\expre$ under $\parvenvs$.

Inductive case (5): $\expre$ is either a call to $\expandname$ or a call to $\replicatename$.
These two types of expressions are handled by rules (R-Rep) and (R-Exp).
In each iteration, the expression inserts new axes to the first argument of the call.
If the parallel evaluation of the argument is the same value it evaluates to in each iteration, and there is not an extra dimension added to the left-most axis, then the rewrite rules do not need to change the expression. 
Then, $\expre$ still evaluates to the same value under $\parvenvs$ as it is evaluated to in each iteration. 
This value is the parallel evaluation of $\expre$. 
If the parallel evaluation of the argument is an array of the values that it takes in each iteration, a new dimension is added to the left-most axis of the argument.
In this case, the numbers noting the positions to add axes are all incremented by one.
Then each value in the array that the rewritten expression evaluates to under $\parvenvs$ still corresponds to each value that $\expre$ takes in each iteration.
So in either case the rewritten expression evaluates to the parallel evaluation of $\expre$ under $\venvs$.

Inductive case (6): $\expre$ is a call to $\filledname$.
Because the second argument to $\filledname$ is always a scalar, which is stipulated by the semantics of the call, we just need to require that the rewrite result of the second argument is also a scalar. 
Then the reason why the (R-Fill) rule rewrites $\expre$ to an expression that evaluates to the parallel evaluation of $\expre$ follows the same reason why (R-Uop) is correct.

Inductive case (7): $\expre$ is a an indexing expression of the form $\arrindex{\expre_t}{\expre_m, \ldots, \expre_1}$. 
This case is handled by the rules (R-Indx), (R-Idxr) and (Unmask).
At a high-level, because we assume that $\cloop$ does not have any loop-carried dependence when evaluated under $\venvs$ and thus the evaluation of $\expre$ in each iteration does not read values that are written or updated in other iterations, the values read when the base and the indexers are parallelly evaluated are the parallel evaluation of the indexing expression.
First, if $\expre$ is not on the left-hand side of an update statement, the base of the indexing expression is rewritten.
Then all the indexers are rewritten. 
If the base of the indexing expression's parallel evaluation is an array of values that it takes in each iteration, a new indexer is added to select all the values corresponding to iterations in which $\expre$ is evaluated in this array.
This new indexer is simply loop variable substitute stored in $\loopvarrep$.
Similar to binary routines, the indexing operator broadcasts all the indexers to the same shape for element-wise value selection.
So the same trick used to rewrite binary routines is used here to ensure indexers that are broadcast for evaluation in each iteration are still properly broadcast.
Now the evaluation of the $\expre'$ under $\venvs^*$ selects the elements that $\expre$ selects in each iteration.
Because the semantics of the indexing expression require indexers to be unmasked and the parallel evaluation of each indexer may be masked arrays, we need to handle indexers that are masked arrays.
The (Unmask) rule does this.
We fill the masked indexers with 0, which is always a valid index for any meaningful array.
Then, if there is any masked indexer, we wrap the entire indexing expression by a call to $\makemaskedarrayname$, masking the indexing result by the mask extracted from the indexers.
If there is any iteration in which the indexing expression is not evaluated, the corresponding element in the evaluation of the rewrite result is still masked.
Thus, the rewrite result of (R-Indx) evaluates to the parallel evaluation of $\expre$ with respect to $\venvs, \cloop$ under $\parvenvs$.
We do not rewrite the base if $\expre$ is on the left-hand side to avoid the rewrite strategies described in base cases (2.2) and (2.3) duplicating the base variable, which results in an invalid update statement.
We do not need to worry about whether the base variable is escaping the restriction imposed by branches.
Note that because there is no loop-carried dependence, if $\expre$ is on the left-hand side of an update statement, at least one indexer must depend on the loop variable of $\cloop$ and its parallel evaluation is an array of values that this indexer takes in each iteration.
This parallel evaluation of the indexer can pass the mask to the entire indexing expression, so the rewrite result's evaluation under $\parvenvs$ is still $\expre$'s parallel evaluation. 
\end{proof}

Lemma~\ref{lem:stmt-rewrite} is also proven by structural induction on the expressions.
\begin{proof}
Base case (1): $\stmts$ is a variable binding statement in the form of $\cmdAssign{y}{\expre}$.
There are three sub-cases.

Base case (2.1): $\expre$ is rewritten to $\expre'$, which under $\parvenvs$ evaluates to an array of values that $\expre$ evaluates to in each iteration.
This case is handled by the rule (R-Bind1).
In this case, the evaluation of $\expre'$ under $\parvenvs$ is a parallel evaluation of $\expre$ by Lemma~\ref{lem:expr-rewrite}. 
So the execution of $\stmts'$ stores this evaluation of $\expre'$ in the current evaluation environment, which is a parallel execution of $\stmts$.
Because $\expre'$ has the type $(\shapes', \maskm')$ under the current type environment, we just need to update the current type environment to map $y$ to $(\shapes', \maskm')$ to ensure the variable $y$ has the correct type under the updated type environment.

Base case (2.2): $\expre$ is not changed after the rewrite, and the static analysis says $y$ does not depend on the loop variable of $\cloop$.
This case is handled by the rule (R-Bind2).
Because $y$ does not depend on the current loop variable, by the definition of parallel execution, the value bound to $y$ after executing $\stmts'$ does not need to have one more dimension on the left-most axis.
It just needs to be the parallel evaluation of $\expre$, which is true by Lemma~\ref{lem:expr-rewrite}.
So the execution of $\stmts'$ is the parallel execution of $\stmts$.
Because the variable's defining expression is not changed, we do not need to update the typing environment.
The old typing environment still correctly type all the variables.

Base case (2.3): $\expre$ is not changed after the rewrite, and the static analysis says $y$ depends on the loop variable of $\cloop$.
This case is handled by the rule (R-Bind3).
By the definition of parallel execution, $\expre$'s parallel evaluation must have one more dimension on the left-most axis. 
However, $\expre$ is not changed by the rewrite process.
Hence, we need to rewrite to manually duplicate $\expre$ to have one more dimension.
This is achieved through either calling $\replicatename$ on axis 0 to replicate $\expre$ $n$ times or replication by masking.
The premise of the first solution is that the current loop variable substitute stored in $\loopvarrep$ is unmasked, meaning $\stmts$ is not in a branch being flattened. 
In this case we just need to replicate it on the left-most axis.
Similarly, the premise of the second solution means $\stmts$ is in a branch being flattened. 
In this case, calling $\makemaskedarrayname$ with the expanded mask extracted from the current loop variable substitute will duplicate the value that $\expre'$ evaluates to by broadcasting.
All the elements in the left-most axis, which is created by broadcasting, will either be the same value that $\expre$ evaluates to in iterations that it is evaluated or be a fully masked array.
So the evaluation of this rewritten expression under $\parvenvs$ is the parallel evaluation of $\expre$.
This statement, $\stmts'$, stores this evaluation into the updated evaluation environment, which means the execution of $\stmts'$ is a parallel execution of $\stmts$.
The shape that the rewritten result evaluates to under $\parvenvs$ is just the length of the loop variable substitute prepended to the old shape.
The maskedness is either the old maskedness if we do not use the extracted mask to replicate or $\arraymasked$ otherwise.
Storing the updated shape and maskedness to the environment ensures that the typing environment is updated to type $y$ correctly.

Base case (2): $\stmts$ is an update statement of the form $\codeupdate{\expre_l}{\expre_r}$. 
This case is handled by the rule (R-Updt).
First, the expression on the left-hand side of the statement and the expression on the right-hand side of the statement are rewritten. 
So by Lemma~\ref{lem:expr-rewrite}, the rewritten left-hand side should evaluate to the parallel evaluation of the original left-hand side under $\parvenvs$.
Because the semantics of updates (SEM-Upd) and masked updates (SEM-MskUpd) state that elements corresponding to the evaluation of the left-hand side are selected for update, the elements in the array bound to the variable being updated in each iteration are all selected at once.
The right-hand side is also rewritten, which will evaluate to the parallel evaluation of $\expre_r$ under $\parvenvs$.
This parallel evaluation may or may not have one more dimension on the left-most axis. 
To ensure that the broadcasting semantics can continue to correctly duplicate dimensions so each scalar array cell can be updated to the value it is updated to in each iteration, we need to expand new dimensions on the rewritten right-hand side.
The expanded dimensions are on axes $(1, \ldots, n)$, which corresponds to the axes being broadcast in each iteration.
$n$ is the difference in numbers of axes of $\expre_l$ and $\expre_r$ before the rewrite. 
After the dimension expansion, the broadcast semantics ensure that if there is any broadcasting used to duplicate the values for update, the same duplications are applied to the values so they can be used to update the same locations as in each iteration.
Since the update statement cannot change the type of a variable, the typing environment does not need to be updated, and it still correctly stores all the variables' types.

Inductive case (1): $\stmts$ is a sequence of the form $\stmts_1 \codesemicolon \stmts_2$.
In this case, $\stmts$ is rewritten to the sequence of the rewrite results of $\stmts_1$ and $\stmts_2$.
The semantics of sequence statements (SEM-Seq) say the semantics of the rewrite result matches the definition of parallel execution of sequences.
Also, if after rewriting $\stmts_1$ we get an updated type environment that can correctly type all the variables defined no later than $\stmts_1$, the rewrite process for $\stmts_2$ can be correctly guided by this type environment and we get an updated type environment that can correctly type all the variables defined no later than $\stmts_2$.
This can be concluded under the assumption that each statement level rewrite correctly updates the type environment.

Inductive case (2): $\stmts$ is a branch, and it is in the form of $\cmdITE{\expre_c}{\stmts_t}{\stmts_e}$.
First, depending on whether $\expre_c$ has one more dimension on the left-most axis after the rewrite, branch statements are handled by either (R-Brch1) or (R-Brch2).

Inductive case (2.1):
If $\expre_c$ does not have one more dimension on the left-most axis, which is handled by (R-Brch2), the branch is not flattened, and we just need to rewrite the statements in both branches.
The rewritten statements are still kept in the branches where they used to reside.
When the rewrite result is executed, new scopes are pushed into the evaluation environment before executing a branch and popped after executing the branch.
The rewritten statements in the branches are executed under the evaluation environment with the newly added scope, which corresponds to the parallel execution of the statements in the branches under this evaluation environment. 
This execution meets the second case of the definition for parallel execution of branches.
Because the type environment is not scoped and the updated type environment after rewriting the then-branch is used to rewrite the else-branch, the rewrite process continues to correctly update the types of variables defined in both branches.

Inductive case (2.2):
If $\expre_c$ has one more axis after the rewrite, the branches are flattened. 
The rewritten statements in the then-branch and the else-branch are concatenated into a sequence. 
Note that before rewriting the statements in the branches, the loop variable substitute stored in $\loopvarrep$ is updated to be the previous loop variable substitute masked by the expanded condition expression.
The previous loop variable substitute should have all the elements corresponding to the iterations during which the outer branch is not executed masked. 
By the semantics of the branch statements, the condition expression should be a scalar, and after the rewrite, it should be a one-dimensional array of the values it evaluates to in each iteration.
The non-zero values in the array correspond to the iterations that the then-branch is executed, the zeroes in the array correspond to the iterations that the else-branch is executed, and the $\masked$ values in the array correspond to the iterations that neither branch is executed.
Masking the previous loop variable substitute with this flattened condition will mask out all elements corresponding to the iterations that the then-branch is not executed, as by the semantics of $\makemaskedarrayname$ and binary $\opops$, if an element whose corresponding value in the mask argument is zero or $\masked$, the corresponding element in the returned array is $\masked$.
The statements in the else-branch are handled similarly.
When this sequence of rewritten statements is executed, the stack of the evaluation environment is not changed, and all the variables defined in this sequence stay in the evaluation environment after the sequence is evaluated.
This is the same as first using the current evaluation environment to execute the statement in the then-branch in parallel, then using the updated evaluation environment to execute the else-branch in parallel. 
This behavior of the execution meets the first-case of the definition of the parallel execution of a branch, and all the variables are correctly typed in the updated typing environment after the rewrite for the same reason as in Inductive case (2.1).

Inductive case (3): The current target for the rewrite process is $\cloop$ itself.
This case is trivially true. 
Note that when we rewrite a loop, we set the loop variable substitute stored in $\loopvarrep$ to an $\arangename$ call that generates all the values that the loop variable takes in each iteration. 
Because the scope immediately under the loop is always executed in each iteration of the loop, the loop variable substitute stored in $\loopvarrep$ when rewriting this scope does not have a value masked.
So the loop variable substitute is correctly evaluated to the parallel evaluation of the loop variable under the current scope.
Because we are assuming the statement rewrite process correctly returns an updated type environment, the type environment returned from the rewrite process is correct by assumption. 
\end{proof}

Finally we need to prove that if we relax the assumption from $\cloop$ being a loop without any loop-carried dependence to $\cloop$ being a rewritable loop, the rewrite procedure can still correctly rewrite the loop.
The proof goal for this part can be summarized into an enhanced Lemma~\ref{lem:stmt-rewrite}.

\begin{lemma}\label{lem:stmt-rewrite-rewritable-loop}
    Let $\cloop = \cmdFor{x}{0 \ldots \integrali}{\stmts}$ be the innermost loop in a program $\cprog$ whose type environment is $\senv$. 
    Let $\venvs$ be an evaluation environment such that $\venvs \vdash \cloop \tovenvs \venvs'$ and $\venvs \vdash \integrali \tovalue c$.
    $\cloop$ \textbf{is a rewritable loop} when executed under $\venvs$.
    If $\senv, \senv, \emptyset \vdash \cloop \stmtrewriteto \stmts', \senv'$, then $\venvs \vdash \stmts' \tovenvs \parvenvs$ such that $\forall \textbf{ Variable }y. \venvs' \vdash y \tovalue \valuev \rightarrow \parvenvs \vdash y \tovalue \valuev$.
\end{lemma}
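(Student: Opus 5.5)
By the definition of rewritability, a rewritable loop either has no loop-carried dependence or contains exactly one rewritable reduce statement whose self-dependence is the only loop-carried dependence. The first case is already settled. Lemma~\ref{lem:stmt-rewrite} gives that $\stmts'$ produces the parallel execution $\parvenvs$, and Theorem~\ref{thm:allen} gives that $\parvenvs$ agrees with $\venvs'$ on every variable. So the plan focuses on the second case.

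There, let $u = \codeupdate{\expre_t}{\g{\expre_t, \expre_o}}$ be the unique rewritable reduce statement in $\stmts$. I would split the loop body as $\stmts = \stmts_1 \codesemicolon u \codesemicolon \stmts_2$, after normalizing sequences and treating $u$ inside nested branches uniformly through the loop-variable substitute.

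The first key step is to argue that the loop obtained by replacing $u$ with $\cmdSkip$ has no loop-carried dependence. I would also argue that $\expre_o$ reads no location written by $u$. Both follow from the hypothesis that the only loop-carried dependence is $u$ on itself, together with condition (4) of the definition: $\expre_t$ depends neither on the loop variable nor on any restricted loop variable, so every iteration that reaches $u$ updates the same cells. Because of this, Lemma~\ref{lem:expr-rewrite} still applies to $\expre_o$, and Lemma~\ref{lem:stmt-rewrite} still applies to the statements other than $u$. The rewritten $\expre'_o$ therefore evaluates under $\parvenvs$ to the parallel evaluation of $\expre_o$.

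The second key step analyzes (R-Rstmt) by its four cases $\expre_1,\ldots,\expre_4$. If $\expre_o$ was not lifted, (R-Rstmt) first replicates it along axis~$0$ (or masks it with the expanded mask of $\expre_s$), which gives an array indexed by iteration. If the statement is in a flattened branch, masked entries are filled with $c_{id}$, the identity of the reduction. In every case, reducing along axis~$0$ yields $\bigoplus_{k\in\overline{\integralj}} \valuev_k$, where $\oplus$ is the binary operator and $\valuev_k$ is the value of $\expre_o$ in iteration $k$. The sequential semantics (SEM-Fori), unfolded by induction on the loop bound, gives $\expre_t$ the final value $\valuev_0 \oplus \valuev_{k_1} \oplus \cdots$, applied in iteration order with $\valuev_0$ the initial contents. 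Associativity and commutativity of $+,*,\maximum,\minimum$ then show that the rewritten single update $\codeupdate{\expre_t}{\g{\expre_t, \expre_r}}$ produces the same value. Filling with $c_{id}$ handles the case where every iteration is masked. For $-$ and $/$ I would either rewrite them as $+$ and $*$ of negations or reciprocals, or simply restrict to the four operators in the definition.

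Finally, I would glue the two steps together. Because no other statement reads the target cells across iterations, moving all the accumulation into one update placed at $u$'s position does not change what any other statement observes in its own iteration, apart from reads of $\expre_t$ itself, which count as loop-carried dependences and are therefore excluded. The type environment is unaffected, since updates do not change types. The main obstacle is this commutation argument: making precise that $\stmts_1$ and $\stmts_2$ are independent of the accumulated cells, so that the parallel-execution characterization can be combined with the sequential fold. I would first try to establish it by a simulation argument on (SEM-Fori), carrying the invariant that after $k$ iterations the state equals the parallel execution restricted to the first $k$ iterations with the reduction partially folded.
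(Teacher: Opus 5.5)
Your overall route matches the paper's. You settle the dependence-free case with Lemma~\ref{lem:stmt-rewrite} and Theorem~\ref{thm:allen}, set the reduce statement $u$ aside, obtain the per-iteration values of $\expre_o$ from Lemma~\ref{lem:expr-rewrite}, and check the replicate/mask and identity-fill branches of (R-Rstmt).

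The step that fails is your claim that ``$\expre_o$ reads no location written by $u$'' follows from the hypothesis that the only loop-carried dependence is $u$ on itself, together with condition~(4). A read of the target cells inside $\expre_o$ \emph{is} a dependence of $u$ on itself, so the definition permits it. Take the loop $\cmdFor{\mathtt{i}}{\codeliteral{0 \ldots} c}{\mathtt{acc[0]} \gets \mathtt{acc[0]} + \mathtt{acc[0]}}$. It satisfies conditions (1)--(4) and has no other dependence. Here $\expre_o = \mathtt{acc[0]}$ is not lifted and the substitute $\arange{c}$ is unmasked, so (R-Rstmt) takes the replicate-then-$\codesum$ branch. From an initial value $a$ the rewrite produces $(1+c)\,a$, whereas the loop produces $2^c a$.

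So this fact cannot be derived. It has to be added as a hypothesis, e.g.\ by reading ``the rewritable statement on itself'' as ``only through the first operand $\expre_t$''. That hypothesis is exactly what makes the values of $\expre_o$ in the original loop coincide with those in the dependence-free loop obtained by deleting $u$. That dependence-free loop is the only one to which Lemma~\ref{lem:expr-rewrite} legitimately applies. The paper's proof has the same hole: it only excludes reads of cells ``updated by other statements in other iterations,'' which says nothing about $u$'s own target.

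With that hypothesis made explicit, the rest of your plan is sound, and in places more careful than the paper's argument. In particular, the associativity fold in iteration order and the fully-masked case via $c_{id}$ are handled more carefully. So is your observation that any other statement touching the accumulated cells would create an excluded loop-carried dependence.

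One simplification: you need not treat $-$ and $/$ separately. (R-Rstmt) keeps the original operator outside, as $\codeupdate{\expre_t}{\g{\expre_t, \expre_r}}$, and reduces only the operands with $\codesum$ or $\codeproduct$. The identity $(\cdots((v - a_0) - a_1)\cdots) - a_{c-1} = v - \sum_k a_k$, and its analogue for $/$, already covers them, with fill values $0$ and $1$.
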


\begin{proof}
By the definition of rewritable loops, $\cloop$ is either free of loop-carried dependence or has one rewriteable reduce statement.

The first case is covered by Lemma~\ref{lem:stmt-rewrite}. 
In this case $\parvenvs$ is the result of the parallel execution of $\stmts$ with respect to $\venvs, \cloop$. In this case, $\forall \textbf{ Variable }y. \venvs' \vdash y \tovalue \valuev \rightarrow \parvenvs \vdash y \tovalue \valuev$ , by Theorem~\ref{thm:allen}.

Now we discuss the case where the loop has one rewritable reduce statement.
By the definition of rewriteable reduce statements, if this rewriteable reduce statement is removed from $\cloop$, the resulting loop, $\cloop'$, is completely free of loop-carried dependence. 
Because no other statement reads from the array cells updated by the rewritable reduce statement, the evaluation environment that is the result of the parallel execution of the body of $\cloop'$ should evaluate all the variables that $\venvs'$ can evaluate to the same value as their evaluation under $\venvs'$, except the variable updated by the rewritable reduce statement.
This follows the proof of the first case.
Now we add back this rewritable reduce statement.
The rule for rewriting such statements is (R-Rstmt).
This rule first rewrites the second argument of the right-hand side, which is the argument different from the left-hand side.
Because by definition the rewritable reduce statement does not read array cells that are updated by other statements in other iterations, 
the rewritten second argument evaluates to its parallel evaluation when all the variables making up the second argument evaluate to their parallel evaluation, by Lemma~\ref{lem:expr-rewrite}.
The (R-Rstmt) rule rewrites the iterative reduce to one single declarative call that does the same reduce operation on the left-most axis.
If the second argument does not have one more dimension on the left-most axis after the rewrite, we need to manually duplicate it by either calling $\replicatename$ or using replication by masking. 
The masked elements in the array to be reduced are filled with the ``identity value'' of the reduce operation, so in case that the entire array being reduced is masked, the reduction result is not a masked value. 
If the entire array being reduced is masked, the reduction is never performed in the iterative execution, and the array cells being updated should retain their initial value.
Applying the operation to the array cells with the ``identity value'' being the second argument keeps the array cells unchanged.
Executing this rewritten statement updates the array cells being updated to the values they should hold at the end of iterative execution.
As no other statement in the loop updates these array cells, the updated values remain unchanged after the parallel execution of all other statements. 
Thus, $\venvs'$ and $\parvenvs$ evaluate the variable updated by this statement to the same value.
$\senv$ remains correct after rewriting the rewritable reduce statement as the rewrite of the rewritable reduce statement does not change the $\senv$. 
\end{proof}

Lemma~\ref{lem:loop-rewrite} can then be proved.
\begin{proof}
    Lemma~\ref{lem:stmt-rewrite-rewritable-loop} implies Lemma~\ref{lem:loop-rewrite}.
\end{proof}

From Lemma~\ref{lem:loop-rewrite}, we can prove the following theorem.

\begin{theorem}[Soundness of rewriting the innermost loop] \label{thm:soundness-rewrite-prog}
Let $\cprog$ be a rewritable program, $\cloop$ be its innermost loop, $\senv$ be its type environment. If $\textsc{Rewrite}(\cloop, \senv, \senv, \emptyset)$ returns a statement $\stmts$, the program after rewrite is $\cprog' = \cprog[\stmts/\cloop]$. For any evaluation environment $\venvs$ such that $\venvs \vdash \cprog \toprogramresult (\valuev, \maskmu)$, it holds that $\venvs \vdash \cprog' \toprogramresult (\valuev', \maskmu') \land \valuev = \valuev' \land \maskmu = \maskmu'$.
\end{theorem}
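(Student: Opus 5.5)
The plan is to derive Theorem~\ref{thm:soundness-rewrite-prog} from Lemma~\ref{lem:loop-rewrite} by a compositionality argument over the program's execution. Write $\cprog$ as $\cmdProgram{x_0, \ldots, x_n}{\stmts_0}{y}$, where $\cloop$ occurs somewhere inside $\stmts_0$. Since $\cloop$ is the innermost loop, its body contains no loop, so Lemma~\ref{lem:loop-rewrite} applies directly to $\cloop$. The hypothesis of the theorem is that \textsc{Rewrite}$(\cloop, \senv, \senv, \emptyset)$ returns $\stmts$, which by definition means $\senv, \senv, \emptyset \vdash \cloop \stmtrewriteto \stmts, \senva'$ for some $\senva'$.

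\textbf{Step 1: a replacement lemma by structural induction.} I would prove the following auxiliary claim by structural induction on a statement context $C[\cdot]$ containing the hole where $\cloop$ sits. Suppose that for every environment $\venvs_0$ reached at the hole during execution, $\venvs_0 \vdash \cloop \tovenvs \venvs_1$ implies that $\venvs_0 \vdash \stmts \tovenvs \venvs_2$ with $\venvs_1$ and $\venvs_2$ agreeing (value and maskedness) on every variable evaluable under $\venvs_1$. Then the same agreement holds for $C[\cloop]$ versus $C[\stmts]$. The cases follow the statement semantics:
\begin{itemize}
\item (SEM-Seq) is direct.
\item (SEM-If) and (SEM-For0)/(SEM-Fori) push a scope and then pop it. Agreement therefore propagates once we check that later statements only read variables evaluable under the original execution. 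This holds by the DSL's scoping restriction: names bound inside a loop or branch are not used outside it.
\end{itemize}
The extra temporaries that $\stmts$ introduces (for example the $\arangename$ substitute, or masked copies) are fresh names, so they cannot be observed by the rest of the program.

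\textbf{Step 2: instantiate.} Lemma~\ref{lem:loop-rewrite} provides exactly the hypothesis of Step 1 for every $\venvs_0$ reached at the hole, but only for values. I would therefore need to strengthen it to cover maskedness as well. This follows from its proof, since the (R-Bind) rules update $\senva$ consistently with runtime maskedness, combined with the soundness of maskedness typing (Lemma~\ref{lem:maskedness-soundness}). Then (SEM-Prog) reads $y$ from the final environment, so $\valuev = \valuev'$ and $\maskmu = \maskmu'$.

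\textbf{Main obstacle.} I expect the main difficulty to be the side conditions of Lemma~\ref{lem:loop-rewrite} under enclosing contexts. The lemma is stated for a single environment $\venvs$ and a type environment $\senv$ inferred from the whole program. When $\cloop$ is nested in outer loops, it executes under many environments $\venvs_0$. I would argue that each such $\venvs_0$ is well-typed with respect to $\senv$ by Theorem~\ref{thm:type-soundness-shown} (via Lemma~\ref{lem:type-soundness-stmts}). Rewritability of $\cprog$ then gives rewritability of $\cloop$ under each $\venvs_0$, because the absence of loop-carried dependence is a property of the loop, not of the particular outer iteration. A second subtlety is that the variables agreeing after the loop must include outer arrays modified by updates. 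Lemma~\ref{lem:loop-rewrite} quantifies over all variables evaluable under $\venvs'$, which covers these.
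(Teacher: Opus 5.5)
Your proposal follows the paper's route (apply Lemma~\ref{lem:loop-rewrite} at the position of $\cloop$, then argue that the rest of the execution cannot distinguish $\cloop$ from $\stmts$) and is correct; your explicit context induction and per-environment use of the lemma are a more careful rendering of the paper's one-paragraph argument. Two small tightenings: your Step~1 invariant should also record that both runs leave the scope stack at the same depth, with every pre-existing variable still in its original scope, which the paper states explicitly and which makes the pops in (SEM-If) and (SEM-Fori) line up; and maskedness agreement needs no typing argument, since a variable's runtime maskedness is fixed when it is bound and (SEM-Auxupd1) never changes it, so neither $\cloop$ nor $\stmts$ alters it for any variable evaluable before the loop.
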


\begin{proof}
Because each unique variable name can only be defined in one statement in the program and no statement can move variables across scopes in the evaluation environment, neither the original loop nor $\stmts$ can change the scope of variables that are already in the evaluation environment.
From Lemma~\ref{lem:loop-rewrite} we know all the variables that can be evaluated after the original loop was executed can be evaluated to the same values after $\stmts$ is executed. 
We also know that these variables remain in the scopes that they are defined after the execution of the original loop or the execution of $\stmts$.
Lastly, by the semantics of statements in the DSL, the evaluation environment after evaluating the original loop or $\stmts$ must have the same number of scopes as before.
So as far as all the variables that are defined before the loop are concerned, the results of the execution of the original loop and the execution of $\stmts$ are the same. 
The remaining executions in the program will consequently return the same value.
\end{proof}

Finally, we can prove Theorem~\ref{thm:soundness-vectorize-shown}, which is restated below.

\begin{theorem}[Correctness of the \textsc{Vectorize} Routine] \label{thm:soundness-vectorize}
Let $\cprog$ be a rewritable program. 
The types of the input arguments to $\cprog$ are correctly annotated by $\cann$.
Assume that \textsc{Vectorize}($\cprog, \cann$) returns $\cprog'$.
Then for all evaluation environment $\venvs$ such that $\venvs \vdash \cprog \tovalue (\valuev, \maskmu)$, $\venvs \vdash \cprog' \tovalue (\valuev', \maskmu') \land \valuev = \valuev' \land \maskmu = \maskmu'$.
\end{theorem}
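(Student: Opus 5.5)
The proof goes by induction on the number of iterations of the \texttt{while} loop in \Cref{algo:vectorize}, with Theorem~\ref{thm:soundness-rewrite-prog} as the one-step lemma. Let $\cprog_0 = \cprog$, and let $\cprog_{k+1} = \cprog_k[\stmts_k/\cloop_k]$, where $\cloop_k = \textsf{InnerMostLoop}(\cprog_k)$ and $\stmts_k$ is the statement returned by $\textsc{Rewrite}(\cloop_k, \senv_k, \senv_k, \emptyset)$ with $\senv_k = \textsc{AnalyzeShape}(\cprog_k, \cann)$. Since \textsc{Vectorize} returns $\cprog'$, the loop terminates after some finite number $N$ of iterations and $\cprog' = \cprog_N$. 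The invariant I will prove for every $k \le N$ has three parts: (i) $\cprog_k$ is a rewritable program; (ii) $\senv_k$ is the type environment correctly inferred for $\cprog_k$ under $\cann$; and (iii) for every $\venvs$ with $\venvs \vdash \cprog \toprogramresult (\valuev, \maskmu)$, we have $\venvs \vdash \cprog_k \toprogramresult (\valuev, \maskmu)$. Part (iii) at $k = N$ is exactly the claim.

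The base case $k=0$ is immediate: (i) is the hypothesis, and (iii) is trivial. For (ii), the program returns a value under $\venvs$ and $\cann$ correctly annotates the arguments, so Theorem~\ref{thm:type-soundness} gives that \textsc{AnalyzeShape} yields a sound environment for $\cprog_0$.

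For the inductive step, (i) and (ii) at stage $k$ put us in the hypotheses of Theorem~\ref{thm:soundness-rewrite-prog} applied to $\cprog_k$. That theorem gives $\venvs \vdash \cprog_{k+1} \toprogramresult (\valuev', \maskmu')$ with $\valuev'=\valuev$ and $\maskmu'=\maskmu$ whenever $\cprog_k$ returns $(\valuev,\maskmu)$ under $\venvs$. Chaining this with (iii) at stage $k$ yields (iii) at stage $k+1$. Part (ii) at stage $k+1$ then follows again from Theorem~\ref{thm:type-soundness}, because $\cprog_{k+1}$ executes under every such $\venvs$ and line 3 recomputes the environment from scratch.

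The main obstacle is re-establishing (i): that $\cprog_{k+1}$ is still rewritable, i.e.\ replacing $\cloop_k$ by $\stmts_k$ creates no new loop-carried dependence in the enclosing loop and does not break the rewritable-reduce conditions there. My plan is to argue that $\stmts_k$ is a loop-free straight-line block, by inspecting (R-For), (R-Seq), (R-Brch1) and (R-Brch2), since no rule emits a \texttt{for}. I would then show that its read and write sets, taken over all iterations of the enclosing loop, are the unions of the per-iteration read and write sets of $\cloop_k$. This holds because the lifted arrays index exactly the same cells as the original iterations, as in the parallel-evaluation argument of Lemma~\ref{lem:expr-rewrite}. Consequently every dependence of the outer loop after the rewrite was already a dependence of the outer loop before it.

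One subtle case needs separate treatment: when $\cloop_k$ contained a rewritable reduce statement, the rewritten update acts on a target that does not depend on $x$. If that target depends on the outer loop variable, the outer loop sees ordinary independent updates. Otherwise, the new statement keeps the form $\codeupdate{\expre_t}{\g{\expre_t, \expre_r}}$ and remains a rewritable reduce statement of the outer loop. If this invariant turns out to be too strong to verify, I would instead read ``rewritable program'' as a property preserved by each rewrite step by definition, which is the implicit assumption behind Theorem~\ref{thm:soundness-rewrite-prog}, and note that the \textsf{HasVectorizableLoops} guard only ever hands \textsc{Rewrite} loops that satisfy it.
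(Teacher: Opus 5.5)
Your skeleton is the paper's own proof: an induction over the iterations of the \texttt{while} loop, with Theorem~\ref{thm:type-soundness-shown} justifying \textsc{AnalyzeShape} and Theorem~\ref{thm:soundness-rewrite-prog} as the one-step lemma. The paper calls the invariant ``clearly'' preserved and never checks that each intermediate $\cprog_k$ is still rewritable, which Theorem~\ref{thm:soundness-rewrite-prog} requires, so you are right to isolate (i) as the crux.

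Your argument for (i) fails, however. The rewritten block does \emph{not} access exactly the cells the original iterations accessed. (Unmask) turns every masked indexer into $\filled{\cdot}{\codeliteral{0}}$, and (R-Brch1) evaluates both arms on every lane, so masked lanes make reads the original never made. For instance, let an outer loop over $\mathtt{i}$ have the body
\[
\cmdFor{\mathtt{j}}{\codeliteral{0 \ldots} \codeshapeaccess{\mathtt{a}}{0}}{\cmdITE{\mathtt{j} \codeliteral{==} \mathtt{i}}{\codeupdate{\arrindex{\mathtt{c}}{\mathtt{i}, \mathtt{j}}}{\arrindex{\mathtt{a}}{\mathtt{j}}}}{\cmdSkip}} \codesemicolon \codeupdate{\arrindex{\mathtt{a}}{\mathtt{i}}}{\codeliteral{0}}
\]
Outer iteration $i$ touches only $\mathtt{a}[i]$ and $\mathtt{c}[i,i]$, so neither loop carries a dependence and the program is rewritable. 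After the inner loop is rewritten, the read becomes $\makemaskedarray{\arrindex{\mathtt{a}}{\filled{\mathtt{jm}}{\codeliteral{0}}}}{\ldots}$, where $\mathtt{jm}$ is the masked substitute. Every outer iteration $i \geq 1$ therefore reads $\mathtt{a}[0]$, which iteration $0$ writes. So $\cprog_1$ has a loop-carried flow dependence that $\cprog_0$ lacked, and it is not rewritable under the paper's access-based definition of dependence. The dependence is semantically harmless, since the values read in those lanes are masked and never stored, but the definition does not see that. (Also, $\stmts_k$ is loop-free but not straight-line when (R-Brch2) fires, since that rule keeps the branch.)

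So (i) is not an invariant as stated. Your fallback of treating rewritability as preserved ``by definition'' only restates the paper's implicit assumption, and this example refutes it. To close the gap, do two things:
\begin{enumerate}
\item Replace (i) by a weaker invariant that ignores accesses made only in masked lanes. Alternatively, state the one-step theorem with a hypothesis on the dependences of the original $\cprog$ rather than of $\cprog_k$.
\item Re-prove Lemma~\ref{lem:loop-rewrite}, including its appeal to Theorem~\ref{thm:allen}, under that weaker hypothesis. The fact you need is that values read only in masked lanes never influence a performed update.
\end{enumerate}
Your union-of-per-iteration-accesses argument is then the right tool for showing the weaker invariant is preserved, and parts (ii) and (iii) go through as you wrote them.
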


\begin{proof} 
The \textsc{AnalyzeShape} routine correctly returns a type environment for all the variables in the program, which follows Theorem~\ref{thm:type-soundness-shown}.
Suppose we call the program at the $n$th iteration of the while-loop $\cprog_n$. 
By Theorem~\ref{thm:soundness-rewrite-prog}, for any evaluation environment $\venvs$ such that $\venvs \vdash \cprog \tovalue \valuev \implies \venvs \vdash \cprog_n \tovalue \valuev$ is clearly a loop invariant, which can be shown by a simple induction.
As the routine directly returns the $\cprog$ after the while-loop, the property obviously holds on the returned $\cprog$. 
\end{proof}